\documentclass{article}

\PassOptionsToPackage{numbers,sort&compress}{natbib}
\usepackage[preprint]{neurips_2026}

\usepackage[utf8]{inputenc}
\usepackage[T1]{fontenc}
\usepackage{hyperref}
\usepackage{url}
\usepackage{booktabs}
\usepackage{amsfonts}
\usepackage{nicefrac}
\usepackage{microtype}
\usepackage{xcolor}

\usepackage{amsmath}
\usepackage{amssymb}
\usepackage{mathtools}
\usepackage{amsthm}
\usepackage{graphicx}
\usepackage{subcaption}
\usepackage{enumitem}
\usepackage[capitalize,noabbrev]{cleveref}
\usepackage{algorithm}
\usepackage{algorithmic}
\usepackage[most]{tcolorbox}
\usepackage{listings}

\lstdefinestyle{mosaicpy}{%
  basicstyle=\ttfamily\scriptsize, language=Python,
  keywordstyle=\color{black!75}\bfseries, commentstyle=\color{black!55}\itshape,
  stringstyle=\color{black!70}, showstringspaces=false,
  breaklines=true, breakatwhitespace=true,
  columns=flexible, keepspaces=true, xleftmargin=0.5em,
  aboveskip=2pt, belowskip=2pt
}

\definecolor{mosaicCreative}{HTML}{6E3FA0}
\definecolor{mosaicCode}{HTML}{1F5FA8}
\definecolor{mosaicReasoning}{HTML}{0F7A6B}
\definecolor{mosaicAdvisory}{HTML}{B25E1C}

\newtcolorbox{mosaicbox}[2][]{%
  sharp corners, boxrule=0.5pt,
  colback=#2!4, colframe=#2!75!black,
  fonttitle=\bfseries\small,
  coltitle=white, colbacktitle=#2!85!black,
  top=1.5mm, bottom=1.5mm, left=2mm, right=2mm,
  before skip=4pt, after skip=6pt,
  fontupper=\small, #1
}

\theoremstyle{plain}
\newtheorem{theorem}{Theorem}[section]
\newtheorem{proposition}[theorem]{Proposition}
\newtheorem{lemma}[theorem]{Lemma}

\theoremstyle{definition}

\newtheorem{assumption}[theorem]{Assumption}
\theoremstyle{remark}
\newtheorem{remark}[theorem]{Remark}

\title{Escaping the Mode Lottery: Multi-Response Training Improves Language Model Generalization}

\author{%
  Hasan Amin \\
  Department of Computer Science\\
  Purdue University\\
  \texttt{hasanamin@purdue.edu} \\
  \And
  Kian Ahrabian \\
  Information Sciences Institute \\
  University of Southern California\\
  \texttt{ahrabian@usc.edu} \\
  \AND
  Ming Yin \\
  Department of Computer Science\\
  Purdue University\\
  \texttt{mingyin@purdue.edu} \\
  \And
  Rajiv Khanna \\
  Department of Computer Science\\
  Purdue University\\
  \texttt{rajivak@purdue.edu} \\
}

\begin{document}

\maketitle

\begin{abstract}
Modern language-model fine-tuning typically pairs each prompt with a single response, even though many prompts admit multiple valid completions. This effectively reduces a multi-modal conditional distribution to a one-sample view, a phenomenon we call the ``mode lottery,'' where training emphasizes a subset of plausible modes while leaving others statistically underrepresented. We study multi-response training (MRT), which retains multiple responses per prompt, and develop a principled account of when and why it helps. Our key insight is that prompts and responses are distinct statistical resources: additional prompts reduce uncertainty about the input distribution, while additional responses reduce uncertainty about the conditional output distribution. This yields a simple variance--budget tradeoff that predicts when retaining multiple responses is worthwhile, shows diminishing returns as prompt-level uncertainty dominates, and explains why large redundant corpora can exhibit an implicit multi-response effect. We further analyze response selection, and show that Random-\(K\)-of-\(N\) is the unbiased default for distributional fine-tuning, reward-based selection can induce persistent mode collapse, and a submodular quality--diversity objective provides an efficient alternative with theoretical guarantees. Controlled simulations validate the predicted variance and selection effects, including a striking failure mode where reward-only selection produces gradients misaligned with the true objective. Across structured and real-world datasets, including a new multi-prompt, multi-response benchmark, MRT consistently improves distributional generalization, with the largest gains in high response-diversity, low prompt-redundancy regimes. Overall, MRT reframes response multiplicity as a data-allocation problem with clear guidance: when responses are cheap and diverse, keeping more than one is not a heuristic, but a statistically grounded choice.\footnote{Code and data available at \url{https://github.com/shasanamin/multi-response-training}.}
\end{abstract}
\section{Introduction}
\label{sec:intro}

Modern language model training is largely organized around a quiet convention of accompanying one prompt with one response.
This convention is natural for classification-style supervision, but it is a poor fit for open-ended generation.
Prompts such as ``\textit{write a function that sorts a list}" admit many correct implementations, and ``\textit{summarize this article}" allow multiple valid framings.
Training on only one response therefore exposes the model to a narrow slice of a multi-modal conditional distribution.
We call this the \emph{mode lottery}: the training set may repeatedly show one plausible mode---often the most rewarded, most common, or easiest-to-generate one---while leaving other valid and valuable modes statistically invisible.

This paper studies \emph{multi-response training} (MRT), i.e., training on \(K>1\) responses per prompt, with standard single-response training (SRT) recovered as the special case \(K=1\).
MRT is already present as a successful heuristic in several modern post-training pipelines, where multiple responses per prompt are generated, filtered, ranked, or reused during supervised and reinforcement-learning stages~\citep{touvron2023llama,dubey2024llama,bai2022constitutional,gulcehre2023reinforced,dong2023raft,yuan2023rrhf}.
Yet the field lacks a clear account of when retaining multiple responses should help, how large the gain can be, and which responses should be kept.
Our goal is to turn this heuristic into a data-allocation principle.

The central observation is that prompts and responses are different statistical resources.
Additional prompts reduce uncertainty about the marginal prompt distribution, while additional responses to the same prompt reduce uncertainty about the conditional response distribution.
These two uncertainties have different costs and different limits.
If prompt-level variation dominates, collecting new prompts is more valuable.
If within-prompt response variation is large and responses are cheap, MRT can substantially improve distributional generalization.
This also explains why very large, redundant corpora can sometimes work well with one response per prompt, as near-duplicate or task-equivalent prompts can provide what we call an \textit{implicit MRT} effect.

We answer three practical questions.
First, \emph{when} should one keep more than one response?
A variance-budget analysis separates prompt-level uncertainty from response-level uncertainty and yields a closed-form rule for the optimal response multiplicity \(K^\star\), depending only on the ratio of within-prompt to across-prompt variance and the relative cost of prompts versus responses.
Second, \emph{by how much} can MRT help?
We give complementary Bernstein and hierarchical Rademacher generalization bounds, with the former giving a variance-sensitive view of diminishing returns and the latter handling the non-i.i.d. structure of repeated prompts plus showing that the response-level complexity can scale with the total number of prompt-response pairs rather than only with responses per prompt.
Third, \emph{which} responses should be kept?
We distinguish distributional, reward-tilted, and quality-coverage objectives. Random-\(K\)-of-\(N\) (RKoN) is the unbiased default for distributional fine-tuning---which is the focus of this work---while reward-only best-of-\(N\) can induce persistent mode collapse and proposed GRADES provides an efficient submodular recipe for retaining a balance of high-quality and diverse responses.

We summarize our contributions below:
\begin{enumerate}[leftmargin=1.25em,itemsep=0pt]
    \item \textbf{A variance-budget principle for response multiplicity.}
    We show that MRT separates prompt-level uncertainty from response-level uncertainty, yielding a simple allocation law for \(K^\star\) and a direct test for when keeping multiple responses is worthwhile.

    \item \textbf{Generalization guarantees for hierarchical response data.}
    We give complementary Bernstein and hierarchical Rademacher bounds. The Bernstein view gives a variance-sensitive dependence on \(K\), while the Rademacher view shows that, despite repeated prompts and non-i.i.d.\ structure, the response-level complexity can scale with the full prompt-response sample size when conditional symmetrization is handled correctly.

    \item \textbf{A target-aware theory of response selection.}
    We extend our analysis from distributional to reward-tilted and quality-coverage targets, establishing how RKoN is unbiased for distributional fine-tuning, reward-only best-of-\(N\) can collapse valid modes, and GRADES gives a greedy quality-diversity selector with a \((1-1/e)\) submodular approximation guarantee.

    \item \textbf{Controlled and large-scale validation.}
    Exact-ground-truth simulations validate the variance law, budget optimum, and selector predictions. LLM fine-tuning on multiple datasets across domains shows the same structure in practical training.
\end{enumerate}

The resulting message is simple yet not unconditional.
MRT is not a claim that every dataset needs many responses per prompt, nor that reward-selected responses are always harmful.
Rather, it provides a principled way to decide when response multiplicity is worth the cost and how to use it.
When prompts are expensive, responses are cheap, and valid completions are diverse, single-response fine-tuning leaves useful supervision on the table.
MRT escapes the mode lottery by training on more of the conditional diversity that frontier language models can already generate.

\noindent \textbf{Related Work.} While multiple candidate completions are commonly used in both training (\textit{e.g.}, rejection sampling)~\citep{bai2022constitutional,touvron2023llama,dubey2024llama,gulcehre2023reinforced,dong2023raft,yuan2023rrhf,zelikman2022star} and inference (\textit{e.g.}, Pass@$K$)~\citep{chen2021evaluating,wang2023selfconsistency,lightman2023lets,snell2024scaling,brown2024monkeys,wu2024inference} pipelines, MRT specifically determines the number and selection criteria for retaining these samples.
Moreover, existing approaches for RLHF-style objectives (\textit{e.g.}, best-of-\(N\) selection)~\citep{beirami2024theoretical,gao2023scaling,coste2023reward,eisenstein2024helping,casper2023open} often change the training distribution by reward maximization.
Our analysis targets non-dominant valid modes in supervised fine-tuning reward-tail dominance conditions, where the target is not simply the highest-reward completion, using a strictly proper scoring rule~\citep{gneiting2007strictly}.
In parallel, we complement approaches to instruction-tuning data preparations~\citep{wei2022finetuned,wang2023selfinstruct,zhou2024lima,xu2024magpie,lu2024instag,liu2024deita,song2024scaling,sun2024principle} by introducing a variance-budget law determining the number of responses that should be kept.
Finally, going beyond the i.i.d. assumption in classical generalization analyses~\citep{shalevshwartz2014understanding,mohri2018foundations} and related-tasks shared structure studies in multi-task and hierarchical Rademacher analyses~\citep{baxter2000model,maurer2016benefit,tripuraneni2021provable,du2021fewshot}, 
We adapt variance-sensitive and conditional Rademacher for hierarchies where responses are conditionally sampled within each cross-task prompt.
Specifically, to balance quality with coverage over valid response modes, GRADES builds on submodular facility-location and diversity-aware subset selection methods~\citep{nemhauser1978analysis,minoux1978accelerated,lin2011class,kirchhoff2014submodularity,wei2015submodularity,sener2018coreset,xia2024less,agarwal2024delift}.
For more details, see \autoref{sec:app_related}.
\section{Multi-Response Training}
\label{sec:mrt}

We formalize multi-response training (MRT) as a minimal extension of standard supervised learning or fine-tuning.
Let $\pi_\theta(\mathbf y \mid \mathbf x)$ be an autoregressive model with parameters $\theta$, trained with token-level cross-entropy loss $\ell(\theta; \mathbf x, \mathbf y)$.
The population objective is $\mathcal L(\theta) = \mathbb{E}_{P(\mathbf x, \mathbf y)}[\ell]$.

Standard single-response training (SRT) observes one response per prompt and minimizes
\(
\hat{\mathcal L}_{\text{SRT}} = \frac{1}{N_p} \sum_{i=1}^{N_p} \ell(\theta; \mathbf x_i, \mathbf y_i).
\)
In contrast, MRT retains $K > 1$ responses per prompt and minimizes
\(
\hat{\mathcal L}_{\text{MRT}} = \frac{1}{N_p} \sum_{i=1}^{N_p} \frac{1}{K} \sum_{j=1}^K \ell(\theta; \mathbf x_i, \mathbf y_{ij}).
\)
SRT is thus the special case $K=1$, while MRT explicitly trains on multiple samples from the conditional response distribution.

\noindent \textbf{Overgeneration and selection.}
In modern post-training pipelines, it is common to generate multiple candidate responses per prompt using a base model or policy.
Formally, for each prompt $\mathbf x$, we obtain a candidate set $\mathcal Y'(\mathbf x) = \{\mathbf y_1, \dots, \mathbf y_n\}$, from which a subset $\mathcal S(\mathbf x) \subseteq \mathcal Y'(\mathbf x)$ of size $K$ is selected for training.
MRT changes the downstream use of this candidate pool: instead of selecting a single response, it trains on multiple retained responses.

This also raises an important design question of which responses should be kept?
While we focus on \textit{Random-$K$-of-$N$ (RKoN)}, which samples $K$ responses uniformly 
at random 
from the candidate pool of $N$ responses \textit{for each prompt}, and serves as the unbiased baseline, we also consider two other prominent settings:
\emph{Reward-based selection} (e.g., best-of-$N$ or top-$K$), which prioritizes high-scoring responses but may concentrate on a narrow subset of modes; and
\emph{diversity-aware selection}, which aims to retain responses that are both high-quality and distinct, capturing broader coverage of valid outputs.

\noindent \textbf{Statistical structure.}
MRT induces a hierarchical data structure.
Prompts $\mathbf x_1, \dots, \mathbf x_N$ are sampled i.i.d.\ from $P(\mathbf x)$, while responses $\mathbf y_{ij}$ are sampled conditionally from $P(\mathbf y \mid \mathbf x_i)$.
This distinction is critical since additional prompts and additional responses play fundamentally different statistical roles.
Increasing $N_p$ reduces uncertainty about the prompt distribution, while increasing $K$ reduces uncertainty about the conditional response distribution.
The benefits and limitations of MRT arise from this asymmetry.

\noindent \textbf{Scope and evaluation.}
Our focus is on distributional fine-tuning, where the objective is to model the conditional distribution $P(\mathbf y \mid \mathbf x)$ rather than to optimize a reward concentrated on a single preferred output.
This setting naturally arises in multi-solution tasks, robustness evaluation, and pass@$K$-style metrics.
Under this perspective, retaining multiple valid responses is not redundant supervision, but additional signal about the conditional structure.

\section{When Does MRT Help? A Variance--Budget Law}
\label{sec:when_mrt}

MRT is not universally beneficial.
Keeping more responses per prompt improves some aspects of learning while leaving others unchanged.
The key question is therefore not whether \(K>1\) can help, but \textit{when response multiplicity is the right use of data budget}.

\noindent \textbf{A structural decomposition.}
The answer follows from a single structural observation that prompts and responses contribute different kinds of information.
New prompts expand coverage of the input distribution, while additional responses refine the model’s estimate of the conditional output distribution.
This asymmetry induces a two-term decomposition.

\begin{lemma}[Prompt--response variance decomposition]
\label{lemma:variance_decomposition}
For any fixed \(\theta\), the variance of multi-response empirical loss decomposes exactly as,
\[
    \operatorname{Var}\!\left[\widehat{\mathcal L}_{\mathrm{MRT}}(\theta)\right]
    =
    \frac{V_{\mathbf x}(\theta)}{N_p}
    +
    \frac{V_{\mathbf y\mid \mathbf x}(\theta)}{N_pK},
\]
where
\(
    V_{\mathbf x}
    \coloneqq
    \operatorname{Var}_{\mathbf x}
    \!\left[
        \mathbb E_{\mathbf y\mid \mathbf x}
        \ell
    \right]
\)
is the across-prompt variance and
\(
    V_{\mathbf y\mid \mathbf x}
    \coloneqq
    \mathbb E_{\mathbf x}
    \!\left[
        \operatorname{Var}_{\mathbf y\mid \mathbf x}
        \ell
    \right]
\)
is the expected within-prompt variance.
\end{lemma}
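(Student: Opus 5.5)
The argument rests on the hierarchical sampling model of \autoref{sec:mrt}: prompts \(\mathbf x_1,\dots,\mathbf x_{N_p}\) are i.i.d.\ from \(P(\mathbf x)\), and conditional on \(\mathbf x_i\) the responses \(\mathbf y_{i1},\dots,\mathbf y_{iK}\) are i.i.d.\ from \(P(\mathbf y\mid\mathbf x_i)\), independently across \(i\). With \(\theta\) fixed, write \(\ell_{ij}=\ell(\theta;\mathbf x_i,\mathbf y_{ij})\) and \(Z_i=\frac1K\sum_{j}\ell_{ij}\). Then \(\widehat{\mathcal L}_{\mathrm{MRT}}=\frac1{N_p}\sum_i Z_i\), and the pairs \((\mathbf x_i,\{\mathbf y_{ij}\}_j)\) are i.i.d.\ across \(i\), so the \(Z_i\) are i.i.d. This gives \(\operatorname{Var}[\widehat{\mathcal L}_{\mathrm{MRT}}]=\operatorname{Var}[Z_1]/N_p\). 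The problem therefore reduces to computing the variance of a single prompt block.

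For one block, apply the law of total variance conditioning on \(\mathbf x_1\):
\[
\operatorname{Var}[Z_1]=\operatorname{Var}_{\mathbf x}\bigl(\mathbb E[Z_1\mid\mathbf x_1]\bigr)+\mathbb E_{\mathbf x}\bigl[\operatorname{Var}(Z_1\mid\mathbf x_1)\bigr].
\]

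\textbf{First term.} By linearity, \(\mathbb E[Z_1\mid\mathbf x_1]=\mathbb E_{\mathbf y\mid\mathbf x_1}\ell\), so this term is exactly \(V_{\mathbf x}(\theta)\).

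\textbf{Second term.} The \(\ell_{1j}\) are conditionally i.i.d.\ given \(\mathbf x_1\), so their conditional covariances vanish and \(\operatorname{Var}(Z_1\mid\mathbf x_1)=\frac1K\operatorname{Var}_{\mathbf y\mid\mathbf x_1}\ell\). Taking the expectation over \(\mathbf x_1\) gives \(V_{\mathbf y\mid\mathbf x}(\theta)/K\).

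Dividing the sum of the two terms by \(N_p\) yields the claimed identity. The case \(K=1\) recovers the usual total-variance formula for SRT as a sanity check.

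The computation itself is routine. The only genuine care point is the assumptions. We need a finite second moment, \(\mathbb E[\ell^2]<\infty\), so that every variance and conditional variance is defined and the tower property applies. We also need the within-prompt responses to be conditionally independent with identical distributions. Without that, the conditional cross-covariances \(\operatorname{Cov}(\ell_{1j},\ell_{1j'}\mid\mathbf x_1)\) would not vanish and would add a correction term.

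This matters for selection. Under RKoN from a pool of \(N\), the sampling is without replacement, and draws made without replacement from a finite pool are no longer conditionally independent. I would therefore state that the lemma uses fresh i.i.d.\ draws from \(P(\mathbf y\mid\mathbf x)\), or equivalently \(N\to\infty\). If needed, a remark could note that with a finite pool the second term picks up a finite-population factor \((N-K)/(N-1)\) relative to the pool's empirical variance.
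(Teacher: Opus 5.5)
Your proof is correct and is essentially the paper's argument. The paper applies the law of total variance to \(\widehat{\mathcal L}_{\mathrm{MRT}}\) conditioning on the whole prompt vector \(X\), while you first use independence across prompt blocks and then condition on a single \(\mathbf x_1\), which is the same computation in a different order; the paper itself uses your single-block form in its Bernstein proof.

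One small correction to your closing aside: when the pool is i.i.d.\ and RKoN picks indices independently of the candidate values, the retained responses are exactly i.i.d.\ given \(\mathbf x\) (this is Proposition~\ref{prop:rkon}). So the lemma holds for any finite pool size, with \(q\) in place of \(P\), and no \(N\to\infty\) restriction is needed; the finite-population factor appears only once you condition on the realized pool.
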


\noindent
This decomposition makes the role of MRT explicit.
The first term is a \textit{prompt floor} that depends only on the number of distinct prompts \(N_p\) and cannot be reduced by additional responses.
The second term is the \textit{response variance}, which shrinks linearly with number of responses per prompt \(K\).
\textit{MRT helps to the extent that within-prompt response diversity is large relative to across-prompt variation.}
If most uncertainty comes from variation across prompts, collecting more responses is ineffective.
If uncertainty is dominated by variation within prompts, increasing \(K\) is highly beneficial.

\noindent \textbf{From variance to allocation.}
If all examples had equal cost, the optimal strategy would be trivial: maximize the number of independent prompt-response pairs.
MRT becomes nontrivial because prompts and responses have very different acquisition costs.
In modern pipelines, prompts are often human-authored, curated, filtered, or licensed, while additional responses can be generated cheaply from an existing model.
At scale, this asymmetry can be extreme: the marginal cost of new prompts can dominate, while the cost of additional responses is negligible.
Let \(C_p\) denote the cost of obtaining a prompt and \(C_r\) the cost of obtaining an additional response.
Under a fixed budget \(\mathcal B = N_p(C_p + K C_r)\), we obtain:

\begin{proposition}[Cost-aware optimal multiplicity]
\label{prop:budget_allocation}
The variance-minimizing response multiplicity is
\[
    K^\star
    =
    \sqrt{
        \frac{V_{\mathbf y\mid \mathbf x}}{V_{\mathbf x}}
        \cdot
        \frac{C_p}{C_r}
    }.
\]
Moreover, MRT is preferred over single-response training when
\(
    \frac{V_{\mathbf y\mid \mathbf x}}{V_{\mathbf x}}
    >
    \frac{C_r}{C_p}.
\)
\end{proposition}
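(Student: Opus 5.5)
The plan is to substitute the budget constraint into the variance from \cref{lemma:variance_decomposition}, which turns the problem into a one-variable minimization in \(K\). Under \(\mathcal B = N_p(C_p + K C_r)\) we have \(N_p = \mathcal B/(C_p + K C_r)\). Writing \(a = V_{\mathbf x}\) and \(b = V_{\mathbf y\mid\mathbf x}\), the lemma gives
\[
\operatorname{Var}(K) = \frac{1}{N_p}\Bigl(a + \frac{b}{K}\Bigr) = \frac{1}{\mathcal B}(C_p + K C_r)\Bigl(a + \frac{b}{K}\Bigr).
\]
Since \(\mathcal B\) is fixed, it suffices to minimize
\[
f(K) = (C_p + K C_r)\Bigl(a + \frac{b}{K}\Bigr) = a C_p + b C_r + a C_r K + \frac{b C_p}{K}.
\]
I will treat \(K\) as a continuous variable on \((0,\infty)\), and likewise ignore the integrality of \(N_p\).

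\emph{Locating the minimizer.} The terms \(aC_p + bC_r\) do not depend on \(K\), so only \(a C_r K + b C_p/K\) matters. Two routes give the same answer.
\begin{itemize}
\item By AM--GM, \(a C_r K + b C_p/K \ge 2\sqrt{ab\,C_pC_r}\), with equality iff \(a C_r K = b C_p/K\). This gives \(K^\star = \sqrt{(b/a)(C_p/C_r)}\).
\item Alternatively, \(f'(K) = aC_r - bC_p/K^2\) and \(f''(K) = 2bC_p/K^3 > 0\). So \(f\) is strictly convex on \(K>0\) and the unique stationary point is the global minimizer.
\end{itemize}
I will assume \(a, b, C_p, C_r > 0\) so that everything is well defined. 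The degenerate cases are \(a=0\) (no prompt floor, so \(K^\star=\infty\)) and \(b=0\) (no response noise, so \(K^\star \to 0\), i.e.\ \(K=1\)). I will mention these as limiting cases.

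\emph{MRT versus SRT.} For the second claim I will compare against \(K=1\) using convexity. Because \(f\) is strictly convex with minimizer \(K^\star\), it is strictly increasing on \([K^\star,\infty)\). So some \(K>1\) beats \(K=1\) exactly when \(f'(1) < 0\), i.e.\ when \(aC_r < bC_p\). This is the condition \(V_{\mathbf y\mid\mathbf x}/V_{\mathbf x} > C_r/C_p\). Equivalently, \(K^\star > 1\) iff \((b/a)(C_p/C_r) > 1\), which is the same inequality.

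The main obstacle is interpretive rather than computational. In the continuous relaxation, the condition says only that the optimal multiplicity exceeds one. If the statement is read as requiring the integer value \(K=2\) to beat \(K=1\), the exact condition is instead \(f(2)<f(1)\), i.e.\ \(2aC_r + bC_p/2 < aC_r + bC_p\), which simplifies to \(b/a > 2C_r/C_p\). I will state clearly that the proposition is proved under the continuous relaxation, where ``MRT preferred'' means \(K^\star>1\) (equivalently, \(f\) is decreasing at \(K=1\)). I will add a remark that in the integer setting convexity makes the best integer one of \(\lfloor K^\star\rfloor\) or \(\lceil K^\star\rceil\), and that the threshold shifts by a constant factor when discreteness is enforced.
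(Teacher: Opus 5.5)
Your proof is correct and follows the paper's argument essentially step for step: you substitute \(N_p=\mathcal B/(C_p+KC_r)\), isolate the strictly convex \(K\)-dependent part \(V_{\mathbf x}C_rK+V_{\mathbf y\mid\mathbf x}C_p/K\), solve the first-order condition, and read ``MRT preferred'' as \(K^\star>1\) in the continuous relaxation. Your integer caveat, that \(K=2\) beats \(K=1\) iff \(V_{\mathbf y\mid\mathbf x}/V_{\mathbf x}>2C_r/C_p\), also matches the paper's own integer-\(K\) remark exactly.
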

The optimal \(K^\star\) thus depends on two ratios:
(i) a \textit{statistical ratio} \(V_{\mathbf y\mid \mathbf x}/V_{\mathbf x}\), capturing how diverse responses are relative to prompts, and
(ii) a \textit{cost ratio} \(C_p/C_r\), capturing how expensive prompts are relative to responses.
The square-root form is crucial, and it implies diminishing returns: increasing \(K\) yields rapid gains initially but quickly saturates as the prompt floor dominates.
\textit{MRT is most valuable when responses are cheap and diverse, and prompts are expensive and non-redundant.}

\noindent \textbf{Practical estimation.}
The quantities in Proposition~\ref{prop:budget_allocation} can be estimated from a small validation pool.
Given prompts with multiple sampled responses, one computes variance across prompt means and variance within prompts.
In practice, this requires only tens of prompts and a modest number of responses per prompt, and serves as a reliable diagnostic of whether MRT is likely to help.

\noindent \textbf{Implicit MRT through redundancy.}
The same analysis explains an important empirical phenomenon.
Large corpora can exhibit substantial prompt redundancy, as many practical prompts are paraphrases or near-equivalent tasks (e.g., ``\textit{how hot is the weather today?}'' vs. ``\textit{does it feel really warm today?}'').
In such settings, single-response training can still observe multiple valid responses for the same underlying task, spread across different prompts.
This can induce an \textit{implicit MRT} effect, where prompt redundancy partially substitutes for explicit response multiplicity.
\textit{Explicit MRT is most beneficial in low-redundancy regimes, where each prompt covers a distinct region of the task distribution.}
In contrast, highly redundant corpora already approximate this effect and therefore see smaller gains from increasing \(K\).
We also design experiments to probe this distinction directly.

\section{How Much Can MRT Help? Two Generalization Views}
\label{sec:how_much}

The variance--budget law identifies \emph{when} MRT is useful.
We now quantify \emph{how much} it can help.
The answer is governed by the same structure as before: prompt-level uncertainty and response-level uncertainty behave differently.
The key question is how this structure manifests in generalization.

We present two complementary views.
A Bernstein view is variance-adaptive and gives the sharpest fixed-model characterization.
A hierarchical Rademacher view is uniform and shows that the same gains persist under model selection despite the non-i.i.d.\ structure of repeated prompts.

\noindent \textbf{Variance-adaptive generalization.}
For a fixed model, MRT averages \(N_p\) independent prompt-level quantities, each itself an average over \(K\) responses.
Thus the effective variance is exactly the prompt--response decomposition from Lemma~\ref{lemma:variance_decomposition}.

\begin{theorem}[Bernstein generalization bound]
\label{theorem:bernstein_bound}
Assume \(0 \le \ell(\theta;\mathbf x,\mathbf y) \le B\).
For any fixed \(\theta\), with probability at least \(1-\delta\),
\[
    \left|
    \mathcal L(\theta)
    -
    \widehat{\mathcal L}_{\mathrm{MRT}}(\theta)
    \right|
    \le
    \sqrt{
        \frac{
            2\left(
                V_{\mathbf x}(\theta)
                +
                \frac{V_{\mathbf y\mid \mathbf x}(\theta)}{K}
            \right)
            \log(2/\delta)
        }{N_p}
    }
    +
    \frac{2B\log(2/\delta)}{3N_p}.
\]
\end{theorem}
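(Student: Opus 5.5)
The approach is to apply Bernstein's inequality to the prompt-level averages, which are i.i.d. For each prompt \(i\), define \(Z_i \coloneqq \frac{1}{K}\sum_{j=1}^K \ell(\theta;\mathbf x_i,\mathbf y_{ij})\). Then \(\widehat{\mathcal L}_{\mathrm{MRT}}(\theta) = \frac{1}{N_p}\sum_{i=1}^{N_p} Z_i\).

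Because \(\theta\) is fixed and the pairs \((\mathbf x_i, \{\mathbf y_{ij}\}_j)\) are drawn independently across \(i\), the \(Z_i\) are i.i.d. The tower property gives
\[
\mathbb E[Z_i] = \mathbb E_{\mathbf x}\,\mathbb E_{\mathbf y\mid\mathbf x}\,\ell = \mathcal L(\theta).
\]
For the variance, apply the law of total variance conditionally on \(\mathbf x_i\). The conditional mean of \(Z_i\) is \(\mathbb E_{\mathbf y\mid\mathbf x_i}\ell\). Since the \(\mathbf y_{ij}\) are conditionally i.i.d., the conditional variance is \(\operatorname{Var}_{\mathbf y\mid\mathbf x_i}\ell / K\). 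Hence
\[
\operatorname{Var}(Z_i) = V_{\mathbf x}(\theta) + V_{\mathbf y\mid\mathbf x}(\theta)/K.
\]
This is exactly \(N_p\) times the quantity in Lemma~\ref{lemma:variance_decomposition}, so the same computation can be reused directly.

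Next we bound the range. Since \(0\le \ell \le B\), we have \(Z_i\in[0,B]\), and therefore the centered variables satisfy \(|Z_i - \mathcal L(\theta)| \le B\).

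The classical two-sided Bernstein inequality for i.i.d. variables with variance \(\sigma^2\) and centered range at most \(B\) states that, with probability at least \(1-\delta\),
\[
\Bigl|\tfrac{1}{N_p}\textstyle\sum_i Z_i - \mathbb E Z\Bigr| \le \sqrt{\frac{2\sigma^2\log(2/\delta)}{N_p}} + \frac{2B\log(2/\delta)}{3N_p}.
\]
To obtain this form, start from the one-sided tail \(\exp\!\bigl(-N_p t^2/(2\sigma^2 + 2Bt/3)\bigr)\). Set it equal to \(\delta/2\) and solve the resulting quadratic in \(t\). Then bound the root using \(\sqrt{a+b}\le\sqrt a+\sqrt b\). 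Finally, take a union bound over the two tails, which produces the \(\log(2/\delta)\) factor.

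Substituting \(\sigma^2 = V_{\mathbf x}+V_{\mathbf y\mid\mathbf x}/K\) gives the stated bound.

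No step is a real obstacle. The only point requiring care is the independence structure. Bernstein's inequality must be applied to the prompt-level averages \(Z_i\), not to the \(N_pK\) individual losses, which are not independent because responses to the same prompt share \(\mathbf x_i\).

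Fixing \(\theta\) is also essential. It ensures that \(\theta\) does not depend on the sample, so the \(Z_i\) are genuinely i.i.d. One should also check that the constant \(2B/3\) matches the version of Bernstein used: the range bound \(|Z_i-\mathbb E Z_i|\le B\) holds because both \(Z_i\) and its mean lie in \([0,B]\).
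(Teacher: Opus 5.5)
Your proposal is correct and follows the paper's proof essentially step for step: Bernstein's inequality is applied to the $N_p$ independent prompt-level averages, with variance $V_{\mathbf x}+V_{\mathbf y\mid\mathbf x}/K$ obtained from the variance decomposition and centered range $B$, and the resulting quadratic is solved and bounded via $\sqrt{a+b}\le\sqrt a+\sqrt b$. The only cosmetic difference is that you set each one-sided tail to $\delta/2$ and take a union bound, whereas the paper sets the two-sided tail $2\exp(\cdot)$ equal to $\delta$ directly.
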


\noindent
This bound gives a clean quantitative picture.
Increasing \(K\) reduces only the within-prompt term \(V_{\mathbf y\mid \mathbf x}/K\), while the prompt-level term \(V_{\mathbf x}\) remains unchanged.
MRT converts within-prompt diversity into reducible error but cannot reduce the prompt floor.
This helps explain our empirical observation of gains from increasing \(K\) being large initially but saturating as the prompt-level uncertainty dominates.

\noindent \textbf{Uniform generalization under hierarchy.}
The Bernstein bound is tight but require fixed \(\theta\).
Yet training selects \(\theta\) from data, requiring a uniform guarantee.
We need to carefully handle the MRT data as it involves subtle yet significant hierarchy: there are \(N_pK\) samples but only \(N_p\) independent units.
\textit{Two naive analyses fail in opposite ways}:
treating all \(N_pK\) pairs as i.i.d.\ ignores the prompt bottleneck, while analyzing each prompt separately loses the dependence on \(N_p\) and yields only \(O(1/\sqrt{K})\) scaling.
The correct analysis must respect both levels simultaneously.

Let
\(
\mathcal F
=
\bigl\{
    (\mathbf x,\mathbf y)\mapsto \ell(\theta;\mathbf x,\mathbf y)
    :
    \theta\in\Theta
\bigr\},
\)
and
\(
\mathcal G
=
\bigl\{
    \mathbf x \mapsto \mathbb E_{\mathbf y\mid\mathbf x}\ell(\theta;\mathbf x,\mathbf y)
    :
    \theta\in\Theta
\bigr\}.
\)
For a fixed prompt set \(X\), define the \textit{conditional} response-level Rademacher complexity
\[
\mathfrak R_{N_pK}(\mathcal F\mid X)
=
\mathbb E_{\sigma,Y\mid X}
\left[
    \sup_{\theta}
    \frac{1}{N_pK}
    \sum_{i=1}^{N_p}
    \sum_{j=1}^{K}
    \sigma_{ij}\,
    \ell(\theta;\mathbf x_i,\mathbf y_{ij})
\right].
\]

\begin{theorem}[Hierarchical uniform generalization]
\label{theorem:rademacher_bound}
Assume \(0 \le \ell \le B\).
With probability at least \(1-\delta\), uniformly for all \(\theta\in\Theta\),
\[
\mathcal L(\theta)
-
\widehat{\mathcal L}_{\mathrm{MRT}}(\theta)
\le\;
2\,\mathfrak R_{N_p}(\mathcal G)
+
2\,\mathbb E_X[\mathfrak R_{N_pK}(\mathcal F\mid X)] +
3B\sqrt{\frac{\log(4/\delta)}{2N_p}}
+
B\sqrt{\frac{\log(4/\delta)}{2N_pK}}.
\]
\end{theorem}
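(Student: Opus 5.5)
The plan is to split the generalization gap into a prompt-level term and a response-level term, handle each with its own symmetrization and bounded-differences argument, and combine them with a union bound. For each \(\theta\) write \(g_\theta(\mathbf x)=\mathbb E_{\mathbf y\mid\mathbf x}\ell(\theta;\mathbf x,\mathbf y)\) and define the intermediate empirical quantity \(\widehat{\mathcal L}_X(\theta)=\frac{1}{N_p}\sum_i g_\theta(\mathbf x_i)\). Then
\[
\mathcal L(\theta)-\widehat{\mathcal L}_{\mathrm{MRT}}(\theta)
=\bigl(\mathcal L(\theta)-\widehat{\mathcal L}_X(\theta)\bigr)+\bigl(\widehat{\mathcal L}_X(\theta)-\widehat{\mathcal L}_{\mathrm{MRT}}(\theta)\bigr).
\]
Taking suprema, it suffices to bound \(\Phi_1(X)=\sup_\theta(\mathcal L-\widehat{\mathcal L}_X)\) and \(\Phi_2(X,Y)=\sup_\theta(\widehat{\mathcal L}_X-\widehat{\mathcal L}_{\mathrm{MRT}})\) separately, each with probability \(1-\delta/2\).

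\textbf{Prompt level.} \(\Phi_1\) depends only on the i.i.d.\ prompts, and \(g_\theta\in[0,B]\). The standard argument applies directly. McDiarmid gives, with probability at least \(1-\delta/4\),
\[
\Phi_1\le\mathbb E\Phi_1+B\sqrt{\log(4/\delta)/(2N_p)},
\]
and ghost-sample symmetrization gives \(\mathbb E\Phi_1\le 2\mathfrak R_{N_p}(\mathcal G)\). If \(\mathfrak R_{N_p}(\mathcal G)\) is meant as the empirical complexity, I would add one more McDiarmid step. That step contributes another \(2B\sqrt{\log(4/\delta)/(2N_p)}\), which is where the constant \(3\) in front of that term comes from.

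\textbf{Response level.} I condition on \(X\). Given \(X\), the \(N_pK\) responses \(\mathbf y_{ij}\) are independent (not identically distributed), and \(\mathbb E[\widehat{\mathcal L}_{\mathrm{MRT}}(\theta)\mid X]=\widehat{\mathcal L}_X(\theta)\).
\begin{itemize}
\item \emph{Concentration.} Changing a single \(\mathbf y_{ij}\) changes \(\Phi_2\) by at most \(B/(N_pK)\). Conditional McDiarmid therefore gives, with conditional probability at least \(1-\delta/4\),
\[
\Phi_2\le\mathbb E[\Phi_2\mid X]+B\sqrt{\log(4/\delta)/(2N_pK)}.
\]
\item \emph{Symmetrization.} Draw a ghost sample \(\mathbf y'_{ij}\sim P(\cdot\mid\mathbf x_i)\) independently, with the same prompts. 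Swapping \(\mathbf y_{ij}\leftrightarrow\mathbf y'_{ij}\) preserves the joint conditional law, so Rademacher signs can be inserted. This yields \(\mathbb E[\Phi_2\mid X]\le 2\mathfrak R_{N_pK}(\mathcal F\mid X)\).
\end{itemize}

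Since the conditional high-probability statement holds for every \(X\), it also holds unconditionally. The remaining issue is converting \(\mathfrak R_{N_pK}(\mathcal F\mid X)\) into its expectation \(\mathbb E_X[\cdot]\). Changing one prompt \(\mathbf x_i\) changes this quantity by at most \(B/N_p\), since it affects \(K\) of the terms. McDiarmid over \(X\) therefore costs a further \(B\sqrt{\log(4/\delta)/(2N_p)}\) term, which can be absorbed into the \(3B\) coefficient. To make the bookkeeping come out exactly, I would allocate the budget as \(\delta/4\) to each of the four concentration events, hence the \(\log(4/\delta)\), and arrange them so that the three \(N_p\)-rate terms total at most \(3B\).

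\textbf{Main obstacle.} The delicate step is the conditional symmetrization. One must resample responses only, holding prompts fixed, so that the centering is \(\widehat{\mathcal L}_X\) and not \(\mathcal L\). This is what lets the response term scale with \(N_pK\) instead of treating the pairs as i.i.d. The other delicate step is matching the constants, which requires deciding which complexities are expected and which are empirical. I would first try taking \(\mathfrak R_{N_p}(\mathcal G)\) as the expected complexity. If the constants then fall short of \(3B\), I would move the spare \(N_p\)-rate McDiarmid term to the \(\mathcal F\)-conversion step.
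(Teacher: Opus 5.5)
Your proposal matches the paper's proof step for step: the same split through $\frac{1}{N_p}\sum_i g_\theta(\mathbf x_i)$; McDiarmid plus symmetrization at the prompt level, with $\mathfrak R_{N_p}(\mathcal G)$ taken as the expected complexity (which is how the paper defines it); conditional McDiarmid plus ghost-response symmetrization at the response level; and a final McDiarmid step over prompts on $\mathfrak R_{N_pK}(\mathcal F\mid X)$, with three events at $\delta/4$ each. The one correction is to your constants: that complexity enters the bound with a factor $2$, so the last step costs $2B\sqrt{\log(4/\delta)/(2N_p)}$ rather than $B\sqrt{\log(4/\delta)/(2N_p)}$, and the $3B$ is exactly $B$ (prompt-level McDiarmid) plus $2B$ (this conversion step) --- it does not come from an empirical-to-expected step for $\mathcal G$, which would instead push the constant to $5B$.
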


\noindent
The structure mirrors the Bernstein bound.
Prompt-level terms depend only on \(N_p\), while response-level terms benefit from the full \(N_pK\) sample size.
Furthermore, under standard capacity control
\[
\mathbb E_X[\mathfrak R_{N_pK}(\mathcal F\mid X)]
= O\!\left(\frac{1}{\sqrt{N_pK}}\right),
\]
so increasing \(K\) genuinely improves the uniform bound.

The key technical step is keeping \(\sup_\theta\) outside the per-prompt average so that symmetrization acts jointly over all \(N_pK\) responses.
This preserves the \(1/\sqrt{N_pK}\) rate that naive per-prompt analyses lose.
\textit{Bernstein explains the gain; hierarchical Rademacher certifies that the gain survives model selection.}

\noindent \textbf{Putting the views together.}
Both bounds expose the same mechanism.
MRT reduces only the response-level component of generalization error, and all improvements are mediated through this term.
The prompt-level floor remains unchanged.
\textit{MRT improves generalization by averaging within prompts, although prompt diversity must still come from more prompts. This unified picture explains why MRT is complementary to prompt collection rather than a substitute for it, and why its gains are largest in high response-diversity, low prompt-redundancy regimes.}

\begin{remark}[Bounded-loss interpretation]
\label{remark:bounded_loss}
The bounds are stated for bounded loss.
For cross-entropy, this can be interpreted via a clipped surrogate or per-token truncation.
Empirically, the results 
capture the concentration behavior of the loss, with the approximation gap governed by the upper tail.
\end{remark}

\section{Which Responses to Keep? Target-Aware Selection}
\label{sec:which_keep}

Sections~\ref{sec:when_mrt}--\ref{sec:how_much} show that increasing the retained
multiplicity \(K\) reduces response-level estimation error. We now separate this from a
second design choice: \emph{which} \(K\) responses to keep from an overgenerated pool
\(\mathcal Y'(\mathbf x)=\{\mathbf y_1,\ldots,\mathbf y_n\}\). This choice is not merely
an implementation detail. The selector determines the training target: it may preserve the
candidate distribution, tilt it toward high reward, or bias training toward broader coverage
of response modes.

\noindent \textbf{Distributional target: Random-\(K\)-of-\(N\).}
For distributional fine-tuning, the selector should preserve the conditional response
distribution. Random-\(K\)-of-\(N\) (RKoN) does exactly this.

\begin{proposition}[RKoN preserves the candidate distribution]
\label{prop:rkon}
Let
\(\mathbf y_1,\ldots,\mathbf y_n
\overset{\mathrm{i.i.d.}}{\sim}
q(\cdot\mid\mathbf x)\),
and let RKoN choose \(K\) candidates uniformly at random, independent of their values.
For \(g(\mathbf y)=\nabla_\theta\ell(\theta;\mathbf x,\mathbf y)\),
\(
    \mathbb E[\hat g_{\mathrm{RKoN}}]
    =
    \mathbb E_{\mathbf y\sim q(\cdot\mid\mathbf x)}[g(\mathbf y)],
\) and
\(
    \operatorname{Cov}(\hat g_{\mathrm{RKoN}})
    =
    \frac{1}{K}
    \operatorname{Cov}_{\mathbf y\sim q(\cdot\mid\mathbf x)}(g).
\)
In particular, if \(q=P\), RKoN is unbiased for the distributional target.
\end{proposition}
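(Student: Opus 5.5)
The plan is to reduce RKoN to an exchangeability argument. Define \(\hat g_{\mathrm{RKoN}}=\frac1K\sum_{j\in S}g(\mathbf y_j)\), where \(S\subseteq\{1,\dots,n\}\) is a uniformly random \(K\)-subset drawn independently of \((\mathbf y_1,\ldots,\mathbf y_n)\). Since \(S\) is independent of the candidates and the candidates are i.i.d., the \(K\) selected responses are themselves i.i.d.\ draws from \(q(\cdot\mid\mathbf x)\).

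To make this precise, condition on the event \(S=s\) for a fixed subset \(s\). Independence of \(S\) from the values means the conditional law of \((\mathbf y_j)_{j\in s}\) given \(S=s\) equals its unconditional law, which is \(q^{\otimes K}\). Hence, conditionally on \(S=s\), \(\hat g_{\mathrm{RKoN}}\) is an average of \(K\) i.i.d.\ copies of \(g(\mathbf y)\) with \(\mathbf y\sim q(\cdot\mid\mathbf x)\). That conditional law does not depend on \(s\), so the unconditional law of \(\hat g_{\mathrm{RKoN}}\) is the same mixture-free law.

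The two moment identities then follow directly. Linearity gives \(\mathbb E[\hat g]=\mathbb E_q[g]\). For the covariance, independence of the summands makes the cross-covariances vanish, so \(\operatorname{Cov}(\hat g)=\frac1{K^2}\cdot K\operatorname{Cov}_q(g)=\frac1K\operatorname{Cov}_q(g)\). Alternatively, one can use the law of total covariance over \(S\). The term \(\operatorname{Cov}(\mathbb E[\hat g\mid S])\) is zero because the conditional mean is constant in \(s\), leaving \(\mathbb E[\operatorname{Cov}(\hat g\mid S)]=\frac1K\operatorname{Cov}_q(g)\). Finally, when \(q=P(\cdot\mid\mathbf x)\), the mean identity is exactly \(\mathbb E_{P(\mathbf y\mid\mathbf x)}[\nabla_\theta\ell]\). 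Provided differentiation and expectation can be interchanged (e.g.\ bounded or dominated gradients), this equals \(\nabla_\theta\) of the conditional population loss, so RKoN is unbiased for the distributional target.

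The main point needing care is the first step. One must use independence of the selection from the candidate values: the covariance would be the finite-population form \(\frac{1}{K}\frac{n-K}{n-1}\) only if we conditioned on the pool, which we do not. The statement's covariance is unconditional over both the pool and the selection, and this is why the clean \(1/K\) factor appears. I would state this explicitly, and I would also assume \(g\) has finite second moments under \(q\).
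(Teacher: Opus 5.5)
Your proposal is correct and follows essentially the same route as the paper: condition on the selected index set, use independence of the selection from the i.i.d.\ candidates to obtain $K$ i.i.d.\ draws from $q(\cdot\mid\mathbf x)$, then apply total expectation and total covariance (and, as the paper also remarks, conditioning on the pool would instead give the finite-population correction). Your caveat about interchanging differentiation and expectation is harmless but unnecessary, because the paper defines the distributional target directly as $\bar g_P(\mathbf x)=\mathbb E_{\mathbf y\sim P(\cdot\mid\mathbf x)}[g(\mathbf y)]$.
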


\noindent
Thus RKoN is the assumption-light default for studying MRT itself, because increasing \(K\) reduces gradient-estimator variance at the predicted \(1/K\) rate without changing the target
distribution.

\noindent \textbf{Reward target: Best-\(K\)-of-\(N\).}
Best-\(K\)-of-\(N\) (BKoN) instead selects the highest-reward responses. This can be the
right selector when the goal is reward optimization, but it is generally biased for
distributional fine-tuning. The bias does not vanish by increasing the candidate pool; in
fact, a larger pool can sharpen the reward-tail tilt.

\begin{proposition}[Persistent upper-tail bias]
\label{prop:bon_bias_main}
Suppose responses decompose into modes with probabilities \(p_m\) and distinct gradient
means \(\{\mu_m\}\). If one mode \(m^\star\) dominates the reward upper tail, then as
\(n\to\infty\),
\[
    \mathbb E[\hat g_{\mathrm{BKoN}}]
    \to
    \mu_{m^\star},
    \qquad
    \mu_{m^\star}
    \neq
    \sum_m p_m\mu_m
    =
    \bar g
\]
whenever the reward-dominant mode is not distributionally representative.
\end{proposition}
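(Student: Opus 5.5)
To prove Proposition~\ref{prop:bon_bias_main}, I will first fix a model that makes the statement precise. Each candidate \(\mathbf y_i\sim q(\cdot\mid\mathbf x)\) carries a latent mode label \(M_i\) with \(\Pr(M_i=m)=p_m\) and a reward \(R_i=r(\mathbf y_i)\). Gradients satisfy \(\mathbb E[g(\mathbf y)\mid M=m]=\mu_m\), with \(\|g\|\le G\) bounded. The phrase ``mode \(m^\star\) dominates the reward upper tail'' will be formalized as
\[
\lim_{t\to t_{\max}}\Pr(M=m^\star\mid R>t)=1,
\]
where \(t_{\max}\) is the essential supremum of \(R\). An equivalent alternative is that the conditional tail ratios \(\Pr(R>t\mid M=m)/\Pr(R>t\mid M=m^\star)\to 0\) for all \(m\neq m^\star\). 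I will also assume that, given its mode and its reward, a candidate's gradient is drawn from a law under which \(\mathbb E[g\mid M=m, R]=\mu_m\); this conditional-mean independence is the simplest version, and dropping it needs a tail-conditional mean instead. BKoN returns \(\hat g_{\mathrm{BKoN}}=\frac1K\sum_{k=1}^K g(\mathbf y_{(n-k+1)})\), the average over the \(K\) top-reward order statistics, with \(K\) fixed as \(n\to\infty\).

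The key steps run in order.

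\textbf{Step 1: reduce to the mode of each selected candidate.} Conditioning on the modes and rewards gives
\[
\mathbb E[\hat g_{\mathrm{BKoN}}]=\sum_m \mu_m\,\pi_n(m),\qquad \pi_n(m)=\frac1K\sum_{k}\Pr\bigl(M_{(n-k+1)}=m\bigr).
\]
It therefore suffices to show \(\pi_n(m^\star)\to 1\).

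\textbf{Step 2: push the top-\(K\) threshold into the tail.} The \(K\)-th largest reward \(R_{(n-K+1)}\) converges in probability to \(t_{\max}\) as \(n\to\infty\). This follows from a standard order-statistic fact: for any \(t<t_{\max}\), the number of candidates exceeding \(t\) is \(\mathrm{Bin}(n,\Pr(R>t))\to\infty\).

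\textbf{Step 3: identify the conditional law of the selected modes.} Condition on the threshold, i.e.\ on the value of the \((K+1)\)-th largest reward, and assume a continuous reward distribution, or break ties randomly. The top \(K\) candidates are then i.i.d.\ draws from \(q\) conditioned on \(R>\)threshold. Their mode probability therefore equals \(\Pr(M=m^\star\mid R>T_n)\). Applying the dominance assumption together with Step 2 and dominated convergence gives \(\pi_n(m^\star)\to1\).

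\textbf{Step 4: pass to the limit in the gradient.} Boundedness of \(g\) gives \(\mathbb E[\hat g_{\mathrm{BKoN}}]\to\mu_{m^\star}\). The inequality \(\mu_{m^\star}\neq\bar g\) is then exactly the non-representativeness hypothesis, so it is essentially definitional. Since the \(\mu_m\) are distinct, it holds automatically whenever some other mode \(m\neq m^\star\) has \(p_m>0\) and the other mode means do not average to \(\mu_{m^\star}\). I will add a one-line remark noting this. I will also remark on the contrast with Proposition~\ref{prop:rkon}, whose limit is \(\bar g\) for every \(n\).

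I expect the main obstacle to be Step 3, specifically making the conditioning on order statistics rigorous. The top-\(K\) set is not a fixed-threshold event, so I will argue through the standard representation in which, given \(R_{(n-K)}=s\), the top \(K\) samples are i.i.d.\ from the law of \(\mathbf y\) given \(R>s\). This representation requires a continuous reward distribution, and atoms at \(t_{\max}\) need a tie-breaking rule. If that representation is awkward, my fallback is a cruder bound. For any \(t<t_{\max}\), I would bound \(1-\pi_n(m^\star)\) by
\[
\Pr\bigl(R_{(n-K+1)}\le t\bigr)+\Pr(M\neq m^\star\mid R>t),
\]
which is a slight overestimate but suffices. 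I would then let \(n\to\infty\) followed by \(t\to t_{\max}\).
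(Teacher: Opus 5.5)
Your main argument is correct for continuous rewards, but it takes a different route from the paper's.

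\textbf{The paper's route.} It works with a deterministic threshold. It picks $t_n$ with $S_{m^\star}(t_n)=\log n/n$, where $S_m(t)=\Pr(R\ge t\mid M=m)$. Chernoff bounds on the mode counts then give at least $K$ dominant-mode candidates above $t_n$ with high probability. Separately, the polynomial tail domination $S_m(t)\le S_{m^\star}(t)^{1+\gamma}$ and a union bound show that no other mode reaches $t_n$, except with probability $O((\log n)^{1+\gamma}/n^{\gamma})$. So the top-$K$ set lies entirely in $m^\star$ with probability tending to one. Boundedness $\|g\|\le G$ plus an assumed tail-stable mean $\mathbb E[\hat g_{\mathrm{BKoN}}\mid A_n]\to\mu_{m^\star}$ finishes the proof.

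\textbf{Your route.} You get the exact identity $\pi_n(m^\star)=\mathbb E\bigl[\Pr(M=m^\star\mid R>R_{(n-K)})\bigr]$ from the order-statistic representation. You then conclude from $R_{(n-K)}\to t_{\max}$ in probability and bounded convergence.

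\textbf{What each route buys.} Yours needs only the qualitative dominance $\Pr(M=m^\star\mid R>t)\to1$. This is strictly weaker than the paper's $1+\gamma$ condition, which gives $S_m/S_{m^\star}\le S_{m^\star}^{\gamma}\to0$. Your route also lets the gradient hypothesis be stated primitively as $\mathbb E[g\mid M=m^\star,R>s]\to\mu_{m^\star}$ as $s\to t_{\max}$. For the other modes boundedness suffices, so you do not need $\mathbb E[g\mid M=m,R]=\mu_m$ for every $m$. The paper's route buys an explicit rate and avoids conditioning on order statistics. Your continuity assumption costs nothing relative to the paper, since choosing $t_n$ with $S_{m^\star}(t_n)=\log n/n$ already presupposes it.

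A small slip: Step 3 conditions on the $(K{+}1)$-th largest reward, so Step 2 should be stated for $R_{(n-K)}$. The same binomial argument applies.

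\textbf{Your fallback is not a valid bound as written.} On $\{R_{(n-K+1)}>t\}$ the selected candidates are the \emph{largest} exceedances of $t$. They are not a uniform sample from $q(\cdot\mid\mathbf x)$ conditioned on $R>t$. Their non-dominant fraction can therefore far exceed $\Pr(M\neq m^\star\mid R>t)$.

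For a concrete failure, take $K=1$. Put a small amount of another mode's mass above $t$, placed above most of $m^\star$'s mass above $t$. Leave only a very thin $m^\star$ layer above it near $t_{\max}$. For suitable moderate $n$, the top candidate is then non-dominant with high probability, while your right-hand side is small.

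The correct version is $1-\pi_n(m^\star)\le\Pr(R_{(n-K)}\le t)+\sup_{s\ge t}\Pr(M\neq m^\star\mid R>s)$. But deriving it uses the same conditioning, so it does not bypass Step 3.

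A genuinely representation-free fallback is the paper's threshold argument. Under your weaker hypothesis it still works with a diagonally chosen $t_n\to t_{\max}$ such that $n\,p_{m^\star}S_{m^\star}(t_n)\to\infty$ while $n\sum_{m\neq m^\star}p_mS_m(t_n)\to0$.
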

The limit is stated under bounded gradients and a tail-stable gradient mean for the dominant mode (see Appendix~\ref{app:bkon_bias} for proof and the formal sufficient conditions).

\noindent
This formalizes a target mismatch rather than a failure of optimization: BKoN trains on a
reward-tilted conditional distribution. Empirically, this appears as higher reward but lower
diversity and worse reference likelihood.

\noindent \textbf{Coverage-aware selection.}
RKoN is unbiased, but for small \(K\) it may spend multiple samples on the same response
mode. The ideal variance-reducing alternative is stratified sampling across modes, which
removes between-mode variance while preserving the distributional target. Since true modes
are unknown, we use diversity-aware selectors as practical approximations. In particular,
Diverse-Best-\(K\)-of-\(N\) (DBKoN) balances quality and coverage, while DKoN denotes the
pure-diversity special case. Our implementation, GRADES, greedily optimizes a monotone
facility-location surrogate over nonnegative quality scores and response similarities,
giving the standard \((1-1/e)\) approximation guarantee for that surrogate. The guarantee is
not a direct population-risk guarantee; it is useful only insofar as the similarity kernel
captures task-relevant response modes. Formal definitions, the stratification result, and
the GRADES proof are given in Appendix~\ref{app:selection_details}.

\noindent \textbf{Key Takeaway.}
The multiplicity \(K\) controls how much conditional information is retained, while the
selector controls \emph{which distribution} that information represents. RKoN tests the
variance--budget mechanism for distributional MRT; BKoN exposes reward-induced target
shift; and DBKoN/GRADES probes whether quality--coverage selection can recover some of the
variance reduction of stratification without collapsing valid modes.

\section{Experiments}
\label{sec:experiments}

Our theory makes three concrete predictions.
First, increasing response multiplicity \(K\) reduces only the within-prompt component of uncertainty, yielding diminishing returns once the prompt-level floor dominates.
Second, the benefit of MRT should be largest on datasets with high response-level variability relative to prompt-level variability.
Third, selectors change the \textit{training target}.

We test these predictions through four progressively less controlled validations.
A controlled synthetic setting first verifies the variance law, the cost-aware optimum, and selector bias under exact ground truth.
We then study real LLM fine-tuning across four multi-response benchmarks spanning instruction following, dialogue, preference, and code generation.
Next, we test whether dataset-level variance ratios predict where MRT helps most.
Finally, we examine selector behavior and implicit MRT through prompt redundancy.

\subsection{Controlled Validation}
\label{sec:exp_simulation}

Real LLM fine-tuning cannot directly observe \(V_{\mathbf x}\), \(V_{\mathbf y\mid\mathbf x}\), or selector bias.
We therefore first validate the theory in a setting where all quantities are known exactly by construction.

\noindent \textbf{Setup.}
Prompts \(\mathbf x\in\mathbb R^5\) are drawn from a standard Gaussian, and responses follow a three-mode conditional distribution
\[
    y
    =
    \sin(2x_1)+0.5x_2+\mathrm{offset}_m+\varepsilon,
    \qquad
    \mathrm{offset}_m\in\{-1.5,0,+1.5\},
    \qquad
    \varepsilon\sim\mathcal N(0,0.01),
\]
with modes sampled uniformly.
A reward model favors the central mode without making the others invalid, so reward optimization genuinely competes with distributional coverage.
We train mixture-density networks by NLL and evaluate population loss on fresh draws from the true conditional distribution \(P(y\mid\mathbf x)\).

\begin{figure}[t]
  \centering
  \begin{minipage}[t]{0.3\linewidth}
    \centering
    \includegraphics[width=\linewidth]{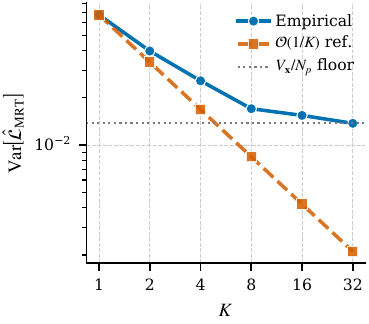}\\
    {\footnotesize (a) variance decomposition}
  \end{minipage}\hfill
  \begin{minipage}[t]{0.3\linewidth}
    \centering
    \includegraphics[width=\linewidth]{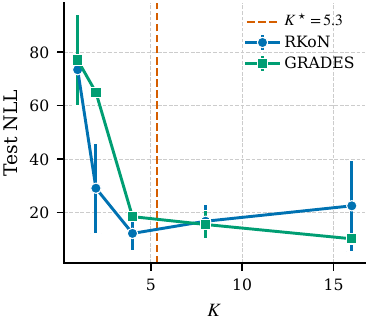}\\
    {\footnotesize (b) cost-aware budget \(K^\star\)}
  \end{minipage}\hfill
  \begin{minipage}[t]{0.3\linewidth}
    \centering
    \includegraphics[width=\linewidth]{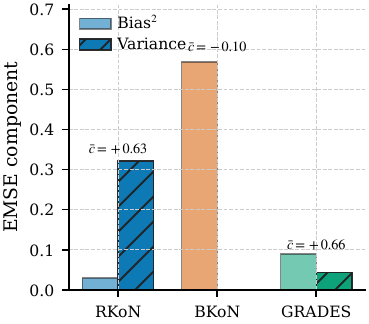}\\
    {\footnotesize (c) selector EMSE}
  \end{minipage}
  \caption{\textbf{Controlled validation under exact ground truth.}
  (a) Empirical variance follows the predicted
  \(V_{\mathbf x}/N_p+V_{\mathbf y\mid\mathbf x}/(N_pK)\) law and approaches the prompt floor.
  (b) Under a fixed prompt-plus-response budget, test NLL is minimized near the predicted
  \(K^\star=\sqrt{(V_{\mathbf y\mid\mathbf x}/V_{\mathbf x})(C_p/C_r)}\).
  (c) Selector EMSE decomposes into bias and variance: RKoN is unbiased but pays mode-missing variance, DBKoN reduces variance through coverage, and BKoN exhibits persistent reward-mode bias and negative gradient alignment.}
  \label{fig:sim_validation}
  \vspace{-10pt}
\end{figure}

\noindent \textbf{Results.}
Figure~\ref{fig:sim_validation}(a) validates Lemma~\ref{lemma:variance_decomposition}: increasing \(K\) reduces only the within-prompt component while leaving the prompt floor unchanged.
Figure~\ref{fig:sim_validation}(b) validates Prop.~\ref{prop:budget_allocation}: under a fixed budget \(\mathcal B=N_p(C_p+KC_r)\), too small \(K\) leaves response-level uncertainty unchecked, while too large \(K\) reduces the number of prompts enough to inflate the prompt floor, yielding the predicted U-shaped curve.
Finally, Figure~\ref{fig:sim_validation}(c) validates the target-aware view of \S\ref{sec:which_keep}: RKoN preserves the distributional target, BKoN shifts toward reward-dominant modes, and DBKoN partially recovers the variance reduction of stratification through coverage-aware selection.

Together, these experiments isolate the three core mechanisms of the theory:
the variance decomposition, the cost-aware optimum, and selector-induced target shift.
Additional controlled variants, including a discrete-token transformer and a BKoN candidate-pool sweep, are deferred to App.~\ref{app:simulation}.

\subsection{Large Language Model Evaluation}
\label{sec:exp_llm}

We now test whether the same mechanisms appear in realistic fine-tuning.
Unless stated otherwise, we fine-tune \texttt{Llama-3.1-8B-Instruct} with LoRA while varying only the response multiplicity \(K\) and the selector.
Training details, hyperparameters, and robustness checks are deferred to App.~\ref{app:setup}.

\noindent \textbf{Datasets and evaluation.}
We evaluate on four multi-response benchmarks:
Gold~\citep{coste2023reward} (instruction following),
Nectar~\citep{starling2023} (dialogue),
UltraFeedback~\citep{cui2023ultrafeedback} (preference-aligned instruction following),
and CodeContests~\citep{li2022competition} (competitive programming).
Gold and CodeContests provide the richest response multiplicity, while Nectar and UltraFeedback are comparatively prompt-floor dominated.

For each dataset we train on multiple responses per prompt and evaluate held-out \emph{reference loss}, i.e., the mean token-level cross-entropy on unseen reference responses from the dataset.
Lower reference loss indicates better fit to the underlying conditional response distribution.
We additionally report:
(i) \emph{reference coverage}, defined as the mean maximum cosine similarity between held-out references and the model's \(16\) sampled generations using \texttt{all-MiniLM-L6-v2};
(ii) \emph{semantic diversity}, measured as average pairwise embedding distance among sampled generations;
and (iii) reward and Pass@\(\!K\) only as target-shift diagnostics.
Complete dataset statistics and evaluation details are provided in App.~\ref{app:setup}.

\noindent \textbf{Does increasing \(K\) help in real LLM fine-tuning?}
The cleanest test of MRT is unbiased Random-\(K\)-of-\(N\) (RKoN), since it changes response multiplicity without changing the target distribution.
Figure~\ref{fig:cross_family} sweeps \(K\in\{1,2,4,8,16\}\) under RKoN on Gold across seven instruction-tuned backbones spanning the Llama-3, Qwen-3.5, and Gemma-4 families.

\begin{figure}[t]
  \centering
  \includegraphics[width=0.7\linewidth]{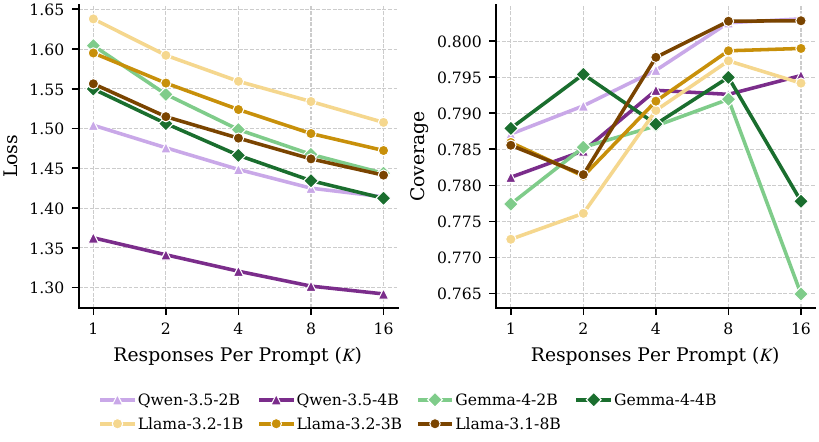}
  \caption{\textbf{Increasing response multiplicity improves distributional fit across model families.}
  Under unbiased RKoN on Gold, reference loss decreases monotonically with \(K\) for all seven backbones.
  Coverage generally improves and then saturates, matching the diminishing-returns regime predicted once the prompt floor dominates.}
  \label{fig:cross_family}
  \vspace{-10pt}
\end{figure}

Reference loss decreases monotonically with \(K\) for every model, with best-\(K\) improvements of \(0.071\)--\(0.161\) NLL over standard single-response training.
Coverage likewise improves and then saturates around \(K=8\)--\(16\), exactly the regime predicted once the response-level component has been averaged down.
The same qualitative pattern appears across all three model families.

\noindent \textbf{Does the variance ratio predict where MRT helps?}
The variance--budget law makes a stronger prediction than simply ``larger \(K\) helps'':
datasets with larger response-level variability relative to prompt-level variability should benefit more from MRT.

Table~\ref{tab:variance_predicts_gain} tests this directly.
We estimate \(V_{\mathbf x}\) and \(V_{\mathbf y\mid\mathbf x}\) by decomposing per-response NLL under the fixed base \texttt{Llama-3.1-8B-Instruct} checkpoint, compute the implied
\(K^\star_c=\sqrt{(V_{\mathbf y\mid\mathbf x}/V_{\mathbf x})c}\),
and compare these quantities against the observed improvement of R2oN over R1oN.

\begin{table}[t]
  \centering
  \small
  \caption{\textbf{Variance ratios predict where MRT helps most.}
  \(N\) denotes the number of candidate responses per prompt.
  Datasets with larger response-level variability relative to prompt-level variability exhibit substantially larger MRT gains under the same R1oN \(\to\) R2oN intervention.}
  \setlength{\tabcolsep}{5.5pt}
  \renewcommand{\arraystretch}{1.08}
  \begin{tabular}{lcccccc}
    \toprule
    Dataset & Domain & \(N\) &
    \(V_{\mathbf y\mid\mathbf x}/V_{\mathbf x}\) &
    \(K^\star_{10}\) &
    R1oN \(\to\) R2oN &
    MRT gain \\
    \midrule
    CodeContests
      & Code
      & 64
      & 1.102
      & 3.32
      & \(0.792\to0.712\)
      & \(0.080\) \;(\(10.1\%\)) \\
    Gold
      & Instruction
      & 64
      & 0.751
      & 2.74
      & \(1.556\to1.515\)
      & \(0.041\) \;(\(2.6\%\)) \\
    UltraFeedback
      & Preference
      & 4
      & 0.136
      & 1.17
      & \(1.052\to1.038\)
      & \(0.014\) \;(\(1.3\%\)) \\
    Nectar
      & Dialogue
      & 7
      & 0.122
      & 1.10
      & \(0.909\to0.893\)
      & \(0.016\) \;(\(1.8\%\)) \\
    \bottomrule
  \end{tabular}
  \label{tab:variance_predicts_gain}
  \vspace{-6pt}
\end{table}

The trend is clear.
CodeContests exhibits the largest response-level share and correspondingly the largest MRT gain.
Gold follows the same pattern at smaller scale.
Nectar and UltraFeedback are strongly prompt-floor dominated and show much smaller improvements.
Thus the variance ratio correctly predicts the operating regime in which MRT is most valuable.
The same matched intervention also improves over a single best-response baseline: R2oN achieves lower reference loss than both R1oN and B1oN on all four datasets (Table~\ref{tab:fixed_rows}).
On CodeContests, R2oN additionally gives the best Pass@16, indicating that the distributional gain can transfer to an execution-based metric.

\noindent \textbf{Does selection change the target?}
We next fix the dataset and vary only the selector.
Figure~\ref{fig:gold_grouped_bars} reports the full Gold grid across RKoN, BKoN, DKoN, and DBKoN.

\begin{figure}[t]
\centering
\includegraphics[width=\textwidth]{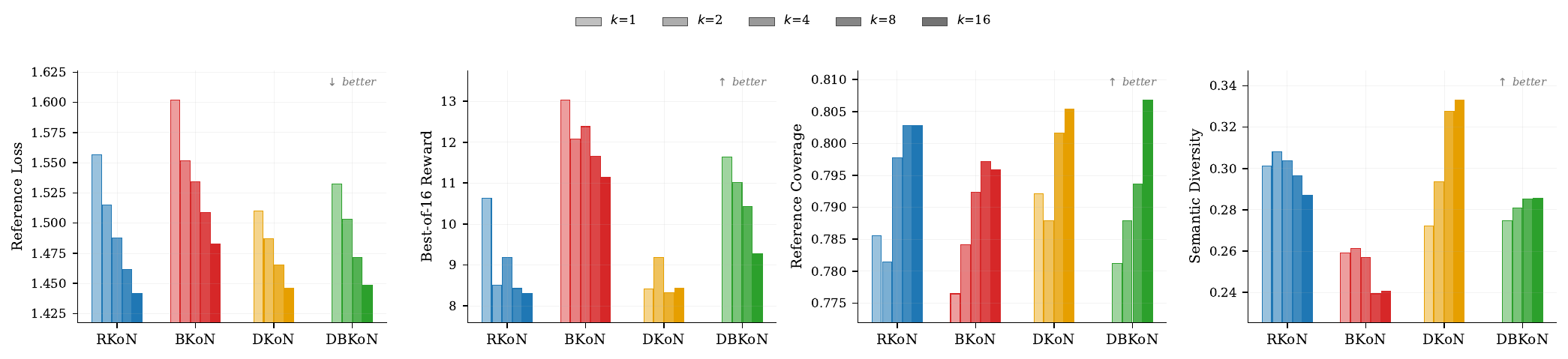}
\caption{\textbf{Selection changes the target.}
Within a selector, increasing \(K\) generally improves distributional metrics.
Across selectors, the target-aware decomposition emerges clearly:
RKoN best fits references,
BKoN maximizes reward but collapses diversity,
DKoN maximizes diversity,
and DBKoN provides the strongest reference coverage through coverage-aware selection.}
\label{fig:gold_grouped_bars}
\vspace{-10pt}
\end{figure}

The within-selector trend validates the role of \(K\): increasing response multiplicity generally lowers reference loss and improves coverage.
The across-selector trend validates the target-aware analysis of \S\ref{sec:which_keep}:
RKoN gives the best reference likelihood,
BKoN achieves the highest reward but the worst diversity and likelihood,
DKoN maximizes diversity,
and DBKoN gives the strongest reference coverage by balancing quality and coverage.
Thus selectors do not merely affect optimization efficiency; they alter which distribution the model is trained toward.

\noindent \textbf{Implicit MRT through prompt redundancy.}
Finally, we test the implicit-MRT prediction from \S\ref{sec:when_mrt}.
For this experiment we use \textsc{Mosaic-1k}, a controlled benchmark containing four paraphrases and four valid responses per base item, allowing prompt multiplicity and response multiplicity to be varied independently under fixed supervision budget.

\begin{table}[t]
  \centering
  \setlength{\tabcolsep}{8pt}
  \caption{\textbf{Prompt redundancy acts like MRT only when it exposes distinct responses.}
  At equal supervised-pair budget, paraphrase redundancy paired with distinct responses matches explicit MRT, while paraphrases paired with near-duplicate responses are substantially weaker.}
  \renewcommand{\arraystretch}{1.08}
  \begin{tabular}{lcc}
    \toprule
    Method & Loss \(\downarrow\) & Coverage \(\uparrow\) \\
    \midrule
    SRT \,(\(1{\times}1\))                       & 1.691 & 0.707 \\
    MRT \,(\(1{\times}4\))                       & 1.622 & 0.713 \\
    Implicit MRT, Diverse\,(\(4{\times}1\))      & 1.616 & 0.712 \\
    Implicit MRT, Similar\,(\(4{\times}1\))      & 1.630 & 0.714 \\
    Upper Bound \,(\(4{\times}4\))               & 1.566 & 0.715 \\
    \bottomrule
  \end{tabular}
  \label{tab:mosaic_implicit}
  \vspace{-10pt}
\end{table}

The key comparison is between the two \(4{\times}1\) conditions.
Both expose the model to four paraphrased prompts, but only the Diverse condition pairs them with distinct valid responses.
Diverse matches explicit MRT almost exactly, whereas Similar lags behind.
This isolates the mechanism behind implicit MRT:
prompt redundancy substitutes for explicit response multiplicity only when paraphrases expose additional conditional modes.

The experiments consistently support the same picture across controlled simulation and real fine-tuning.
Increasing \(K\) reduces response-level uncertainty but exhibits diminishing returns once the prompt floor dominates.
The variance ratio predicts where MRT is most valuable.
Selectors determine the training target itself:
unbiased selection improves distributional fit, reward-only selection collapses toward upper-tail modes, and coverage-aware selection interpolates between the two.

\section{Conclusion}
\label{sec:conclusion}

This paper argues that the value of post-training supervision depends not only on scale, but on how supervision is allocated across sources of uncertainty. Prompts and responses play different statistical roles: new prompts reduce uncertainty about the input distribution, while additional responses reduce uncertainty about the conditional output distribution. This distinction yields a variance--budget law for response multiplicity, and offers guidance on how MRT is most useful when within-prompt response variation is large, prompts are costly or non-redundant, and additional responses are cheap.

The resulting prescription is simple. Estimate the variance ratio \(V_{\mathbf y\mid\mathbf x}/V_{\mathbf x}\) on a proxy model, choose the response multiplicity according to the resulting operating regime, and use a selector whose target matches the downstream objective. RKoN preserves the candidate distribution and isolates the distributional benefit of MRT; BKoN optimizes a reward-tilted target and can collapse toward upper-tail modes; coverage-aware selectors trade reward concentration against mode coverage. Thus the number of retained responses controls how much conditional information is used, while the selector controls which distribution that information represents.

Across controlled simulations and LLM fine-tuning experiments, we find the same pattern: MRT gives the largest gains in high response-diversity, low prompt-redundancy regimes, with diminishing returns once the prompt-level floor dominates. The same framework also explains implicit MRT, where redundant prompts can substitute for explicit response multiplicity when they expose distinct conditional modes. Overall, MRT is valuable not because it adds more tokens, but because it exposes more of the conditional structure the model is trying to learn.

\bibliographystyle{plainnat}
\bibliography{refs}

\newpage
\appendix

\section{Extended Related Work}
\label{sec:app_related}

\paragraph{Response multiplicity in post-training and inference.}
Modern LLM pipelines already exploit multiple responses per prompt.
Instruction tuning, rejection-sampled fine-tuning, reward-model filtering, RLHF-style pipelines, and iterative self-training often generate, rank, or reuse multiple candidate completions during training~\citep{bai2022constitutional,touvron2023llama,dubey2024llama,gulcehre2023reinforced,dong2023raft,yuan2023rrhf,zelikman2022star}.
At inference time, pass@$K$, self-consistency, process verifiers, and test-time scaling similarly improve performance by sampling multiple candidate solutions before selecting or aggregating them~\citep{chen2021evaluating,wang2023selfconsistency,lightman2023lets,snell2024scaling,brown2024monkeys,wu2024inference}.
MRT studies the corresponding training-time data-allocation question: when a prompt admits multiple plausible responses, how many should be retained for fine-tuning, and by what selection rule?

\paragraph{Best-of-\(N\), reward overoptimization, and distributional coverage.}
Best-of-\(N\) selection is effective when the target is reward maximization, but it changes the training distribution.
Prior work studies best-of-\(N\) alignment policies, reward overoptimization, reward hacking, and broader limitations of RLHF-style objectives~\citep{beirami2024theoretical,gao2023scaling,coste2023reward,eisenstein2024helping,casper2023open}.
Our analysis gives a training-time counterpart: under a reward-tail dominance condition, reward-only selection can concentrate on reward-dominant modes and leave other valid modes uncovered.
This matters for supervised fine-tuning under cross-entropy, a strictly proper scoring rule~\citep{gneiting2007strictly}, where the distributional target is not simply the highest-reward completion.

\paragraph{Instruction data curation and response allocation.}
Instruction-tuning research has largely focused on which prompts to collect, synthesize, filter, or reweight~\citep{wei2022finetuned,wang2023selfinstruct,zhou2024lima,xu2024magpie,lu2024instag,liu2024deita,song2024scaling,sun2024principle}.
MRT asks an orthogonal question: given a prompt, how many responses should be kept?
Our variance-budget law makes the interaction explicit by separating prompt-level and response-level uncertainty.
It also explains why massive or highly redundant corpora may partially mimic MRT through near-equivalent prompts, while curated non-redundant datasets can benefit more from explicit response multiplicity.

\paragraph{Hierarchical generalization, variance reduction, and diversity selection.}
Classical generalization analyses usually assume i.i.d. examples~\citep{shalevshwartz2014understanding,mohri2018foundations}, while multi-task and hierarchical Rademacher analyses study shared structure across related tasks~\citep{baxter2000model,maurer2016benefit,tripuraneni2021provable,du2021fewshot}.
MRT induces a different hierarchy: prompts are sampled across tasks, and responses are conditionally sampled within prompts.
Our analysis adapts variance-sensitive and conditional Rademacher tools to this response-stratified setting.
For response selection, GRADES builds on the long line of submodular facility-location and diversity-aware subset selection methods~\citep{nemhauser1978analysis,minoux1978accelerated,lin2011class,kirchhoff2014submodularity,wei2015submodularity,sener2018coreset,xia2024less,agarwal2024delift}, but applies them within each prompt's candidate pool to balance quality with coverage over valid response modes.
\section{Theoretical Appendix}
\label{app:theory}

\subsection{Preliminaries and Concentration Tools}
\label{app:prelim}

\paragraph{Notation recap.}
Throughout, prompts $\mathbf x_1,\dots,\mathbf x_{N_p}$ are drawn i.i.d.\ from a marginal
$P(\mathbf x)$, and for each prompt $\mathbf x_i$ we draw $K$ responses
$\mathbf y_{i,1},\dots,\mathbf y_{i,K}$ i.i.d.\ from the conditional
$P(\mathbf y\mid\mathbf x_i)$.
The per-example loss $\ell(\theta;\mathbf x,\mathbf y)$ is assumed bounded,
$\ell\in[0,B]$ (Remark~\ref{remark:bounded_loss}; see also
Remark~\ref{remark:clipped_loss_app}).
The MRT empirical risk is
\[
    \widehat{\mathcal L}_{\mathrm{MRT}}(\theta)
    =
    \frac{1}{N_p}\sum_{i=1}^{N_p}
    \frac{1}{K}\sum_{j=1}^{K}
    \ell(\theta;\mathbf x_i,\mathbf y_{ij}),
\]
and we write $g_\theta(\mathbf x)\coloneqq\mathbb E_{\mathbf y\mid\mathbf x}\ell(\theta;\mathbf x,\mathbf y)$
for the prompt-level conditional risk, so that
$\mathcal L(\theta)=\mathbb E_{\mathbf x}g_\theta(\mathbf x)$.
Recall the variance components
$V_{\mathbf x}(\theta)=\operatorname{Var}_{\mathbf x}[g_\theta(\mathbf x)]$ and
$V_{\mathbf y\mid\mathbf x}(\theta)=\mathbb E_{\mathbf x}\!\bigl[\operatorname{Var}_{\mathbf y\mid\mathbf x}\ell\bigr]$.
Note that $g_\theta(\mathbf x)\in[0,B]$ as a conditional expectation of $\ell\in[0,B]$.
We write $\mathcal G=\{\mathbf x\mapsto g_\theta(\mathbf x):\theta\in\Theta\}$ for the
prompt-level loss class and define the prompt-level Rademacher complexity
\[
    \mathfrak R_{N_p}(\mathcal G)
    =\mathbb E_{X,\sigma}\!\left[\sup_{\theta\in\Theta}\frac{1}{N_p}\sum_{i=1}^{N_p}\sigma_i\,g_\theta(\mathbf x_i)\right],
\]
where $\sigma_i\in\{\pm 1\}$ are i.i.d.\ Rademacher signs independent of $X$. The
conditional response-level complexity $\mathfrak R_{N_pK}(\mathcal F\mid X)$ is as
defined in the main text (Section~\ref{sec:how_much}).

\begin{lemma}[McDiarmid's inequality \citep{mcdiarmid1989method}]
\label{lemma:mcdiarmid}
Let $X_1,\dots,X_m$ be independent random variables, and let
$f:\mathcal X^m\to\mathbb R$ satisfy the bounded-differences property
\[
    \sup_{x_1,\dots,x_m,x_i'}
    \bigl|f(x_1,\dots,x_i,\dots,x_m)-f(x_1,\dots,x_i',\dots,x_m)\bigr|\le c_i,
    \qquad i=1,\dots,m.
\]
Then for every $\epsilon>0$,
\[
    \mathbb P\!\bigl(f(X_1,\dots,X_m)-\mathbb E[f]\ge\epsilon\bigr)
    \le
    \exp\!\left(-\frac{2\epsilon^2}{\sum_{i=1}^m c_i^2}\right).
\]
\end{lemma}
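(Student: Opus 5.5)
McDiarmid's inequality is a classical result, so I would not derive it from scratch. I would follow the standard martingale (Doob decomposition) route. Define the Doob martingale $Z_k=\mathbb E[f(X_1,\dots,X_m)\mid X_1,\dots,X_k]$ for $k=0,\dots,m$, so that $Z_0=\mathbb E[f]$ and $Z_m=f(X_1,\dots,X_m)$. Write the deviation as a telescoping sum of increments $D_k=Z_k-Z_{k-1}$. Then bound the moment generating function of $\sum_k D_k$ one increment at a time by conditioning, and finish with a Chernoff bound.

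The key step is to show that, conditional on $X_1,\dots,X_{k-1}$, each increment $D_k$ lies in an interval $[L_k,U_k]$ of length at most $c_k$. Here independence is essential. Because $X_k,\dots,X_m$ are independent of the past, we can write $Z_k=\int f(X_1,\dots,X_k,x_{k+1},\dots,x_m)\,dP(x_{k+1},\dots,x_m)$, an integral against a product measure that does not depend on $X_k$. Set
\[
U_k=\sup_{x}\mathbb E[f\mid X_{1:k-1},X_k=x]-Z_{k-1},\qquad L_k=\inf_{x}\mathbb E[f\mid X_{1:k-1},X_k=x]-Z_{k-1}.
\]
Both are measurable with respect to the past. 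Their difference is at most $c_k$: the two conditional expectations integrate, against the same product measure, functions that differ only in the $k$-th coordinate, and the bounded-differences property controls that difference pointwise. In addition, $\mathbb E[D_k\mid X_{1:k-1}]=0$.

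Next I would apply Hoeffding's lemma conditionally. A mean-zero variable supported on an interval of length $c$ satisfies $\mathbb E[e^{\lambda D}]\le e^{\lambda^2c^2/8}$. Applying this with $\lambda>0$ and peeling off the increments from $k=m$ down to $k=1$ via the tower property gives
\[
\mathbb E\bigl[e^{\lambda(f-\mathbb E f)}\bigr]\le\exp\!\Bigl(\tfrac{\lambda^2}{8}\textstyle\sum_k c_k^2\Bigr).
\]
By Markov's inequality, $\mathbb P(f-\mathbb Ef\ge\epsilon)\le\exp(-\lambda\epsilon+\lambda^2\sum c_k^2/8)$. Optimizing at $\lambda=4\epsilon/\sum c_k^2$ yields the stated bound $\exp(-2\epsilon^2/\sum c_k^2)$.

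I expect the main technical care to lie in the interval step: the range of $D_k$ must be controlled by a width $c_k$ whose endpoints are determined by the past. A naive bound gives only $|D_k|\le c_k$, i.e.\ an interval of width $2c_k$, which loses a factor of 4 in the exponent. Reaching the sharp constant requires the sup/inf construction above together with independence, and, if needed, Hoeffding's lemma proved via convexity of $e^{\lambda x}$ and a second-order bound on the log-MGF. Measurability subtleties can be handled by assuming $f$ is measurable and bounded, which holds in every application in this paper, since the losses lie in $[0,B]$.
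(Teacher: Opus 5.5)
The paper gives no proof of this lemma; it simply quotes it from McDiarmid's 1989 survey. Your Doob-martingale argument is the standard proof of that result, and you carry it out correctly: each increment $D_k$ is mean-zero and confined to a past-determined interval of width $c_k$ rather than $2c_k$ (which is what yields the constant $2$ in the exponent), you apply Hoeffding's lemma conditionally, and you finish with a Chernoff bound at $\lambda=4\epsilon/\sum_i c_i^2$. One simplification: you never need $U_k$ and $L_k$ themselves to be measurable, since for each fixed realization of $X_{1:k-1}$ the conditional law of $D_k$ is that of $\mathbb E[f\mid X_{1:k-1},X_k=x]-Z_{k-1}$ with $x$ drawn independently from the law of $X_k$, so Hoeffding's lemma can be applied pointwise in the past (measurability of the conditional MGF then follows from Fubini).
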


\begin{lemma}[Lifting conditional tail bounds]
\label{lemma:lift}
Let $E$ be an event and $X$ a random variable. If
$\mathbb P(E\mid X)\ge 1-\delta$ almost surely, then $\mathbb P(E)\ge 1-\delta$.
\end{lemma}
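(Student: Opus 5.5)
The claim follows from the definition of conditional probability together with the tower property of expectation. Write $\mathbb P(E\mid X)=\mathbb E[\mathbf 1_E\mid X]$, which is a $\sigma(X)$-measurable random variable. Taking expectations and applying the law of total expectation gives
\[
\mathbb P(E)=\mathbb E[\mathbf 1_E]=\mathbb E\bigl[\mathbb E[\mathbf 1_E\mid X]\bigr]=\mathbb E\bigl[\mathbb P(E\mid X)\bigr].
\]
The hypothesis says $\mathbb P(E\mid X)\ge 1-\delta$ almost surely. Monotonicity of expectation then yields $\mathbb E[\mathbb P(E\mid X)]\ge 1-\delta$, and the conclusion follows.

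The only point requiring care is that the bound holds almost surely rather than pointwise. This causes no difficulty, because expectations are insensitive to null sets.

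In the intended application, the conditional statement typically comes from McDiarmid's inequality (Lemma~\ref{lemma:mcdiarmid}) applied to the responses $Y$ with the prompt set $X$ held fixed. In that setting $\delta$ must not depend on $X$, which holds because the bounded-difference constants depend only on $B$, $N_p$ and $K$.

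There is also a measure-theoretic subtlety: the event $E$ must be jointly measurable in $(X,Y)$, so that a regular conditional probability exists. This is standard on Polish spaces, which covers the token sequences considered here, and I would simply state it as an assumption.
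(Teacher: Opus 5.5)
Your proof is correct and is essentially the paper's argument; the paper writes $\mathbb P(E^c)=\mathbb E[\mathbb P(E^c\mid X)]\le\delta$, which is the complement form of your tower-property-plus-monotonicity step. Your closing measure-theoretic caveat is unnecessary, though harmless: $\mathbb P(E\mid X)=\mathbb E[\mathbf 1_E\mid X]$ exists for any measurable event $E$ by Radon--Nikodym, so you need neither a regular conditional probability nor a Polish-space assumption.
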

\begin{proof}
$\mathbb P(E^c)=\mathbb E[\mathbb P(E^c\mid X)]\le\delta$.
\end{proof}

\begin{remark}[Bounded-loss surrogate, formalizing Remark~\ref{remark:bounded_loss}]
\label{remark:clipped_loss_app}
Cross-entropy is unbounded, so the bounded-loss assumption is enforced through a
truncation. Fix a per-token clip level $B_{\mathrm{tok}}>0$ and define
\[
    \tilde\ell_t(\theta;\mathbf x,y_{<t})
    \coloneqq
    \min\!\bigl(-\log\pi_\theta(y_t\mid\mathbf x,y_{<t}),\;B_{\mathrm{tok}}\bigr),
    \qquad
    \tilde\ell(\theta;\mathbf x,\mathbf y)=\sum_{t=1}^T\tilde\ell_t,
\]
so that $\tilde\ell\in[0,T B_{\mathrm{tok}}]$ with $B\coloneqq T B_{\mathrm{tok}}$.
The associated approximation gap is bounded by
$\mathbb E\!\bigl[\,T\,(\ell_t-B_{\mathrm{tok}})^+\bigr]$, which decays whenever the
token negative log-likelihood has a sufficiently light upper tail. For typical
fine-tuning checkpoints this gap is empirically negligible at $B_{\mathrm{tok}}$ in the
high single digits, and our proofs below are stated for $\tilde\ell$ but read
identically for $\ell$ once truncation is applied.
\end{remark}

\begin{assumption}[Loss-class capacity]
\label{assum:capacity}
The loss class $\mathcal F=\{(\mathbf x,\mathbf y)\mapsto\ell(\theta;\mathbf x,\mathbf y):\theta\in\Theta\}$
satisfies one of the following standard capacity-control conditions:
\begin{enumerate}[label=(\roman*),leftmargin=1.6em]
\item \emph{Finite VC dimension.} $\mathcal F$ has VC subgraph dimension
$d_{\mathrm{VC}}<\infty$.
\item \emph{Lipschitz parametrization.} $\theta\mapsto\ell(\theta;\cdot)$ is
$L$-Lipschitz on a parameter set of effective dimension $d_{\mathrm{eff}}$ with bounded
diameter (e.g., LoRA adapters of rank $r$ with bounded Frobenius norm).
\item \emph{Dudley integrability.} The covering numbers
$\mathcal N(\varepsilon,\mathcal F,L_2(\hat P))$ satisfy
$\int_0^B\sqrt{\log\mathcal N(\varepsilon,\mathcal F,L_2(\hat P))}\,d\varepsilon<\infty$
for any empirical measure $\hat P$.
\end{enumerate}
Any of (i)--(iii) is sufficient for the conditional Rademacher complexity to satisfy
$\mathfrak R_{N_pK}(\mathcal F\mid X)=O\!\bigl(\sqrt{d_{\mathrm{eff}}/(N_pK)}\bigr)$ via
Dudley's entropy integral.
\end{assumption}

\subsection{Proof of Lemma~\ref{lemma:variance_decomposition} (Variance Decomposition)}
\label{app:variance_proof}

\begin{proof}
By the law of total variance applied to $\widehat{\mathcal L}_{\mathrm{MRT}}(\theta)$ with
respect to the prompt randomness $X=(\mathbf x_1,\dots,\mathbf x_{N_p})$,
\[
\operatorname{Var}\!\bigl[\widehat{\mathcal L}_{\mathrm{MRT}}(\theta)\bigr]
=\operatorname{Var}_X\!\bigl[\mathbb E_{Y\mid X}\widehat{\mathcal L}_{\mathrm{MRT}}\bigr]
+\mathbb E_X\!\bigl[\operatorname{Var}_{Y\mid X}\widehat{\mathcal L}_{\mathrm{MRT}}\bigr].
\]

\textit{First term.} Conditional on $X$, the inner average over the $K$ responses is
$g_\theta(\mathbf x_i)$ in expectation, so
$\mathbb E_{Y\mid X}\widehat{\mathcal L}_{\mathrm{MRT}}=\frac{1}{N_p}\sum_i g_\theta(\mathbf x_i)$.
Since the prompts are i.i.d.,
\[
    \operatorname{Var}_X\!\left[\frac{1}{N_p}\sum_{i=1}^{N_p}g_\theta(\mathbf x_i)\right]
    =\frac{1}{N_p}\operatorname{Var}_{\mathbf x}[g_\theta(\mathbf x)]
    =\frac{V_{\mathbf x}}{N_p}.
\]

\textit{Second term.} Conditional on $X$, the responses $\{\mathbf y_{ij}\}$ are independent
across $(i,j)$, so
\[
    \operatorname{Var}_{Y\mid X}\widehat{\mathcal L}_{\mathrm{MRT}}
    =\frac{1}{N_p^2}\sum_{i=1}^{N_p}\operatorname{Var}_{Y_i\mid\mathbf x_i}\!\left[\frac{1}{K}\sum_{j=1}^K\ell(\theta;\mathbf x_i,\mathbf y_{ij})\right]
    =\frac{1}{N_p^2 K}\sum_{i=1}^{N_p}\operatorname{Var}_{\mathbf y\mid\mathbf x_i}\ell.
\]
Taking the expectation over $X$,
\[
    \mathbb E_X\!\bigl[\operatorname{Var}_{Y\mid X}\widehat{\mathcal L}_{\mathrm{MRT}}\bigr]
    =\frac{1}{N_p K}\,\mathbb E_{\mathbf x}\!\bigl[\operatorname{Var}_{\mathbf y\mid\mathbf x}\ell\bigr]
    =\frac{V_{\mathbf y\mid\mathbf x}}{N_p K}.
\]
Summing the two contributions yields the claim.
\end{proof}

\noindent\emph{Why a $V_{\mathbf x}/N_p$ floor is irreducible.}
Prompts are sampled marginally i.i.d.\ (giving the $1/N_p$ factor on $V_{\mathbf x}$),
while responses are only \emph{conditionally} i.i.d.\ given prompts (giving the additional
$1/K$ factor on $V_{\mathbf y\mid\mathbf x}$). Had the responses been marginally i.i.d.,
both terms would scale as $1/(N_pK)$ and the floor would disappear; it is precisely the
hierarchical structure of MRT data that creates the irreducible prompt-level term.

\subsection{Proof of Proposition~\ref{prop:budget_allocation} (Cost-Aware Optimal Multiplicity)}
\label{app:budget_proof}

\begin{proof}
The budget constraint $\mathcal B=N_p(C_p+K C_r)$ gives $N_p=\mathcal B/(C_p+K C_r)$.
Substituting into the variance from Lemma~\ref{lemma:variance_decomposition},
\[
    V(K)
    \coloneqq
    \operatorname{Var}\!\bigl[\widehat{\mathcal L}_{\mathrm{MRT}}\bigr]
    =\frac{V_{\mathbf x}+V_{\mathbf y\mid\mathbf x}/K}{N_p}
    =\frac{(V_{\mathbf x}+V_{\mathbf y\mid\mathbf x}/K)(C_p+K C_r)}{\mathcal B}.
\]
Expanding,
\[
    V(K)=\frac{1}{\mathcal B}\!\left(V_{\mathbf x}C_p+V_{\mathbf y\mid\mathbf x}C_r
    +V_{\mathbf x}C_r\,K+\frac{V_{\mathbf y\mid\mathbf x}C_p}{K}\right).
\]
The first two terms are constant in $K$; the remaining sum
$V_{\mathbf x}C_r\,K+V_{\mathbf y\mid\mathbf x}C_p/K$ is strictly convex in $K>0$, since it
is a positive linear combination of $K$ and $1/K$.
Setting $dV/dK=0$,
\[
    V_{\mathbf x}C_r-\frac{V_{\mathbf y\mid\mathbf x}C_p}{K^2}=0
    \;\Longleftrightarrow\;
    K^\star=\sqrt{\frac{V_{\mathbf y\mid\mathbf x}}{V_{\mathbf x}}\cdot\frac{C_p}{C_r}}.
\]
By strict convexity, $K^\star$ is the unique global minimizer on $(0,\infty)$.
Comparing with $K=1$, MRT strictly improves over single-response training in the
continuous relaxation iff $K^\star>1$, equivalently
$V_{\mathbf y\mid\mathbf x}/V_{\mathbf x}>C_r/C_p$.
\end{proof}

\paragraph{Integer-$K$ remark.}
The continuous threshold above is necessary but not sufficient for the integer choice
$K{=}2$ to strictly improve over $K{=}1$. Only the $K$-dependent terms differ between the
two, so
\[
    V(2)-V(1)
    =\frac{1}{\mathcal B}\!\left(V_{\mathbf x}C_r-\frac{V_{\mathbf y\mid\mathbf x}C_p}{2}\right),
\]
which gives
\[
    V(2)<V(1)
    \;\Longleftrightarrow\;
    \frac{V_{\mathbf y\mid\mathbf x}}{V_{\mathbf x}}>\frac{2C_r}{C_p}.
\]
The factor $2$ is sharp: at the boundary, $V(2)=V(1)$.

\paragraph{Estimating $K^\star$ in practice.}
The two variance components can be estimated from a small validation pool. Take
$N_{\mathrm{val}}\in[50,100]$ prompts and generate $K_{\mathrm{val}}\in[8,16]$ responses
per prompt at a reference checkpoint. Let $\hat g_i=\frac{1}{K_{\mathrm{val}}}\sum_j\ell(\theta;\mathbf x_i,\mathbf y_{ij})$ and
$\hat s_i^2=\frac{1}{K_{\mathrm{val}}-1}\sum_j(\ell-\hat g_i)^2$. Then
\[
    \widehat V_{\mathbf y\mid\mathbf x}
    =\frac{1}{N_{\mathrm{val}}}\sum_i\hat s_i^2,
    \qquad
    \widehat V_{\mathbf x}
    =\max\!\left(0,\;
        \frac{1}{N_{\mathrm{val}}-1}\sum_i\bigl(\hat g_i-\bar{\hat g}\bigr)^2-\frac{\widehat V_{\mathbf y\mid\mathbf x}}{K_{\mathrm{val}}}
    \right),
\]
where the bias correction subtracts the within-prompt contribution that leaks into the
empirical between-prompt variance. Plugging into $K^\star$ gives a simple plug-in rule
that is robust enough to be used as a budget diagnostic.

\subsection{Proof of Theorem~\ref{theorem:bernstein_bound} (Bernstein Bound)}
\label{app:bernstein_proof}

\begin{proof}
Define per-prompt deviations
\[
    Z_i=\hat\ell(\theta;\mathbf x_i)-\mathcal L(\theta),
    \qquad
    \hat\ell(\theta;\mathbf x_i)=\frac{1}{K}\sum_{j=1}^K\ell(\theta;\mathbf x_i,\mathbf y_{ij}),
\]
so that $\widehat{\mathcal L}_{\mathrm{MRT}}-\mathcal L=\frac{1}{N_p}\sum_i Z_i$.
The $Z_i$ are independent: their randomness comes from the disjoint blocks
$(\mathbf x_i,\mathbf y_{i,1:K})$, which are i.i.d.\ across $i$. They are also bounded,
$|Z_i|\le B$ since $\ell\in[0,B]$, and have mean zero. By
Lemma~\ref{lemma:variance_decomposition} applied to a single block,
\[
    \sigma^2\coloneqq\operatorname{Var}(Z_i)=V_{\mathbf x}+\frac{V_{\mathbf y\mid\mathbf x}}{K}.
\]
Bernstein's inequality applied to $\bar Z=\frac{1}{N_p}\sum_i Z_i$ yields, for every $\epsilon>0$,
\[
    \mathbb P(|\bar Z|\ge\epsilon)
    \le 2\exp\!\left(-\frac{N_p\epsilon^2}{2\sigma^2+\tfrac{2}{3}B\epsilon}\right).
\]
Setting the right-hand side equal to $\delta$ gives the quadratic
$N_p\epsilon^2-\tfrac{2B\ln(2/\delta)}{3}\epsilon-2\sigma^2\ln(2/\delta)=0$, whose
positive root is
\[
    \epsilon=\frac{\tfrac{2B\ln(2/\delta)}{3}+\sqrt{\bigl(\tfrac{2B\ln(2/\delta)}{3}\bigr)^2+8N_p\sigma^2\ln(2/\delta)}}{2N_p}.
\]
Using $\sqrt{a^2+b}\le a+\sqrt b$ for $a,b\ge 0$,
\[
    \epsilon\le\sqrt{\frac{2\sigma^2\ln(2/\delta)}{N_p}}+\frac{2B\ln(2/\delta)}{3N_p}
    =\sqrt{\frac{2\bigl(V_{\mathbf x}+V_{\mathbf y\mid\mathbf x}/K\bigr)\ln(2/\delta)}{N_p}}+\frac{2B\ln(2/\delta)}{3N_p},
\]
proving the claim.
\end{proof}

\paragraph{Clipped-loss corollary.}
Applying the same proof to the clipped surrogate
$\tilde\ell\in[0,B]$ of Remark~\ref{remark:clipped_loss_app} gives the bound for
$\tilde{\mathcal L}$ and $\widehat{\tilde{\mathcal L}}_{\mathrm{MRT}}$. Translating back
to unclipped cross-entropy adds a $\theta$-dependent additive bias
$\Delta_{\mathrm{clip}}(\theta)\coloneqq|\mathcal L(\theta)-\tilde{\mathcal L}(\theta)|\le\mathbb E\!\bigl[\sum_{t=1}^T(\ell_t(\theta)-B_{\mathrm{tok}})^+\bigr]$,
which is small whenever the per-token negative-log-likelihood has a sufficiently light
upper tail at the operating checkpoint. The same comment applies verbatim to the
hierarchical Rademacher bound below, with $\sup_\theta\Delta_{\mathrm{clip}}(\theta)$
replacing the per-$\theta$ bias on the uniform side.

\paragraph{Tightening factor.}
Single-response training corresponds to $K{=}1$ in the bracketed standard deviation,
giving $V_{\mathbf x}+V_{\mathbf y\mid\mathbf x}$. MRT replaces this by
$V_{\mathbf x}+V_{\mathbf y\mid\mathbf x}/K$, so the leading term of the bound is tightened
by the factor
\[
    \sqrt{\frac{V_{\mathbf x}+V_{\mathbf y\mid\mathbf x}}{V_{\mathbf x}+V_{\mathbf y\mid\mathbf x}/K}},
\]
which can be substantial whenever $V_{\mathbf y\mid\mathbf x}\gg V_{\mathbf x}$---precisely
the high within-prompt-variability regime in which MRT is most valuable. As $K\to\infty$,
the factor saturates at $\sqrt{1+V_{\mathbf y\mid\mathbf x}/V_{\mathbf x}}$, recovering the
prompt-floor interpretation.

\subsubsection{Generator Mismatch}
\label{app:generator_mismatch}

In practice, responses are drawn from a generator $q(\mathbf y\mid\mathbf x)$ rather than
from the target conditional $P(\mathbf y\mid\mathbf x)$. We quantify the resulting bias.

\begin{proposition}[Generator mismatch]
\label{prop:generator_mismatch}
Suppose responses are drawn from $q$ with
$\mathrm{TV}\!\bigl(q(\cdot\mid\mathbf x),P(\cdot\mid\mathbf x)\bigr)\le\delta_{\mathrm{TV}}$
uniformly in $\mathbf x$, and let $\mathcal L_P$ denote the population risk under the
target $P$. Then
\[
    \bigl|\mathbb E_q[\widehat{\mathcal L}_{\mathrm{MRT}}]-\mathcal L_P\bigr|
    \le B\,\delta_{\mathrm{TV}}.
\]
\end{proposition}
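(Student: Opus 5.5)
The approach is to reduce the statement to a single-prompt comparison of expectations and then apply the standard total-variation bound for bounded functions. First, I would use linearity of expectation together with the hierarchical sampling structure. Prompts $\mathbf x_i$ are drawn i.i.d.\ from $P(\mathbf x)$, and responses are drawn conditionally from $q(\cdot\mid\mathbf x_i)$. Hence
\[
\mathbb E_q\bigl[\widehat{\mathcal L}_{\mathrm{MRT}}(\theta)\bigr]
=\frac{1}{N_p}\sum_{i=1}^{N_p}\frac1K\sum_{j=1}^K \mathbb E_{\mathbf x_i}\mathbb E_{\mathbf y\sim q(\cdot\mid\mathbf x_i)}\ell(\theta;\mathbf x_i,\mathbf y)
=\mathbb E_{\mathbf x}\bigl[g^q_\theta(\mathbf x)\bigr],
\]
where $g^q_\theta(\mathbf x)\coloneqq\mathbb E_{\mathbf y\sim q(\cdot\mid\mathbf x)}\ell$. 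Likewise $\mathcal L_P(\theta)=\mathbb E_{\mathbf x}[g_\theta(\mathbf x)]$, with $g_\theta$ taken under $P(\cdot\mid\mathbf x)$ as in the preliminaries. The prompt marginal is the same on both sides, since only the response generator is mismatched. The quantity to bound is therefore $\bigl|\mathbb E_{\mathbf x}[g^q_\theta(\mathbf x)-g_\theta(\mathbf x)]\bigr|\le\mathbb E_{\mathbf x}\bigl|g^q_\theta(\mathbf x)-g_\theta(\mathbf x)\bigr|$. Note that $K$ and $N_p$ drop out entirely.

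Second, for a fixed prompt $\mathbf x$, I would bound $|\mathbb E_q f-\mathbb E_P f|$ for $f=\ell(\theta;\mathbf x,\cdot)\in[0,B]$. The key move is to center $f$ at $B/2$. Because $q$ and $P$ are both probability measures, $\mathbb E_q f-\mathbb E_P f=\int (f-B/2)\,d(q-P)$. Since $|f-B/2|\le B/2$, this is at most $(B/2)\,\|q-P\|_1=B\cdot\mathrm{TV}(q,P)\le B\,\delta_{\mathrm{TV}}$, using the convention $\mathrm{TV}=\tfrac12\|\cdot\|_1$. Equivalently, one can use the variational form $\mathrm{TV}=\sup_{A}|q(A)-P(A)|$ together with the layer-cake representation $f=\int_0^B \mathbf 1\{f>t\}\,dt$, which yields the same bound $B\delta_{\mathrm{TV}}$.

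Third, since the TV bound holds uniformly in $\mathbf x$, integrating over $\mathbf x\sim P(\mathbf x)$ gives the claim.

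The only point requiring care is step two: the constant. A naive bound $\int |f|\,|dq-dP|\le B\|q-P\|_1=2B\,\mathrm{TV}$ loses a factor of $2$. The centering trick, or the layer-cake argument, is exactly what recovers the stated constant $B$. Measurability of $\mathbf x\mapsto g^q_\theta(\mathbf x)$ is routine.
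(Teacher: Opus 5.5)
Your proof is correct and follows the paper's argument: bound $|\mathbb E_q\ell-\mathbb E_P\ell|\le B\,\mathrm{TV}\bigl(q(\cdot\mid\mathbf x),P(\cdot\mid\mathbf x)\bigr)$ for each prompt, then marginalize over $\mathbf x$ using $\mathbb E_q[\widehat{\mathcal L}_{\mathrm{MRT}}]=\mathbb E_{\mathbf x}\mathbb E_q\ell$. Your centering (or layer-cake) step just makes explicit the ``variational definition of total variation'' that the paper cites to obtain the constant $B$ rather than $2B$.
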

\begin{proof}
For any fixed $\theta$ and prompt $\mathbf x$, the bound $\ell\in[0,B]$ and the
variational definition of total variation give
\[
    \bigl|\mathbb E_q\ell-\mathbb E_P\ell\bigr|
    \le B\cdot\mathrm{TV}\!\bigl(q(\cdot\mid\mathbf x),P(\cdot\mid\mathbf x)\bigr)
    \le B\,\delta_{\mathrm{TV}}.
\]
Marginalizing over $\mathbf x$ and observing that
$\mathbb E_q[\widehat{\mathcal L}_{\mathrm{MRT}}]=\mathbb E_{\mathbf x}\mathbb E_q\ell$
yields the claim.
\end{proof}

The same Bernstein concentration around $\mathbb E_q[\widehat{\mathcal L}_{\mathrm{MRT}}]$
goes through unchanged with $q$-distributed responses; converting to the target $P$ only
adds the additive bias above.

\subsection{Proof of Theorem~\ref{theorem:rademacher_bound} (Hierarchical Uniform Bound)}
\label{app:rademacher_proof}

\noindent\emph{Proof strategy.}
The challenge is to handle two layers of randomness---$N_p$ i.i.d.\ prompts and $K$
conditionally i.i.d.\ responses per prompt---while keeping $\sup_\theta$ outside both
sums so the response-level Rademacher term benefits from all $N_pK$ samples jointly. The
proof decomposes the gap into a prompt-level term (controlled by McDiarmid over prompts)
and a response-level term (conditional symmetrization, then McDiarmid unconditionally),
with a third McDiarmid step to control fluctuations of the conditional Rademacher
complexity itself.

\begin{proof}
Define the prompt-level empirical risk
$\bar{\mathcal L}(\theta)=\frac{1}{N_p}\sum_{i=1}^{N_p}g_\theta(\mathbf x_i)$.
Decompose
\[
    \sup_\theta\!\bigl(\mathcal L(\theta)-\widehat{\mathcal L}_{\mathrm{MRT}}(\theta)\bigr)
    \le
    \underbrace{\sup_\theta\!\bigl(\mathcal L(\theta)-\bar{\mathcal L}(\theta)\bigr)}_{A(X)}
    +
    \underbrace{\sup_\theta\!\bigl(\bar{\mathcal L}(\theta)-\widehat{\mathcal L}_{\mathrm{MRT}}(\theta)\bigr)}_{C(X,Y)}.
\]

\textit{Step 1: Prompt-level term $A(X)$.}
$A(X)$ is a standard empirical process over $N_p$ i.i.d.\ prompts with
$g_\theta\in[0,B]$. Changing one prompt changes $A(X)$ by at most $B/N_p$, so by
McDiarmid (Lemma~\ref{lemma:mcdiarmid}) with probability $\ge 1-\delta/4$,
\[
    A(X)\le\mathbb E[A(X)]+B\sqrt{\frac{\ln(4/\delta)}{2N_p}}.
\]
Standard symmetrization over $\mathcal G$ \citep{bartlett2002rademacher} gives
$\mathbb E[A(X)]\le 2\,\mathfrak R_{N_p}(\mathcal G)$.

\textit{Step 2: Response-level term $C(X,Y)$.}
Conditioning on $X$, $C(X,Y)$ depends on the $N_pK$ independent responses
$\mathbf y_{ij}$. Changing a single $\mathbf y_{ij}$ alters $C$ by at most $B/(N_pK)$.
McDiarmid conditionally on $X$ gives, with probability $\ge 1-\delta/4$ (lifted to
unconditional probability via Lemma~\ref{lemma:lift}),
\[
    C(X,Y)\le\mathbb E[C\mid X]+B\sqrt{\frac{\ln(4/\delta)}{2N_pK}}.
\]
The conditional expectation is bounded by symmetrization with ghost responses
$\mathbf y_{ij}'\sim P(\cdot\mid\mathbf x_i)$ drawn independently of $\mathbf y_{ij}$.
Writing $\Phi_\theta(Y)=\frac{1}{N_pK}\sum_{i,j}\ell(\theta;\mathbf x_i,\mathbf y_{ij})$
and using $\bar{\mathcal L}(\theta)=\mathbb E_{Y'\mid X}\Phi_\theta(Y')$,
Jensen and the standard ghost-sample trick give
\begin{align*}
    \mathbb E[C\mid X]
    &=\mathbb E_{Y\mid X}\!\left[\sup_\theta\bigl(\mathbb E_{Y'\mid X}\Phi_\theta(Y')-\Phi_\theta(Y)\bigr)\right]\\
    &\le\mathbb E_{Y,Y'\mid X}\!\left[\sup_\theta\bigl(\Phi_\theta(Y')-\Phi_\theta(Y)\bigr)\right]\\
    &=\mathbb E_{\sigma,Y,Y'\mid X}\!\left[\sup_\theta\frac{1}{N_pK}\sum_{i,j}\sigma_{ij}\bigl(\ell(\theta;\mathbf x_i,\mathbf y_{ij}')-\ell(\theta;\mathbf x_i,\mathbf y_{ij})\bigr)\right]\\
    &\le 2\,\mathbb E_{\sigma,Y\mid X}\!\left[\sup_\theta\frac{1}{N_pK}\sum_{i,j}\sigma_{ij}\,\ell(\theta;\mathbf x_i,\mathbf y_{ij})\right]
    =2\,\mathfrak R_{N_pK}(\mathcal F\mid X),
\end{align*}
where the third line uses the symmetry of $(\mathbf y_{ij},\mathbf y_{ij}')$ exchanged
under $\sigma_{ij}=\pm 1$ (the $\mathbf y_{ij}$ are conditionally independent given
$X$, which is all that the standard symmetrization argument requires;
exchangeability \emph{across} prompts is not needed).
Crucially, the supremum is \emph{not} pushed inside the prompt sum, so the Rademacher
variables $\sigma_{ij}$ symmetrize all $N_pK$ samples jointly.

\textit{Step 3: Concentration of the conditional Rademacher complexity.}
Let $\Psi(X)=2\,\mathfrak R_{N_pK}(\mathcal F\mid X)$. Use the natural coupling that, on
replacing $\mathbf x_i$ by $\mathbf x_i'$, resamples
$\mathbf y_{i,1:K}\sim P(\cdot\mid\mathbf x_i')$ while keeping all other responses
fixed. This replaces $K$ terms (each in $[0,B]$) inside the Rademacher sum of $N_pK$
terms, so $|\Psi(X)-\Psi(X')|\le 2K\cdot B/(N_pK)=2B/N_p$ pointwise.
McDiarmid unconditionally over the $N_p$ prompts gives, with probability
$\ge 1-\delta/4$,
\[
    \Psi(X)\le\mathbb E_X[\Psi(X)]+2B\sqrt{\frac{\ln(4/\delta)}{2N_p}}.
\]

\textit{Combining.}
A union bound over the three events ($\le 3\delta/4<\delta$) yields
\begin{align*}
    \sup_\theta\!\bigl(\mathcal L(\theta)-\widehat{\mathcal L}_{\mathrm{MRT}}(\theta)\bigr)
    &\le 2\,\mathfrak R_{N_p}(\mathcal G)+2\,\mathbb E_X[\mathfrak R_{N_pK}(\mathcal F\mid X)]\\
    &\quad+B\sqrt{\frac{\ln(4/\delta)}{2N_p}}+2B\sqrt{\frac{\ln(4/\delta)}{2N_p}}+B\sqrt{\frac{\ln(4/\delta)}{2N_pK}}\\
    &=2\,\mathfrak R_{N_p}(\mathcal G)+2\,\mathbb E_X[\mathfrak R_{N_pK}(\mathcal F\mid X)]
    +3B\sqrt{\frac{\ln(4/\delta)}{2N_p}}+B\sqrt{\frac{\ln(4/\delta)}{2N_pK}},
\end{align*}
which is the stated bound (with $\ln(4/\delta)$ rewritten as $\log(4/\delta)$).
\end{proof}

\paragraph{Why not push $\sup_\theta$ inside the prompt sum?}
A simpler argument would push $\sup_\theta$ inside $\frac{1}{N_p}\sum_i$ and bound each
per-prompt term by a conditional Rademacher complexity that depends only on $K$. This
would yield a response-level term $\mathfrak R_K(\mathcal F)=O(1/\sqrt K)$ with
\emph{no} dependence on $N_p$, so the bound would not vanish as $N_p\to\infty$ with $K$
fixed. Our three-step argument keeps the supremum outside, yielding
$\mathfrak R_{N_pK}(\mathcal F\mid X)=O(1/\sqrt{N_pK})$ under
Assumption~\ref{assum:capacity}. The price is the third McDiarmid step and a slightly
larger constant ($3B$ vs.\ $B$) on the prompt-level concentration.

\paragraph{Dudley-integral scaling.}
Under Assumption~\ref{assum:capacity}, Dudley's entropy integral gives
\[
    \mathfrak R_{N_pK}(\mathcal F\mid X)
    \le\inf_{\alpha>0}\!\left(4\alpha+\frac{12}{\sqrt{N_pK}}\int_\alpha^B\sqrt{\log\mathcal N(\varepsilon,\mathcal F,L_2(\hat P))}\,d\varepsilon\right).
\]
For VC subgraph dimension $d_{\mathrm{VC}}$ or Lipschitz parametrization with effective
dimension $d_{\mathrm{eff}}$, the integral is $O(\sqrt{d_{\mathrm{eff}}})$, giving
$\mathfrak R_{N_pK}(\mathcal F\mid X)=O\!\bigl(\sqrt{d_{\mathrm{eff}}/(N_pK)}\bigr)$;
likewise $\mathfrak R_{N_p}(\mathcal G)=O\!\bigl(\sqrt{d_{\mathrm{eff}}/N_p}\bigr)$.

\subsection{Implicit MRT through Prompt Redundancy}
\label{app:implicit_mrt}

\begin{remark}[Implicit MRT]
\label{remark:implicit_mrt}
Suppose the prompt corpus partitions into $C\ll N_p$ semantic clusters with roughly
$N_p/C$ near-equivalent prompts each, all sharing the same conditional
$P(\mathbf y\mid\mathbf x)$ up to small variation. Then a single-response training (SRT)
sample of $N_p$ prompts is, within each cluster, statistically indistinguishable from an
MRT sample with effective response multiplicity
$K_{\mathrm{eff}}\approx N_p/C$ collected from a single representative prompt.
\end{remark}

\noindent\emph{Informal calculation.}
Treat the cluster-level prompt as a latent variable $\mathbf x^\star$ with $C$
realizations. For SRT, $\widehat{\mathcal L}=\frac{1}{N_p}\sum_i\ell(\theta;\mathbf x_i,\mathbf y_i)$
with $\mathbf x_i$ uniform over the $C$ clusters. Conditioning on the cluster
assignment, this is a hierarchical estimator with $C$ ``prompts'' and
$\sim N_p/C$ ``responses'' each, so by Lemma~\ref{lemma:variance_decomposition},
\[
    \operatorname{Var}\!\bigl[\widehat{\mathcal L}_{\mathrm{SRT}}\bigr]
    \approx
    \frac{V_{\mathbf x^\star}}{C}
    +
    \frac{V_{\mathbf y\mid\mathbf x^\star}}{C\cdot(N_p/C)}
    =
    \frac{V_{\mathbf x^\star}}{C}+\frac{V_{\mathbf y\mid\mathbf x^\star}}{N_p}.
\]
This interpolates between the two extremes: if clusters are ``tight'' ($C$ small,
$N_p/C$ large), redundancy already realizes the $1/K$ benefit of MRT, and explicit
multiplicity adds little; if clusters are ``wide'' ($C\approx N_p$), every prompt is its
own cluster and explicit MRT is the only way to reduce within-cluster variance. This is
the precise sense in which highly redundant corpora exhibit \emph{implicit MRT}.

\subsection{Target-Aware Selection: Details and Proofs}
\label{app:selection_details}

We provide the formal selector analysis used in
Section~\ref{sec:which_keep} here. Throughout this subsection we fix a prompt
\(\mathbf x\). Let
\[
    g(\mathbf y)
    =
    \nabla_\theta \ell(\theta;\mathbf x,\mathbf y),
    \qquad
    \bar g_P(\mathbf x)
    =
    \mathbb E_{\mathbf y\sim P(\cdot\mid\mathbf x)}[g(\mathbf y)] .
\]
An overgenerated candidate pool is denoted
\[
    \mathcal Y'(\mathbf x)
    =
    \{\mathbf y_1,\ldots,\mathbf y_n\},
\]
where \(n\) is the number of generated candidates and \(K\le n\) is the number retained
for training. A selector \(A\) returns a subset
\(\mathcal S_A(\mathbf x)\subseteq \mathcal Y'(\mathbf x)\) with
\(|\mathcal S_A|=K\), inducing the gradient estimator
\[
    \hat g_A(\mathbf x)
    =
    \frac{1}{K}
    \sum_{\mathbf y\in \mathcal S_A(\mathbf x)}
    g(\mathbf y).
\]
Relative to the distributional target, the selector error decomposes as
\[
    \mathrm{EMSE}_A
    =
    \left\|
        \mathbb E[\hat g_A]-\bar g_P
    \right\|^2
    +
    \operatorname{Tr}\operatorname{Cov}(\hat g_A).
\]
The first term is target mismatch; the second is estimator variance. This decomposition
makes the roles of RKoN, BKoN, and coverage-aware selection explicit: RKoN is unbiased
for the candidate distribution, BKoN changes the target toward reward upper tails, and
diversity-aware selectors attempt to reduce mode-undercoverage variance.

\subsubsection{Random-\texorpdfstring{\(K\)}{K}-of-\texorpdfstring{\(N\)}{n}}
\label{app:rkon_proof}

\begin{proof}[Proof of Proposition~\ref{prop:rkon}]
Let
\[
    \mathbf y_1,\ldots,\mathbf y_n
    \overset{\mathrm{i.i.d.}}{\sim}
    q(\cdot\mid \mathbf x),
\]
and let RKoN choose a subset of \(K\) indices uniformly at random without replacement,
independently of the candidate values. Write
\[
    \bar g_q
    =
    \mathbb E_{\mathbf y\sim q(\cdot\mid\mathbf x)}[g(\mathbf y)],
    \qquad
    \Sigma_q
    =
    \operatorname{Cov}_{\mathbf y\sim q(\cdot\mid\mathbf x)}(g(\mathbf y)).
\]
Condition on any selected index set
\(\mathcal I=\{i_1,\ldots,i_K\}\). Since the index set is independent of the
candidate values and the candidates are i.i.d.,
\[
    g(\mathbf y_{i_1}),\ldots,g(\mathbf y_{i_K})
\]
are independent draws from the same marginal distribution. Therefore
\[
    \mathbb E[\hat g_{\mathrm{RKoN}}\mid \mathcal I]
    =
    \bar g_q,
    \qquad
    \operatorname{Cov}(\hat g_{\mathrm{RKoN}}\mid \mathcal I)
    =
    \frac{1}{K}\Sigma_q .
\]
The conditional mean does not depend on \(\mathcal I\), so the laws of total expectation
and total covariance give
\[
    \mathbb E[\hat g_{\mathrm{RKoN}}]
    =
    \bar g_q,
    \qquad
    \operatorname{Cov}(\hat g_{\mathrm{RKoN}})
    =
    \frac{1}{K}\Sigma_q .
\]
When \(q=P\), this is unbiased for the distributional target
\(\bar g_P\). If \(q\neq P\), RKoN remains unbiased for the candidate generator \(q\);
any remaining bias is generator mismatch, not selector bias.
\end{proof}

\paragraph{Conditional-on-pool interpretation.}
The proposition averages over both candidate generation and the random subset. If the
candidate pool is treated as fixed, RKoN instead estimates the empirical pool mean
\(\frac{1}{n}\sum_{i=1}^n g(\mathbf y_i)\). Conditional on the pool, sampling without
replacement gives the usual finite-population correction. The unconditional statement
above is the relevant one for comparing selectors as stochastic data-retention rules.

\subsubsection{Best-\texorpdfstring{\(K\)}{K}-of-\texorpdfstring{\(N\)}{n} and upper-tail target shift}
\label{app:bkon_bias}

We next give a sufficient condition under which Best-\(K\)-of-\(n\) converges in
\emph{expected gradient} to a reward-tail mode. The statement is deliberately an
expectation-level target-shift result: for fixed \(K\), the selected gradient average
need not converge in probability to a constant because residual within-mode variance
can remain.

Let \(Z\in\{1,\ldots,M\}\) denote the latent response mode, with
\(\mathbb P(Z=m)=p_m>0\). Let \(R(\mathbf y)\) be the reward used for BKoN, and define
the mode-conditional survival functions
\[
    S_m(t)
    =
    \mathbb P(R(\mathbf y)\ge t\mid Z=m).
\]

\begin{proposition}[A sufficient condition for persistent BKoN bias]
\label{prop:bkon_tail_sufficient}
Fix \(K\ge 1\). Suppose there is a reward-dominant mode \(m^\star\) and constants
\(\gamma>0\), \(t_0\) such that for every \(m\neq m^\star\) and all \(t\ge t_0\),
\[
    S_m(t)
    \le
    S_{m^\star}(t)^{1+\gamma}.
\]
Let \(t_n\) be a threshold satisfying
\(S_{m^\star}(t_n)=\log n/n\). Assume \(\|g(\mathbf y)\|\le G\) and the
reward-dominant mode has a tail-stable gradient mean, in the sense that if the
top-\(K\) selected responses all lie in mode \(m^\star\) with rewards at least \(t_n\),
then
\[
    \mathbb E[\hat g_{\mathrm{BKoN}}\mid \text{that event}]
    \to
    \mu_{m^\star}.
\]
A sufficient special case is conditional independence between gradients and rewards in
the upper tail of mode \(m^\star\), with
\(\mathbb E[g(\mathbf y)\mid Z=m^\star]=\mu_{m^\star}\). Then
\[
    \mathbb E[\hat g_{\mathrm{BKoN}}]
    \to
    \mu_{m^\star}.
\]
Consequently, whenever
\[
    \mu_{m^\star}
    \neq
    \bar g
    :=
    \sum_m p_m\mu_m,
\]
BKoN has a non-vanishing asymptotic bias relative to the distributional gradient.
\end{proposition}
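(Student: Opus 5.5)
The plan is to split on a "good event" $E_n$: all top-$K$ candidates (by reward) lie in mode $m^\star$ and have reward at least $t_n$. Bounded gradients give
\[
\bigl\|\mathbb E[\hat g_{\mathrm{BKoN}}]-\mathbb E[\hat g_{\mathrm{BKoN}}\mid E_n]\,\mathbb P(E_n)\bigr\|\le G\,\mathbb P(E_n^c).
\]
Together with the tail-stability hypothesis $\mathbb E[\hat g_{\mathrm{BKoN}}\mid E_n]\to\mu_{m^\star}$, it therefore suffices to show $\mathbb P(E_n^c)\to 0$. The bias conclusion then follows immediately from $\mu_{m^\star}\neq\bar g$.

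To bound $\mathbb P(E_n^c)$, I would split it into two bad events.

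\textbf{(a) Too few mode-$m^\star$ responses above $t_n$.} The number $N^\star$ of candidates in mode $m^\star$ with $R\ge t_n$ is $\mathrm{Binomial}(n,\,p_{m^\star}S_{m^\star}(t_n))$. Its mean is $p_{m^\star}\log n\to\infty$, so a Chernoff bound gives $\mathbb P(N^\star<K)\to 0$ for fixed $K$.

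\textbf{(b) Some other-mode response competes.} This is the event that some candidate from a mode $m\neq m^\star$ has reward at least $t_n$. Since $t_n\to\infty$ (eventually $t_n\ge t_0$, because $S_{m^\star}(t_n)\to 0$), the tail-domination hypothesis and a union bound give
\[
\sum_{m\neq m^\star} n\,p_m S_m(t_n)\le n\,S_{m^\star}(t_n)^{1+\gamma}=n\Bigl(\frac{\log n}{n}\Bigr)^{1+\gamma}=\frac{(\log n)^{1+\gamma}}{n^{\gamma}}\to 0.
\]

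On the complement of (a) and (b), the top-$K$ set contains only mode-$m^\star$ responses with reward at least $t_n$. Here I would fix a tie-breaking convention and make sure ties at exactly $t_n$ are harmless. This complement is exactly $E_n$, which completes the argument.

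The main technical point I expect is the existence of $t_n$ with $S_{m^\star}(t_n)=\log n/n$ exactly, which requires a continuous reward tail. If the tail has atoms, I would instead pick $t_n$ with $S_{m^\star}(t_n)\in[\log n/n,\,2\log n/n]$, and both estimates survive. I would also check that the special case stated in the proposition indeed gives tail stability. Under conditional independence of $g$ and $R$ given $Z=m^\star$, conditioning on which indices land in the top set leaves each selected gradient distributed as $g\mid Z=m^\star$, so the conditional mean equals $\mu_{m^\star}$ exactly. This mirrors the index-independence argument in the proof of Proposition~\ref{prop:rkon}.
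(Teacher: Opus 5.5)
Your proposal is correct and is essentially the paper's argument. It uses a good event on which the top-$K$ candidates all come from mode $m^\star$ above $t_n$, a lower-tail Chernoff bound on the number of such candidates (mean of order $\log n$), a union bound of order $(\log n)^{1+\gamma}/n^{\gamma}$ against competing modes, and bounded gradients plus tail stability to pass to the limit. The differences are cosmetic: you treat that count directly as $\mathrm{Binomial}(n,p_{m^\star}S_{m^\star}(t_n))$ instead of first concentrating the mode counts $N_m$, and you spell out ties, $t_n\ge t_0$ eventually, and the conditional-independence special case, which the paper leaves implicit. Two small corrections: the step $t_n\ge t_0$ rests on $S_{m^\star}(t_0)>0$, not on $t_n\to\infty$; and the complement of (a) and (b) is contained in $E_n$ rather than equal to it, which is all you need.
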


\begin{proof}
Let \(N_m\) be the number of candidates from mode \(m\) among the \(n\) generated
responses. By Chernoff bounds,
\[
    \mathbb P\!\left(N_m\notin[p_m n/2,2p_m n]\right)
    \le
    2\exp(-p_m n/8),
\]
so the event
\[
    \mathcal E_n
    =
    \{p_m n/2\le N_m\le 2p_m n\ \text{for all }m\}
\]
has probability tending to one.

On \(\mathcal E_n\), the number of reward-dominant candidates above \(t_n\) is
\[
    T_{m^\star}
    =
    \sum_{i:Z_i=m^\star}
    \mathbf 1\{R(\mathbf y_i)\ge t_n\}.
\]
Conditionally on \(N_{m^\star}\ge p_{m^\star}n/2\),
\[
    \mathbb E[T_{m^\star}\mid N_{m^\star}]
    =
    N_{m^\star}S_{m^\star}(t_n)
    \ge
    \frac{p_{m^\star}}{2}\log n.
\]
A lower-tail Chernoff bound therefore gives
\[
    \mathbb P(T_{m^\star}<K\mid N_{m^\star})
    \to 0
\]
for fixed \(K\).

For any non-dominant mode \(m\neq m^\star\), again on \(\mathcal E_n\),
\[
    \mathbb P\!\left(
        \max_{i:Z_i=m} R(\mathbf y_i)\ge t_n
        \,\middle|\,
        N_m
    \right)
    \le
    N_m S_m(t_n)
    \le
    2p_m n\,S_{m^\star}(t_n)^{1+\gamma}
    =
    O\!\left(
        \frac{(\log n)^{1+\gamma}}{n^\gamma}
    \right)
    \to 0.
\]
A union bound over the \(M-1\) non-dominant modes shows that, with probability tending
to one, at least \(K\) candidates from mode \(m^\star\) exceed \(t_n\), while no
candidate from any other mode exceeds \(t_n\). Hence the top-\(K\) reward candidates all
come from mode \(m^\star\). Let \(A_n\) denote this event. We have
\[
    \mathbb P(A_n)\to 1.
\]

By boundedness, \(\|\hat g_{\mathrm{BKoN}}\|\le G\). Therefore
\[
    \left\|
        \mathbb E[\hat g_{\mathrm{BKoN}}]
        -
        \mathbb E[\hat g_{\mathrm{BKoN}}\mid A_n]
    \right\|
    =
    \left\|
        \mathbb P(A_n^c)
        \left(
            \mathbb E[\hat g_{\mathrm{BKoN}}\mid A_n^c]
            -
            \mathbb E[\hat g_{\mathrm{BKoN}}\mid A_n]
        \right)
    \right\|
    \le
    2G\,\mathbb P(A_n^c)
    \to 0.
\]
The tail-stable-gradient assumption gives
\(\mathbb E[\hat g_{\mathrm{BKoN}}\mid A_n]\to \mu_{m^\star}\), and hence
\[
    \mathbb E[\hat g_{\mathrm{BKoN}}]\to \mu_{m^\star}.
\]
The claimed persistent bias follows whenever \(\mu_{m^\star}\neq \bar g\).
\end{proof}

\paragraph{Interpretation.}
The proposition is not a novelty claim about reward overoptimization itself. Its role is
to identify the target mismatch induced by using BKoN as a \emph{data-retention rule}
inside distributional supervised fine-tuning. Increasing \(n\) does not average this
bias away; under upper-tail dominance, it concentrates the retained data on the
reward-dominant mode. Mean reward ordering alone is insufficient for the result: a
lower-mean mode with a heavier upper tail can dominate the top-\(K\) set.

\subsubsection{Coverage, stratification, and variance reduction}
\label{app:stratification}

RKoN preserves the candidate distribution, but with small \(K\) it may miss modes.
The ideal coverage-aware baseline is oracle stratification over latent modes.

\begin{assumption}[Clusterable responses]
\label{assum:cluster}
For a fixed prompt, responses decompose into modes
\(\{\mathcal Y_m\}_{m=1}^M\), with mode probabilities \(p_m\), gradient means
\[
    \mu_m
    =
    \mathbb E[g(\mathbf y)\mid \mathbf y\in\mathcal Y_m],
\]
and within-mode covariance matrices
\[
    \Sigma_m
    =
    \operatorname{Cov}(g(\mathbf y)\mid \mathbf y\in\mathcal Y_m).
\]
\end{assumption}

\begin{theorem}[Stratification removes between-mode variance]
\label{thm:stratified}
Under Assumption~\ref{assum:cluster}, suppose an oracle stratified selector draws
\(K_m=Kp_m\) responses from mode \(m\), with \(K_m\) integral and
\(\sum_m K_m=K\). Then
\[
    \operatorname{Cov}(\hat g_{\mathrm{strat}})
    =
    \frac{1}{K}\sum_m p_m\Sigma_m
    \preceq
    \operatorname{Cov}(\hat g_{\mathrm{RKoN}}),
\]
and
\[
    \operatorname{Cov}(\hat g_{\mathrm{RKoN}})
    -
    \operatorname{Cov}(\hat g_{\mathrm{strat}})
    =
    \frac{1}{K}
    \sum_m
    p_m(\mu_m-\bar g)(\mu_m-\bar g)^\top,
    \qquad
    \bar g=\sum_m p_m\mu_m .
\]
The gap is exactly the between-mode covariance.
\end{theorem}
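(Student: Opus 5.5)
The plan is to compute both covariances explicitly and compare them through the law of total covariance. I will assume, as in Proposition~\ref{prop:rkon}, that RKoN's $K$ retained responses are i.i.d.\ draws from the full conditional. The stratified selector draws $K_m$ i.i.d.\ responses from each mode $m$, independently across modes.

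\textbf{RKoN covariance.} Proposition~\ref{prop:rkon} with $q=P$ gives $\operatorname{Cov}(\hat g_{\mathrm{RKoN}})=\frac1K\Sigma$, where $\Sigma$ is the full covariance of $g$. Apply the law of total covariance with respect to the latent mode $Z$, which has $\mathbb P(Z=m)=p_m$:
\[
    \Sigma
    =
    \mathbb E[\operatorname{Cov}(g\mid Z)]+\operatorname{Cov}(\mathbb E[g\mid Z])
    =
    \sum_m p_m\Sigma_m+\sum_m p_m(\mu_m-\bar g)(\mu_m-\bar g)^\top .
\]
The second term holds because $\mathbb E[g\mid Z]=\mu_Z$ has mean $\bar g$.

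\textbf{Stratified covariance.} Write the estimator as
\[
    \hat g_{\mathrm{strat}}=\frac1K\sum_m\sum_{k=1}^{K_m} g(\mathbf y_{m,k}).
\]
All $K$ summands are independent, so the covariance is the sum of the individual covariances divided by $K^2$:
\[
    \frac1{K^2}\sum_m K_m\Sigma_m=\frac1{K^2}\sum_m Kp_m\Sigma_m=\frac1K\sum_m p_m\Sigma_m .
\]
Along the way I will also note unbiasedness: the mean is $\frac1K\sum_m K_m\mu_m=\bar g$. This confirms the comparison is between two unbiased estimators, although the statement only concerns covariances.

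\textbf{Conclusion.} Subtracting the two expressions gives exactly $\frac1K\sum_m p_m(\mu_m-\bar g)(\mu_m-\bar g)^\top$. This is a nonnegative combination of rank-one PSD matrices, so it is PSD, which yields the Loewner ordering $\preceq$.

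The only step needing care is the modeling of the selectors, since the theorem does not fully specify their sampling mechanics. Concretely, the stratified draws must be independent within and across modes, and RKoN must be interpreted unconditionally with i.i.d.\ candidates, as in Proposition~\ref{prop:rkon}, rather than conditionally on a fixed pool. I will state these assumptions explicitly. Once they are fixed, the rest is the total-covariance identity.
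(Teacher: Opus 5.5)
Your proposal is correct and follows the paper's proof essentially step for step. Both arguments apply the law of total covariance over the latent mode to the RKoN term, use independence across strata to get $\frac{1}{K}\sum_m p_m\Sigma_m$, and subtract to exhibit the PSD between-mode term; your unweighted $\frac{1}{K}\sum_m\sum_k$ estimator is the same as the paper's $\sum_m p_m\cdot\frac{1}{K_m}\sum_j$ form because $K_m=Kp_m$, and stating the independence and unconditional-RKoN assumptions explicitly is a reasonable tightening of what the paper leaves implicit.
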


\begin{proof}
For RKoN, \(K\) independent draws from the conditional distribution give
\[
    \operatorname{Cov}(\hat g_{\mathrm{RKoN}})
    =
    \frac{1}{K}\operatorname{Cov}(g(\mathbf y)).
\]
By the law of total covariance over the latent mode,
\[
    \operatorname{Cov}(g(\mathbf y))
    =
    \sum_m p_m\Sigma_m
    +
    \sum_m p_m(\mu_m-\bar g)(\mu_m-\bar g)^\top .
\]
For proportional stratification, write
\[
    \hat g_{\mathrm{strat}}
    =
    \sum_m
    p_m
    \left(
        \frac{1}{K_m}
        \sum_{j=1}^{K_m} g_{mj}
    \right),
\]
where \(g_{mj}\) are independent draws from mode \(m\). The strata are sampled
independently, so
\[
    \operatorname{Cov}(\hat g_{\mathrm{strat}})
    =
    \sum_m
    p_m^2
    \frac{\Sigma_m}{K_m}
    =
    \sum_m
    p_m^2
    \frac{\Sigma_m}{Kp_m}
    =
    \frac{1}{K}
    \sum_m p_m\Sigma_m .
\]
Subtracting yields
\[
    \operatorname{Cov}(\hat g_{\mathrm{RKoN}})
    -
    \operatorname{Cov}(\hat g_{\mathrm{strat}})
    =
    \frac{1}{K}
    \sum_m p_m(\mu_m-\bar g)(\mu_m-\bar g)^\top
    \succeq 0,
\]
since it is a nonnegative sum of outer products.
\end{proof}

\paragraph{Integer allocation.}
The theorem states the clean continuous-allocation result. If \(K_m\) is obtained by
largest-remainder rounding, write
\[
    K_m=Kp_m+\delta_m,
    \qquad
    |\delta_m|\le 1,
    \qquad
    \sum_m\delta_m=0.
\]
For any matrix norm \(\|\cdot\|\), if \(Kp_m\ge 2\) for all active modes, then
\[
    \left\|
        \frac{p_m^2}{K_m}\Sigma_m
        -
        \frac{p_m}{K}\Sigma_m
    \right\|
    =
    \left|
        \frac{p_m^2}{Kp_m+\delta_m}
        -
        \frac{p_m}{K}
    \right|
    \|\Sigma_m\|
    \le
    \frac{2}{K^2}\|\Sigma_m\|.
\]
Thus the total rounding slack is
\[
    O\!\left(
        \frac{1}{K^2}
        \sum_m \|\Sigma_m\|
    \right)
\]
whenever all active modes receive at least a constant expected allocation. If some
\(Kp_m<1\), exact proportional stratification is impossible; the PSD comparison should
then be interpreted as the continuous-allocation idealization.

\subsubsection{GRADES: a greedy quality--coverage selector}
\label{app:grades}

True response modes are unobserved, so oracle stratification is unavailable. GRADES
implements a facility-location surrogate that trades off quality and coverage within a
candidate pool. We first convert raw reward scores into nonnegative quality weights.
Given raw scores \(r_j=r(\mathbf y_j)\), define
\[
    q_j
    =
    \varepsilon
    +
    \frac{
        r_j-\min_i r_i
    }{
        \max_i r_i-\min_i r_i+\varepsilon
    },
    \qquad
    q_j>0,
\]
where \(\varepsilon>0\) is small. This normalization is important: GRADES uses
fractional powers \(q_j^\alpha\), so the quality scores must be nonnegative. If all
raw scores are equal, all candidates receive the same quality up to \(\varepsilon\).

Let \(\mathcal U\subseteq \mathcal Y'\) denote the pool on which GRADES is run. In the
experiments \(\mathcal U\) may be the top-\(K_{\max}\) candidates by quality; setting
\(K_{\max}=n\) recovers the full candidate pool. The guarantee below is always with
respect to the chosen pool \(\mathcal U\). Let
\[
    S_{ij}
    =
    \operatorname{sim}(\mathbf y_i,\mathbf y_j)
    \in[0,1]
\]
be a nonnegative similarity kernel, and let \(\alpha\ge 0\) control the quality--coverage
tradeoff. GRADES maximizes
\[
    F(\mathcal S)
    =
    \sum_{\mathbf y_i\in\mathcal U}
    q_i^\alpha
    \max_{\mathbf y_j\in\mathcal S}
    \left[
        q_j^\alpha S_{ij}
    \right],
    \qquad
    |\mathcal S|\le K,
\]
with the convention that the maximum over the empty set is \(0\). The case \(\alpha=0\)
is pure diversity selection; we refer to this special case as DKoN. Positive
\(\alpha\) gives Diverse-Best-\(K\)-of-\(n\) selection, implemented by GRADES.

\begin{proposition}[Submodularity of GRADES]
\label{prop:grades}
If \(q_j\ge 0\) and \(S_{ij}\ge 0\), then \(F\) is normalized, monotone, and submodular.
Therefore greedy selection returns a set \(\mathcal S_{\mathrm{greedy}}\) satisfying
\[
    F(\mathcal S_{\mathrm{greedy}})
    \ge
    (1-1/e)
    \max_{|\mathcal S|\le K} F(\mathcal S).
\]
\end{proposition}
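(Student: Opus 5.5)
The plan is to recognize $F$ as a nonnegatively weighted sum of facility-location functions and then invoke the Nemhauser--Wolsey--Fisher greedy guarantee \citep{nemhauser1978analysis}. For each $\mathbf y_i\in\mathcal U$ define the per-point coverage function
\[
    f_i(\mathcal S)=\max_{\mathbf y_j\in\mathcal S} w_{ij},
    \qquad
    w_{ij}\coloneqq q_j^\alpha S_{ij}\ge 0,
\]
with $f_i(\emptyset)=0$. Then $F(\mathcal S)=\sum_{\mathbf y_i\in\mathcal U} q_i^\alpha f_i(\mathcal S)$. The weights $q_i^\alpha$ are nonnegative because $q_i>0$ (hence $q_i^\alpha$ is well defined even for fractional $\alpha$, and equals $1$ when $\alpha=0$). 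Since nonnegative combinations preserve normalization, monotonicity and submodularity, it suffices to verify these three properties for each $f_i$.

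\textbf{Normalization} holds by the empty-set convention. For \textbf{monotonicity}, observe that for $\mathcal S\subseteq\mathcal T$ the maximum over the larger set is at least the maximum over the smaller one. When $\mathcal S=\emptyset$ we need $\max_{\mathcal T} w_{ij}\ge 0$, which is exactly where nonnegativity of $w_{ij}$ is used. For \textbf{submodularity}, fix $\mathcal S\subseteq\mathcal T$ and $\mathbf y_k\notin\mathcal T$, and write $a=f_i(\mathcal S)\le b=f_i(\mathcal T)$. The marginal gains are
\[
    \Delta_i(\mathbf y_k\mid\mathcal S)=(w_{ik}-a)^+,
    \qquad
    \Delta_i(\mathbf y_k\mid\mathcal T)=(w_{ik}-b)^+.
\]
These formulas again rely on $w_{ik}\ge 0$ to handle the empty set, where $a=0$. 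Since $t\mapsto(w_{ik}-t)^+$ is nonincreasing, $\Delta_i(\mathbf y_k\mid\mathcal S)\ge\Delta_i(\mathbf y_k\mid\mathcal T)$, giving diminishing returns. Summing over $i$ with the weights $q_i^\alpha$ transfers all three properties to $F$.

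Finally, apply the classical result that greedy maximization of a normalized, monotone, submodular function under the cardinality constraint $|\mathcal S|\le K$ achieves $(1-(1-1/K)^K)\ge 1-1/e$ of the optimum. The standard argument goes as follows. Let $\mathcal S^\star$ be optimal and $\mathcal S_t$ the greedy set after $t$ steps. Monotonicity and submodularity give
\[
    F(\mathcal S^\star)-F(\mathcal S_t)\le\sum_{\mathbf y\in\mathcal S^\star}\Delta(\mathbf y\mid\mathcal S_t)\le K\,[F(\mathcal S_{t+1})-F(\mathcal S_t)].
\]
Iterating this yields $F(\mathcal S^\star)-F(\mathcal S_K)\le(1-1/K)^K F(\mathcal S^\star)$. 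If $|\mathcal U|<K$, greedy simply takes all of $\mathcal U$, which is optimal by monotonicity.

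The only delicate step is the empty-set case in monotonicity and submodularity. That is precisely why the statement requires $q_j\ge 0$ and $S_{ij}\ge 0$, and why the paper normalizes raw rewards before applying the fractional powers $q_j^\alpha$. Everything else is routine.
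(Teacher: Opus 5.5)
Your proof is correct and follows essentially the same route as the paper. Both write $F=\sum_i q_i^\alpha f_i$ with per-candidate facility-location terms, check normalization and monotonicity, get diminishing returns from the nonincreasing map $t\mapsto(w_{ik}-t)^+$ together with $m_i(\mathcal A)\le m_i(\mathcal B)$, and then invoke the Nemhauser--Wolsey--Fisher greedy guarantee. Your explicit treatment of the empty-set case (the one place $w_{ij}\ge 0$ is genuinely needed) and your self-contained sketch of the $1-(1-1/K)^K$ bound are minor additions that the paper leaves implicit or cites.
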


\begin{proof}
For each fixed candidate \(\mathbf y_i\in\mathcal U\), define
\[
    F_i(\mathcal S)
    =
    q_i^\alpha
    \max_{\mathbf y_j\in\mathcal S}
    \left[
        q_j^\alpha S_{ij}
    \right].
\]
This function is normalized because the maximum over the empty set is defined as zero.
It is monotone because adding elements can only increase a maximum. To verify
submodularity, let \(\mathcal A\subseteq \mathcal B\subseteq\mathcal U\) and
\(\mathbf y_e\notin\mathcal B\). Write
\[
    m_i(\mathcal S)
    =
    \max_{\mathbf y_j\in\mathcal S}
    q_j^\alpha S_{ij}.
\]
Since \(m_i(\mathcal A)\le m_i(\mathcal B)\), the marginal gain from adding
\(\mathbf y_e\) satisfies
\[
    F_i(\mathcal A\cup\{\mathbf y_e\})-F_i(\mathcal A)
    =
    q_i^\alpha
    \left[
        q_e^\alpha S_{ie}-m_i(\mathcal A)
    \right]_+
    \ge
    q_i^\alpha
    \left[
        q_e^\alpha S_{ie}-m_i(\mathcal B)
    \right]_+
    =
    F_i(\mathcal B\cup\{\mathbf y_e\})-F_i(\mathcal B).
\]
Thus each \(F_i\) is submodular. Nonnegative sums of monotone submodular functions are
monotone submodular, so \(F=\sum_i F_i\) has the same properties. The
\((1-1/e)\) approximation follows from the classical greedy guarantee for
cardinality-constrained monotone submodular maximization.
\end{proof}

\begin{algorithm}[t]
\caption{Greedy Reward-Aware DiversE Subsetting (GRADES)}
\label{alg:grades}
\begin{algorithmic}[1]
\REQUIRE Candidate pool \(\mathcal Y'=\{\mathbf y_j\}_{j=1}^n\), raw scores
\(r(\mathbf y_j)\), similarity kernel \(\operatorname{sim}\), filter size
\(K_{\max}\le n\), retained size \(K\le K_{\max}\), exponent \(\alpha\ge0\),
\(\varepsilon>0\).
\ENSURE Selected subset \(\mathcal S\) with \(|\mathcal S|=K\).
\STATE Compute raw scores \(r_j\leftarrow r(\mathbf y_j)\).
\STATE Normalize qualities
\[
    q_j
    \leftarrow
    \varepsilon+
    \frac{r_j-\min_i r_i}{\max_i r_i-\min_i r_i+\varepsilon}.
\]
\STATE Let \(\mathcal U\) be the top-\(K_{\max}\) candidates by \(q_j\).
\STATE Compute pairwise similarities
\(S_{ij}\leftarrow \operatorname{sim}(\mathbf y_i,\mathbf y_j)\in[0,1]\)
for \(\mathbf y_i,\mathbf y_j\in\mathcal U\).
\STATE Initialize \(\mathcal S\leftarrow\emptyset\) and \(c_i\leftarrow0\) for all
\(\mathbf y_i\in\mathcal U\).
\WHILE{\(|\mathcal S|<K\)}
    \FOR{each \(\mathbf y_j\in\mathcal U\setminus\mathcal S\)}
        \STATE
        \[
            \Delta_j
            \leftarrow
            \sum_{\mathbf y_i\in\mathcal U}
            q_i^\alpha
            \left[
                \max\{c_i,q_j^\alpha S_{ij}\}-c_i
            \right].
        \]
    \ENDFOR
    \STATE Choose
    \(\mathbf y_{j^\star}\in\arg\max_{\mathbf y_j\in\mathcal U\setminus\mathcal S}
    \Delta_j\).
    \STATE Update
    \[
        \mathcal S\leftarrow \mathcal S\cup\{\mathbf y_{j^\star}\},
        \qquad
        c_i\leftarrow \max\{c_i,q_{j^\star}^\alpha S_{ij^\star}\}
        \quad
        \forall \mathbf y_i\in\mathcal U.
    \]
\ENDWHILE
\RETURN \(\mathcal S\).
\end{algorithmic}
\end{algorithm}

The greedy guarantee is a guarantee for the facility-location surrogate \(F\), not a
direct population-risk guarantee. Its practical value depends on whether the similarity
kernel resolves task-relevant response modes.

\subsubsection{Mode coverage under block similarity}
\label{app:grades_mode_coverage}

The following stylized result shows why facility-location selection approximates
stratification when similarities reveal the latent modes.

\begin{proposition}[Block-similarity mode coverage]
\label{prop:grades_mode_coverage}
Under Assumption~\ref{assum:cluster}, suppose the candidate pool partitions into
nonempty mode blocks \(\{\mathcal Y'_m\}_{m=1}^M\), and suppose the similarity kernel is
block-constant:
\[
    \operatorname{sim}(\mathbf y,\mathbf z)
    =
    \begin{cases}
        1, & \mathbf y,\mathbf z \text{ are in the same mode},\\
        \tau, & \mathbf y,\mathbf z \text{ are in different modes},
    \end{cases}
    \qquad
    0\le \tau<1.
\]
For \(\alpha=0\), if \(K\ge M\), greedy GRADES selects at least one response from every
mode before selecting a duplicate from any already covered mode.
\end{proposition}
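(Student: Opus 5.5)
With \(\alpha=0\) every \(q_j^\alpha=1\), so \(F(\mathcal S)=\sum_{\mathbf y_i\in\mathcal U}\max_{\mathbf y_j\in\mathcal S}S_{ij}\). We take \(\mathcal U\) to be the full pool partitioned into the blocks \(\mathcal Y'_m\), and write \(n_m=|\mathcal Y'_m|\ge 1\). The plan is to compute the greedy marginal gains \(\Delta_j\) of Algorithm~\ref{alg:grades} in closed form at an arbitrary stage. The key observation is that \(F(\mathcal S)\) depends only on which modes \(\mathcal S\) touches, not on which element represents a mode.

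Let \(C(\mathcal S)\) be the set of modes covered by \(\mathcal S\). If \(\mathcal S\ne\emptyset\), each \(\mathbf y_i\) in a covered mode has \(\max_j S_{ij}=1\). Each \(\mathbf y_i\) in an uncovered mode has value \(\tau\), since every selected element is in a different mode. Hence
\[
F(\mathcal S)=\sum_{m\in C(\mathcal S)}n_m+\tau\sum_{m\notin C(\mathcal S)}n_m,
\]
and \(F(\emptyset)=0\).

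From this formula the marginal gain of a candidate \(\mathbf y_e\) in mode \(m\) splits into two cases. If \(m\in C(\mathcal S)\), adding \(\mathbf y_e\) changes no coverage value, so \(\Delta_e=0\). If \(m\notin C(\mathcal S)\) and \(\mathcal S\ne\emptyset\), the points of mode \(m\) rise from \(\tau\) to \(1\) and nothing else changes, so \(\Delta_e=(1-\tau)n_m>0\) because \(\tau<1\) and \(n_m\ge 1\). At the first step, with \(\mathcal S=\emptyset\), every candidate has gain \(n_m+\tau(|\mathcal U|-n_m)>0\), so no duplicate issue arises yet.

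Now argue by induction on the greedy steps while \(|C(\mathcal S)|<M\). At each such step some uncovered mode remains, and since blocks are nonempty it supplies a candidate with strictly positive gain. Every candidate from a covered mode has gain exactly \(0\). Therefore the argmax is attained only at uncovered-mode candidates, whatever tie-breaking rule is used among maximizers. So each of the first \(M\) selections covers a new mode. Because \(K\ge M\), the loop runs at least \(M\) times, and all modes are covered before any duplicate is picked.

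The only point needing care is the strict positivity \((1-\tau)n_m>0\) against an exact zero for duplicates. This is what makes the conclusion independent of tie-breaking. It rests on \(\tau<1\) and nonempty blocks, and I expect no further obstacle.
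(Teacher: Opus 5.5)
Your proof is correct and is essentially the paper's argument. With \(\alpha=0\), a candidate from an uncovered mode \(m\) has marginal gain \((1-\tau)|\mathcal Y'_m|>0\), while a candidate from an already covered mode has gain exactly \(0\), so greedy covers all \(M\) modes before duplicating regardless of tie-breaking. The paper states the uncovered-mode gain only as a lower bound, which absorbs the empty-set first step that you treat separately.

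Your closed-form expression for \(F(\mathcal S)\) is a more explicit version of the same reasoning. So is your choice of \(\mathcal U\) as the full pool, which the paper's proof also tacitly needs, since a top-\(K_{\max}\) filter could drop an entire mode.
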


\begin{proof}
When \(\alpha=0\), all quality weights are one and
\[
    F(\mathcal S)
    =
    \sum_{\mathbf y_i\in\mathcal U}
    \max_{\mathbf y_j\in\mathcal S}
    S_{ij}.
\]
If no selected response lies in mode \(m\), then every response in mode \(m\) has current
coverage at most \(\tau\). Adding any response from mode \(m\) raises the coverage of
all responses in that mode to \(1\), giving marginal gain at least
\[
    |\mathcal Y'_m|(1-\tau)>0.
\]
By contrast, adding another response from a mode already represented in \(\mathcal S\)
gives zero marginal gain: responses in that mode already have coverage \(1\), while
responses in other modes have similarity only \(\tau\), which cannot improve their
current coverage once at least one response has already been selected. Therefore any
uncovered mode has strictly positive marginal gain and any covered mode has zero
marginal gain. Greedy must cover all \(M\) modes before duplicating a mode.
\end{proof}

This result is intentionally idealized. For \(\alpha>0\), high-quality modes may be
selected more than once before all low-quality modes are covered. This is the intended
quality--coverage tradeoff of DBKoN/GRADES.

\subsubsection{Similarity, gradient covariance, and effective sample size}
\label{app:similarity_gradient_bridge}

The submodular guarantee above concerns the surrogate objective \(F\). To connect
coverage to gradient variance, one needs an additional modeling assumption relating the
similarity kernel to gradient covariance.

\begin{assumption}[Similarity controls positive cross-covariance]
\label{assum:bridge}
For responses to the same prompt, let
\(g_j=g(\mathbf y_j)\). There exist constants \(C_0,C_1\ge0\) such that the
positive trace contribution of the cross-covariance satisfies
\[
    \left[
        \operatorname{Tr}\operatorname{Cov}(g_j,g_{j'})
    \right]_+
    \le
    C_0
    +
    C_1\,\operatorname{sim}(\mathbf y_j,\mathbf y_{j'}).
\]
\end{assumption}

The constant \(C_0\) represents an irreducible covariance floor, such as the shared
prompt contribution. The \(C_1\) term is the component that diversity-aware selection
can reduce by selecting dissimilar responses.

To see the implication, suppose selected gradients have common trace variance
\(\sigma_g^2\), and let \(\bar c\) denote the average off-diagonal trace covariance
among selected responses. Then
\[
    \operatorname{Tr}\operatorname{Cov}
    \left(
        \frac{1}{K}\sum_{j=1}^K g_j
    \right)
    =
    \frac{\sigma_g^2}{K}
    +
    \frac{K-1}{K}\bar c
    =
    \frac{\sigma_g^2-\bar c}{K}
    +
    \bar c .
\]
If \(\bar c=\rho_g\sigma_g^2\), this is equivalent to an effective sample size
\[
    K_{\mathrm{eff}}
    =
    \frac{K}{1+(K-1)\rho_g}.
\]
Thus diversity-aware selection can increase \(K_{\mathrm{eff}}\) at fixed \(K\) by
reducing average positive cross-covariance. When \(C_0>0\), the variance saturates at an
irreducible floor rather than vanishing as \(K\to\infty\).

\paragraph{Limitation.}
Assumption~\ref{assum:bridge} is not automatic. Generic sentence embeddings may fail to
capture task-relevant modes in domains such as mathematics or code, where two responses
can be lexically dissimilar but algorithmically identical, or lexically similar but
semantically different. In such domains, task-aware kernels based on execution traces,
AST structure, verifier outcomes, proof structure, or unit-test behavior may be more
appropriate. When the similarity model is unreliable, RKoN remains the assumption-light
default.

\subsubsection{A stylized model for the size of the coverage gain}
\label{app:stylized_stratification}

We close with a concrete model illustrating the variance reduction from mode coverage.
Let \(P(\mathbf y\mid\mathbf x)\) be a uniform mixture of \(M\) modes. In mode \(m\),
suppose the gradient has the form
\[
    g_m
    =
    g_0+\alpha \mathbf u_m+\xi,
    \qquad
    \xi\sim \mathcal N(0,\sigma_w^2 I_d),
\]
where \(g_0\in\mathbb R^d\), \(\alpha>0\), and
\(\{\mathbf u_m\}_{m=1}^M\) are orthonormal vectors in \(\mathbb R^d\). Assume
\(d\ge M\), and suppose \(K\) is divisible by \(M\).

For \(K\) i.i.d. random responses,
\[
    \operatorname{Tr}
    \operatorname{Cov}(\hat g_{\mathrm{RKoN}})
    =
    \frac{d\sigma_w^2}{K}
    +
    \frac{\alpha^2}{K}
    \left(
        1-\frac{1}{M}
    \right).
\]
The first term is within-mode noise, while the second is between-mode variance. Under
proportional stratification, with \(K/M\) samples per mode, the between-mode component
is fixed by construction and only within-mode noise remains:
\[
    \operatorname{Tr}
    \operatorname{Cov}(\hat g_{\mathrm{strat}})
    =
    \frac{d\sigma_w^2}{K}.
\]

\begin{proposition}[Variance reduction in the orthogonal-modes model]
\label{prop:stylized_model}
In the model above,
\[
    \frac{
        \operatorname{Tr}\operatorname{Cov}(\hat g_{\mathrm{RKoN}})
    }{
        \operatorname{Tr}\operatorname{Cov}(\hat g_{\mathrm{strat}})
    }
    =
    1+
    \frac{
        \alpha^2(1-1/M)
    }{
        d\sigma_w^2
    }.
\]
Thus the value of coverage-aware selection grows with the separation of gradient means
across modes.
\end{proposition}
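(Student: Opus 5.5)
The plan is to compute the two traces that appear in the ratio, $\operatorname{Tr}\operatorname{Cov}(\hat g_{\mathrm{RKoN}})$ and $\operatorname{Tr}\operatorname{Cov}(\hat g_{\mathrm{strat}})$, explicitly, and then divide. Theorem~\ref{thm:stratified} makes this almost mechanical: it expresses both covariances through the within-mode covariances $\Sigma_m$ and the between-mode term $\sum_m p_m(\mu_m-\bar g)(\mu_m-\bar g)^\top$. So the proof reduces to identifying these quantities in the orthogonal-modes model.

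\textbf{Step 1 (model quantities).} The modes are uniform, so $p_m=1/M$. The mode-conditional gradient mean is $\mu_m=g_0+\alpha\mathbf u_m$, and the within-mode covariance is $\Sigma_m=\sigma_w^2 I_d$, hence $\operatorname{Tr}\Sigma_m=d\sigma_w^2$ for every $m$. The overall mean is $\bar g=g_0+\frac{\alpha}{M}\sum_m\mathbf u_m$. Divisibility of $K$ by $M$ gives integral $K_m=K/M=Kp_m$, so Theorem~\ref{thm:stratified} applies exactly. It yields $\operatorname{Tr}\operatorname{Cov}(\hat g_{\mathrm{strat}})=\frac1K\sum_m p_m d\sigma_w^2=\frac{d\sigma_w^2}{K}$.

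\textbf{Step 2 (between-mode trace).} Write $\mathbf s=\sum_m\mathbf u_m$. By orthonormality, $\|\mathbf s\|^2=M$ and $\langle\mathbf u_m,\mathbf s\rangle=1$. Then
\[
\sum_m \tfrac1M\|\mu_m-\bar g\|^2=\tfrac{\alpha^2}{M}\sum_m\bigl\|\mathbf u_m-\tfrac1M\mathbf s\bigr\|^2 .
\]
Each summand expands as $\|\mathbf u_m-\tfrac1M\mathbf s\|^2=1-\tfrac2M+\tfrac1M=1-\tfrac1M$. Therefore the between-mode trace is $\alpha^2(1-1/M)$. Combined with Theorem~\ref{thm:stratified}, this gives $\operatorname{Tr}\operatorname{Cov}(\hat g_{\mathrm{RKoN}})=\frac{d\sigma_w^2+\alpha^2(1-1/M)}{K}$, which matches the display preceding the proposition.

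\textbf{Step 3 (ratio).} Dividing the two traces cancels the factor $1/K$. The result is $1+\alpha^2(1-1/M)/(d\sigma_w^2)$, which is monotone increasing in $\alpha^2$ and so gives the interpretive claim.

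There is no real obstacle. The only point needing care is Step 2, where the centering by $\bar g$ must be done correctly; the condition $d\ge M$ is used only to guarantee that $M$ orthonormal vectors exist. One must also note that the RKoN covariance formula here assumes i.i.d.\ draws from $P$, i.e.\ $q=P$ in Proposition~\ref{prop:rkon}.
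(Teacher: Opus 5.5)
Your proposal is correct and follows essentially the same route as the paper. Both arguments split the trace into the within-mode part $d\sigma_w^2$ and the between-mode part $\alpha^2(1-1/M)$, divide both by $K$ for RKoN, and keep only the within-mode part for proportional stratification (which you justify more carefully than the paper by invoking Theorem~\ref{thm:stratified} with $K_m=K/M$) before taking the ratio; the only cosmetic difference is that the paper gets the between-mode trace as $\alpha^2\bigl(\mathbb E\|\mathbf u_Z\|^2-\|\mathbb E\mathbf u_Z\|^2\bigr)=\alpha^2(1-1/M)$ instead of by your equivalent per-mode centered expansion.
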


\begin{proof}
For a random mode \(Z\sim\mathrm{Unif}(\{1,\ldots,M\})\),
\[
    \operatorname{Tr}\operatorname{Cov}(\alpha \mathbf u_Z)
    =
    \alpha^2
    \left(
        \mathbb E\|\mathbf u_Z\|^2
        -
        \left\|
            \mathbb E\mathbf u_Z
        \right\|^2
    \right).
\]
Since the \(\mathbf u_m\)'s are orthonormal,
\[
    \mathbb E\|\mathbf u_Z\|^2=1,
    \qquad
    \left\|
        \mathbb E\mathbf u_Z
    \right\|^2
    =
    \left\|
        \frac{1}{M}\sum_{m=1}^M \mathbf u_m
    \right\|^2
    =
    \frac{1}{M}.
\]
Hence the between-mode trace covariance is
\(\alpha^2(1-1/M)\). Averaging \(K\) independent random responses divides both the
within-mode trace \(d\sigma_w^2\) and the between-mode trace by \(K\), giving the RKoN
expression. Under proportional stratification, each mode contributes exactly its
population weight, so the between-mode term is removed and only the averaged within-mode
noise \(d\sigma_w^2/K\) remains. Taking the ratio gives the claim.
\end{proof}

\section{Experimental Appendix}
\label{app:experiments}

This appendix provides the experimental details supporting \S\ref{sec:experiments}. We keep the appendix focused on three roles: (i) making the controlled validation reproducible, (ii) specifying the LLM fine-tuning and evaluation protocol, and (iii) reporting the additional grids to support main-text conclusions.

\subsection{Controlled Simulation Details}
\label{app:simulation}

The controlled validation in \S\ref{sec:exp_simulation} uses a continuous multi-modal regression problem in which \(V_{\mathbf x}\), \(V_{\mathbf y\mid \mathbf x}\), and selector-induced gradient bias are all directly measurable.

\paragraph{Data-generating process.}
Prompts are sampled as \(\mathbf x\sim\mathcal N(0,I_5)\). Conditional responses follow
\[
    y
    =
    \sin(2x_1)+0.5x_2+\mathrm{offset}_m+\varepsilon,
    \qquad
    \mathrm{offset}_m\in\{-1.5,0,+1.5\},
    \qquad
    \varepsilon\sim\mathcal N(0,0.01),
\]
with \(m\sim\mathrm{Uniform}\{1,2,3\}\). The reward model is
\[
    R(\mathbf x,y)=-(y-f(\mathbf x))^2,
    \qquad
    f(\mathbf x)=\sin(2x_1)+0.5x_2,
\]
so reward favors the central mode while the outer modes remain valid under the target distribution. This creates the intended tension between reward optimization and distributional coverage.

\paragraph{Model and training.}
We train a two-layer mixture-density network with \(64\) hidden units and \(5\) Gaussian components using Adam with learning rate \(5\times 10^{-3}\), \(400\) epochs, and gradient clipping at \(5\). Population NLL is estimated using fresh draws from \(P(y\mid\mathbf x)\). Unless otherwise stated, each prompt has \(n=32\) candidate responses.

\paragraph{Variance decomposition.}
For Fig.~\ref{fig:sim_validation}(a), we fix a randomly initialized model and estimate \(V_{\mathbf x}\) and \(V_{\mathbf y\mid\mathbf x}\) from \(300\) held-out prompts with \(200\) responses each. We then sweep \(K\in\{1,2,4,8,16,32\}\) and compute \(\operatorname{Var}[\widehat{\mathcal L}_{\mathrm{MRT}}]\) over \(200\) independent trials per cell. The measured variances, scaled by \(10^{-2}\), are
\[
    7.07,\;4.71,\;2.65,\;1.70,\;1.45,\;1.27,
\]
matching the predicted \(V_{\mathbf x}/N_p+V_{\mathbf y\mid\mathbf x}/(N_pK)\) curve and approaching the prompt-floor asymptote.

\paragraph{Cost-aware budget.}
For Fig.~\ref{fig:sim_validation}(b), we use \(C_p=4\), \(C_r=1\), and total budget \(\mathcal B=400\). For each \(K\), the number of prompts is
\[
    N_p=\left\lfloor \frac{\mathcal B}{C_p+KC_r}\right\rfloor,
\]
giving \(N_p\in\{80,66,50,40,33,25,20\}\) for \(K\in\{1,2,4,6,8,12,16\}\). Test NLL is averaged over \(8\) seeds and evaluated on \(100\) held-out prompts with \(100\) fresh responses per prompt. The minimum occurs near \(K=6\)--\(8\), close to the predicted
\[
    K^\star
    =
    \sqrt{(V_{\mathbf y\mid\mathbf x}/V_{\mathbf x})(C_p/C_r)}
    \approx 5.3 .
\]

\paragraph{Selector EMSE.}
For Fig.~\ref{fig:sim_validation}(c), we evaluate selectors at \(K=4\) and \(N=32\). For each of \(30\) prompts, the target gradient \(\bar g(\mathbf x)\) is estimated by averaging \(500\) gradients over fresh responses from \(P(y\mid\mathbf x)\). For each selector, bias and variance are estimated over \(200\) independent candidate pools per prompt:
\[
    \mathrm{EMSE}
    =
    \|\mathbb E[\hat g]-\bar g\|^2
    +
    \operatorname{Tr}\operatorname{Cov}(\hat g).
\]
RKoN is unbiased but pays mode-missing variance; BKoN exhibits persistent reward-mode bias and negative gradient alignment; DBKoN reduces variance by encouraging mode coverage.

\paragraph{Additional controlled checks.}
We ran two auxiliary checks to verify that the main controlled result is not an artifact of the continuous regression setup. First, at fixed \(K=4\), increasing the candidate-pool size \(n\in\{8,16,32,64,128\}\) sharpens BKoN's bias toward the reward-dominant mode rather than removing it, matching Prop.~\ref{prop:bon_bias_main}. Second, we repeated the experiment with a discrete-token transformer: prompts are length-\(4\) sequences over a \(64\)-token vocabulary, responses are length-\(6\) sequences from a deterministic three-mode conditional distribution, and a two-layer causal transformer is trained with AdamW. Random and coverage-aware selectors improve monotonically with \(K\), while top-\(K\) selection remains worse due to reward-mode collapse. These checks reproduce the same qualitative ranking as Fig.~\ref{fig:sim_validation}.

\subsection{LLM Fine-Tuning Setup and Evaluation}
\label{app:setup}
\paragraph{Main training recipe.}
The main LLM experiments fine-tune \texttt{Llama-3.1-8B-Instruct} with LoRA adapters. We use rank \(16\), LoRA \(\alpha=32\), and target the \(\{q,k,v,o,\mathrm{up},\mathrm{down},\mathrm{gate}\}\) projections. Training uses learning rate \(1.2\times 10^{-4}\), per-device batch size \(1\), gradient accumulation \(16\), and a matched optimizer-step budget across \(K\). The matched-step design is conservative for MRT: smaller-\(K\) runs see their retained data for more epochs, while larger-\(K\) runs expose the model to more distinct responses under the same number of optimizer updates.

\paragraph{Datasets.}
Table~\ref{tab:dataset_config_app} summarizes the datasets used in the LLM experiments. The four established benchmarks support the main distributional claims; \textsc{Mosaic-1k} is used only for the prompt-redundancy diagnostic.

\begin{table}[t]
  \centering
  \small
  \setlength{\tabcolsep}{5pt}
  \renewcommand{\arraystretch}{1.08}
  \begin{tabular}{llcc}
    \toprule
    Dataset & Domain & \(N\) & Train / Val / Test \\
    \midrule
    Gold~\citep{coste2023reward}
      & Instruction following
      & \(64\) & \(800\) / \(64\) / \(100\) \\
    CodeContests~\citep{li2022competition}
      & Code generation
      & \(64^\ast\) & \(800\) / \(64\) / \(100\) \\
    Nectar~\citep{starling2023}
      & Dialogue
      & \(7\) & \(1200\) / \(128\) / \(256\) \\
    UltraFeedback~\citep{cui2023ultrafeedback}
      & Preference / instruction following
      & \(4\) & \(1600\) / \(128\) / \(256\) \\
    \textsc{Mosaic-1k} [Ours]
      & Code / Reasoning / Creative / Advisory
      & \(4\) & \(860\) / \(40\) / \(100\) \\
    \bottomrule
  \end{tabular}
  \caption{\textbf{Datasets used in LLM fine-tuning.}
  \(N\) denotes the number of candidate responses available per prompt. \(^{\ast}\)CodeContests has variable response counts per problem; we cap selection at \(64\) responses per prompt. \textsc{Mosaic-1k} additionally contains four paraphrases per base item.}
  \label{tab:dataset_config_app}
\end{table}

\paragraph{Selectors.}
RKoN samples \(K\) responses uniformly from the candidate pool and is the default selector for testing response multiplicity. BKoN selects the top \(K\) responses according to the reward model. DKoN selects a diverse subset using greedy facility-location over sentence embeddings. DBKoN combines reward and coverage using a quality-weighted facility-location objective,
\[
    f(\mathcal S)
    =
    \sum_{\mathbf y\in\mathcal Y'}
    w(\mathbf y)
    \max_{\mathbf y'\in\mathcal S}
    w(\mathbf y')\,\operatorname{sim}(\mathbf y,\mathbf y'),
    \qquad
    w(\mathbf y)=\operatorname{clip}(\widetilde r(\mathbf y),\epsilon,1)^\alpha ,
\]
where \(\widetilde r\) is the min-max normalized reward and \(\alpha=0.75\) by default. We optimize this objective greedily. The standard \((1-1/e)\) guarantee applies to the surrogate objective, not directly to population risk.

\paragraph{Metrics.}
\emph{Reference loss} is the mean token-level cross-entropy on held-out dataset references, with prompt tokens masked and only the response span scored. This is our primary distributional metric: lower reference loss means better fit to the curated conditional response distribution represented by the benchmark.

\emph{Reference coverage} measures whether the model's generations cover held-out reference modes. For each held-out prompt, we sample \(16\) generations from the fine-tuned model and compute the maximum cosine similarity between each held-out reference and the generated set using \texttt{all-MiniLM-L6-v2} embeddings. We then average across references and prompts. Higher coverage means the model is more likely to generate something semantically close to the held-out references.

\emph{Semantic diversity} is one minus the average pairwise cosine similarity among the model's \(16\) generations. \emph{Best-of-\(16\) reward} is the maximum Skywork-V2 reward among the same \(16\) generations. Reward is used as a target-shift diagnostic rather than as the primary success metric, because reward-based selectors directly optimize it. For CodeContests, we additionally execute generated programs against benchmark tests and report Pass@\(K\).

\subsection{Full LLM Results and Robustness}
\label{app:full_grids}

\paragraph{A fixed MRT intervention across datasets.}
Table~\ref{tab:fixed_rows} reports the matched operating points underlying the gain column in Table~\ref{tab:variance_predicts_gain}:
R1oN (single random response),
B1oN (single best response),
and R2oN (two random responses).
R2oN improves reference loss over both single-response baselines on every dataset.
On CodeContests, the same R2oN checkpoint also achieves the strongest Pass@\(16\), indicating that the distributional improvement transfers to an execution-based downstream metric rather than merely improving NLL.
\begin{table}[t]
  \centering
  \setlength{\tabcolsep}{7pt}
  \caption{\textbf{A single MRT step improves distributional fit across datasets.}
  For CodeContests, bracketed values denote Pass@\(1\)/@\(4\)/@\(16\) (\%) computed from the same checkpoint's generations.}
  \renewcommand{\arraystretch}{1.08}
  \begin{tabular}{lcccc}
    \toprule
    Method & Gold & Nectar & UltraFeedback & CodeContests \\
    \midrule
    R1oN
      & 1.556 & 0.909 & 1.052
      & 0.792 \;[4.19 / 7.57 / 14.00] \\
    B1oN
      & 1.602 & 0.951 & 1.065
      & 0.809 \;[\textbf{4.25} / \textbf{8.74} / 15.00] \\
    R2oN
      & \textbf{1.515} & \textbf{0.893} & \textbf{1.038}
      & \textbf{0.712} \;[4.13 / 8.72 / \textbf{17.00}] \\
    \bottomrule
  \end{tabular}
  \label{tab:fixed_rows}
  \vspace{-6pt}
\end{table}

\paragraph{Gold full selector grid.}
Table~\ref{tab:gold_main_full} reports the full Gold grid summarized by Fig.~\ref{fig:gold_grouped_bars}. The grid supports the two main selector claims. First, increasing \(K\) generally improves reference loss within a selector. Second, selectors optimize different targets: RKoN gives the best reference likelihood, BKoN gives the strongest reward but weakest diversity, DKoN gives the strongest diversity, and DBKoN gives the strongest coverage.

\begin{table}[t]
\centering
\small
\setlength{\tabcolsep}{5pt}
\renewcommand{\arraystretch}{1.05}
\caption{\textbf{Gold full-data selector grid.} Llama-3.1-8B-Instruct with LoRA. Reward denotes best-of-\(16\) Skywork-V2 reward. Bold indicates the best value in each metric.}
\label{tab:gold_main_full}
\begin{tabular}{llcccc}
\toprule
Strategy & \(K\) & Ref.\ Loss \(\downarrow\) & Reward \(\uparrow\) & Coverage \(\uparrow\) & Diversity \(\uparrow\) \\
\midrule
RKoN & 1  & 1.556 & 10.63 & 0.786 & 0.301 \\
     & 2  & 1.515 &  8.51 & 0.781 & 0.308 \\
     & 4  & 1.488 &  9.18 & 0.798 & 0.304 \\
     & 8  & 1.462 &  8.42 & 0.803 & 0.296 \\
     & 16 & \textbf{1.441} & 8.30 & 0.803 & 0.287 \\
\midrule
BKoN & 1  & 1.602 & \textbf{13.04} & 0.776 & 0.259 \\
     & 2  & 1.552 & 12.08 & 0.784 & 0.261 \\
     & 4  & 1.534 & 12.38 & 0.792 & 0.257 \\
     & 8  & 1.509 & 11.66 & 0.797 & 0.239 \\
     & 16 & 1.483 & 11.15 & 0.796 & 0.240 \\
\midrule
DKoN & 2  & 1.510 & 8.41 & 0.792 & 0.272 \\
     & 4  & 1.487 & 9.19 & 0.788 & 0.293 \\
     & 8  & 1.465 & 8.33 & 0.802 & 0.328 \\
     & 16 & 1.446 & 8.43 & 0.805 & \textbf{0.333} \\
\midrule
DBKoN & 2  & 1.532 & 11.64 & 0.781 & 0.275 \\
      & 4  & 1.503 & 11.02 & 0.788 & 0.281 \\
      & 8  & 1.472 & 10.43 & 0.794 & 0.285 \\
      & 16 & 1.449 &  9.28 & \textbf{0.807} & 0.286 \\
\bottomrule
\end{tabular}
\end{table}

\paragraph{Cross-family RKoN sweep.}
Table~\ref{tab:gold_cross_family_endpoints} reports the endpoint values behind Fig.~\ref{fig:cross_family}. The monotone loss improvement from \(K=1\) to \(K=16\) holds across all seven backbones.

\begin{table}[t]
\centering
\small
\setlength{\tabcolsep}{6pt}
\renewcommand{\arraystretch}{1.08}
\caption{\textbf{Gold cross-family RKoN sweep.} Endpoint results for \(K=1\) and \(K=16\). \(\Delta\) is the absolute reference-loss reduction.}
\label{tab:gold_cross_family_endpoints}
\begin{tabular}{lccc}
\toprule
Model & \(K=1\) Loss & \(K=16\) Loss & \(\Delta\) Loss \\
\midrule
Llama-3.1-8B-Instruct & 1.556 & 1.441 & 0.115 \\
Llama-3.2-1B-Instruct & 1.638 & 1.508 & 0.130 \\
Llama-3.2-3B-Instruct & 1.595 & 1.472 & 0.123 \\
Qwen-3.5-2B & 1.504 & 1.414 & 0.090 \\
Qwen-3.5-4B & 1.363 & 1.292 & 0.071 \\
Gemma-4-E2B-it & 1.604 & 1.444 & 0.160 \\
Gemma-4-E4B-it & 1.550 & 1.412 & 0.138 \\
\bottomrule
\end{tabular}
\end{table}

\paragraph{Pretrained vs.\ instruction-tuned robustness.}
To check that the Gold results are not an artifact of instruction tuning, we reran the Gold sweep with the pretrained \texttt{Llama-3.1-8B} base model under the same LoRA recipe. The base model uses non-chat formatting whereas the instruct model uses chat formatting, so absolute losses are not directly comparable. The relevant comparison is the within-model trend. As shown in Table~\ref{tab:base_vs_instruct_compact}, both models improve monotonically with \(K\), and the selector decomposition persists.

\begin{table}[t]
\centering
\small
\setlength{\tabcolsep}{7pt}
\renewcommand{\arraystretch}{1.08}
\caption{\textbf{Pretrained and instruction-tuned models show the same MRT trend.} RKoN on Gold under matched LoRA training.}
\label{tab:base_vs_instruct_compact}
\begin{tabular}{lccc}
\toprule
Backbone & RKoN \(K=1\) & RKoN \(K=16\) & Relative gain \\
\midrule
Llama-3.1-8B base & 1.495 & 1.412 & \(5.6\%\) \\
Llama-3.1-8B-Instruct & 1.556 & 1.441 & \(7.4\%\) \\
\bottomrule
\end{tabular}
\end{table}

\paragraph{CodeContests full results.}
Table~\ref{tab:code_contests_full} reports the full CodeContests selector grid. Reference loss and execution accuracy are not identical targets: RKoN and DKoN give the strongest reference loss, while BKoN can improve Pass@\(16\) by selecting reward-favored solutions. This is consistent with the paper's central claim that selectors alter the target learned from the retained responses.

\begin{table}[t]
\centering
\small
\setlength{\tabcolsep}{4.5pt}
\renewcommand{\arraystretch}{1.05}
\caption{\textbf{CodeContests full results.} Llama-3.1-8B-Instruct with LoRA. Pass@\(16\) is reported as a percentage.}
\label{tab:code_contests_full}
\begin{tabular}{llccccc}
\toprule
Strategy & \(K\) & Ref.\ Loss \(\downarrow\) & Pass@16 \(\uparrow\) & Reward \(\uparrow\) & Coverage \(\uparrow\) & Diversity \(\uparrow\) \\
\midrule
RKoN  & 1 & 0.792 & 14.0 & 19.11 & 0.747 & 0.326 \\
BKoN  & 1 & 0.809 & 15.0 & \textbf{21.16} & 0.745 & 0.371 \\
\midrule
RKoN  & 2 & 0.712 & 17.0 & 18.35 & 0.746 & 0.323 \\
BKoN  & 2 & 0.719 & \textbf{19.0} & 20.02 & 0.751 & 0.362 \\
DKoN  & 2 & 0.713 & 16.0 & 18.56 & \textbf{0.754} & 0.307 \\
DBKoN & 2 & 0.716 & 13.0 & 19.82 & 0.751 & \textbf{0.372} \\
\midrule
RKoN  & 4 & 0.699 & 13.0 & 18.16 & 0.745 & 0.330 \\
BKoN  & 4 & 0.702 & 15.0 & 18.85 & 0.749 & 0.360 \\
DKoN  & 4 & \textbf{0.697} & 15.0 & 17.83 & 0.750 & 0.346 \\
DBKoN & 4 & 0.700 & 12.0 & 18.88 & 0.752 & \textbf{0.372} \\
\bottomrule
\end{tabular}
\end{table}

\subsection{Reward and Selector Diagnostics}
\label{app:reward_robustness}

\paragraph{Reward-model robustness.}
Because BKoN and DBKoN use the Skywork reward model during selection, reward results could be inflated by evaluator-selector coupling. We therefore rescore completed Gold and CodeContests runs with an out-of-family Skywork-V2 evaluator based on Qwen3-8B. Table~\ref{tab:reward_model_robustness} shows that the reward ranking is stable under rescoring, but we still treat reward as a diagnostic rather than the main evaluation criterion.

\begin{table}[t]
\centering
\small
\setlength{\tabcolsep}{7pt}
\renewcommand{\arraystretch}{1.08}
\caption{\textbf{Reward-model robustness.} Post-hoc rescoring with \texttt{Skywork-Reward-V2-Qwen3-8B}. Spearman \(\rho\) is computed over completed runs using best-of-\(16\) reward.}
\label{tab:reward_model_robustness}
\begin{tabular}{lccc}
\toprule
Dataset & Runs & Spearman \(\rho\) & Top run under rescoring \\
\midrule
Gold & 18 & 0.942 & BKoN-\(K=1\) \\
CodeContests & 10 & 0.842 & BKoN-\(K=1\) \\
\bottomrule
\end{tabular}
\end{table}

\paragraph{DBKoN quality--coverage sweep.}
DBKoN is not the central contribution of the paper; we include a compact sweep only to verify that its behavior is consistent with the target-aware interpretation. Table~\ref{tab:alpha_sweep_compact} varies the quality exponent \(\alpha\) at \(K=8\) on Gold. Increasing \(\alpha\) generally increases reward pressure and decreases diversity, while reference loss remains best near the diversity-heavy end. This supports the main-text interpretation: DBKoN trades between BKoN-style reward tilt and DKoN-style coverage.

\begin{table}[t]
\centering
\small
\setlength{\tabcolsep}{7pt}
\renewcommand{\arraystretch}{1.08}
\caption{\textbf{DBKoN quality--coverage sweep on Gold at \(K=8\).} \(\alpha=0\) corresponds to DKoN; \(\alpha=\infty\) corresponds to BKoN.}
\label{tab:alpha_sweep_compact}
\begin{tabular}{lcccc}
\toprule
\(\alpha\) & Ref.\ Loss \(\downarrow\) & Mean Reward \(\uparrow\) & Bo16 Reward \(\uparrow\) & Diversity \(\uparrow\) \\
\midrule
\(0\) (DKoN) & 1.465 & \(-2.33\) & 6.63 & \textbf{0.328} \\
0.25 & 1.477 & \textbf{1.85} & 9.48 & 0.293 \\
0.50 & 1.473 & 1.07 & 8.40 & 0.295 \\
0.75 & 1.472 & 0.91 & 8.78 & 0.286 \\
1.00 & 1.469 & 0.65 & 8.42 & 0.287 \\
2.00 & \textbf{1.464} & \(-0.23\) & 7.81 & 0.278 \\
\(\infty\) (BKoN) & 1.509 & 3.22 & \textbf{10.40} & 0.238 \\
\bottomrule
\end{tabular}
\end{table}

\paragraph{Interpretation.}
These diagnostics reinforce why the main text emphasizes reference loss and coverage over reward. Reward-only selection can succeed on reward-oriented diagnostics, and in CodeContests can sometimes improve execution pass rates, but it does not preserve the distributional target. RKoN is therefore the cleanest selector for testing the MRT variance mechanism, while DBKoN is best understood as a coverage-aware compromise rather than a separate algorithmic centerpiece.

\section{\textsf{MOSAIC}: A Structured Multi-Prompt, Multi-Response Benchmark}
\label{app:mosaic}

\subsection{Motivation and Positioning}

Two empirical realities motivated us to assemble a new benchmark alongside the four established corpora used in our primary evaluation. First, almost every public ``multi-response'' resource is in fact a \emph{single}-prompt, multi-completion dataset: it lets one stress-test response-side variability but offers no probe for \emph{prompt-side} redundancy---the everyday situation in which users ask the same thing four different ways and a robust model should respond consistently well to all four. Second, the reference completions in widely-used corpora (notably the ``gold'' completions in~\citep{coste2023reward}) were produced by small open-source language models (e.g.,~\texttt{Pythia} variants); using such completions as ground truth implicitly upper-bounds an evaluation by the capability of a several-orders-of-magnitude-weaker generator than today's frontier systems.

We therefore release \textsf{MOSAIC-1K} (\emph{Multi-Output Structured Alternatives In Context}) as a structured prompt-paraphrase $\times$ response-sample benchmark, generated end-to-end with a frontier model. We position it as both (i)~a complementary evaluation surface for the present paper and (ii)~a stand-alone resource for the broader community working on diversity-aware fine-tuning, reward-model evaluation, prompt-robustness studies, and preference-pair construction.

\subsection{Schema and Construction}

\paragraph{Structure.}
Each of the $1{,}000$ \textsf{MOSAIC-1K} items is a $4{\times}4$ grid built around a single \emph{base theme}. The item provides:
\begin{itemize}[leftmargin=1.25em,itemsep=1pt,topsep=1pt]
\item \textbf{Four prompt variants} $\{x^{(1)},\ldots,x^{(4)}\}$: semantically equivalent but stylistically and structurally distinct reformulations of the base theme---not surface paraphrases. Variants differ in framing (imperative vs.\ declarative), specificity (general request vs.\ named-function spec), and rhetorical posture (constraints first vs.\ context first).
\item \textbf{Four sample responses per variant} $\{y_{1}^{(j)},\ldots,y_{4}^{(j)}\}$: four substantively different completions for each $x^{(j)}$. Within a cell, responses take genuinely different approaches---distinct algorithms, distinct narrative choices, distinct advisory frameworks---rather than re-wordings of one answer.
\end{itemize}
A single item therefore contributes $4 \times 4 = 16$ prompt-response pairs, and the full benchmark contributes $16{,}000$ such pairs.

\paragraph{Domains and difficulty.}
Items are evenly partitioned across four domains, $250$ items each:
\textsf{code} (algorithmic implementation, data structures, debugging),
\textsf{reasoning} (logical puzzles, mathematical proofs, analytic deduction),
\textsf{creative} (flash fiction, poetry, dialogue, world-building), and
\textsf{advisory} (health, career, interpersonal, financial guidance).
Every item carries explicit metadata: \texttt{subdomain}, \texttt{difficulty} $\in\{$introductory, intermediate, advanced$\}$, \texttt{response\_type} $\in\{$procedural, analytical, constrained-creative, open-ended$\}$, and \texttt{expected\_diversity} (kept as ``high'').

\paragraph{Generation protocol.}
The dataset was generated by Claude~Opus~4.7 dispatched in multiple parallel batches via a tool-augmented orchestrator, with each batch responsible for a contiguous slice of $\{$domain $\times$ subdomain $\times$ difficulty$\}$ tuples and required to emit JSONL records conforming to a strict schema. Generation was followed by a programmatic validator enforcing: exactly four prompt variants, exactly four responses per variant, all string-typed fields non-empty, response length within $[80,250]$ words, and base-theme consistency across variants. Items failing any check were re-queued; the released corpus has $1000/1000$ items passing all checks. A four-item random spot audit (one per domain) confirmed that prompt variants are non-trivial reformulations rather than surface paraphrases and that within-cell responses are substantively different rather than re-worded.

\subsection{Intended Uses}

\textsf{MOSAIC-1K} is designed for any setting in which a researcher wants to disentangle \emph{prompt-side} and \emph{response-side} variability:
\begin{itemize}[leftmargin=1.25em,itemsep=1pt,topsep=1pt]
\item \textbf{Prompt-redundancy robustness.} Hold $y$ fixed across the four $x^{(j)}$'s and measure how a model's likelihood, reward, or behavior moves with prompt rephrasing.
\item \textbf{Multi-mode coverage and pass@$k$-style metrics.} Hold $x^{(j)}$ fixed and use the four responses as a built-in ``ground-truth diversity set'' against which to score generated samples.
\item \textbf{Reward-model variance estimation.} The $4{\times}4$ grid yields a clean nested-ANOVA structure for decomposing reward-model output variance into prompt-paraphrase, response-sample, and residual components.
\item \textbf{Preference-pair construction at scale.} Within-cell pairs $(y_{a}^{(j)},y_{b}^{(j)})$ are natural same-prompt comparisons; cross-cell pairs $(y_{a}^{(j)},y_{a}^{(j')})$ are paraphrase-controlled comparisons.
\item \textbf{Multi-response training research.} The structure mirrors exactly the regime our theory studies, involving multiple prompts, each with multiple responses.
\end{itemize}

\subsection{Curated Examples}

To convey the texture of the benchmark we present \emph{two} illustrative items per domain, eight items in total. For each domain, the first box exposes the \emph{prompt-paraphrase axis} (all four prompt variants of a single item, with one response shown per variant), and the second box exposes the \emph{response-sample axis} (a single prompt variant of a different item, with all four sample responses shown). All content is reproduced verbatim from the released corpus; only formatting (and, where indicated, length truncation) has been applied. Domain colors: \textcolor{mosaicCode}{\rule{1.4em}{0.55em}}\,code, \textcolor{mosaicReasoning}{\rule{1.4em}{0.55em}}\,reasoning, \textcolor{mosaicCreative}{\rule{1.4em}{0.55em}}\,creative, \textcolor{mosaicAdvisory}{\rule{1.4em}{0.55em}}\,advisory.

\clearpage
\begin{mosaicbox}[title={\texttt{code\_041}: Code Prompt Axis}]{mosaicCode}
\textbf{Base theme.} Count the number of distinct ways to climb $n$ stairs taking $1$ or $2$ steps at a time.\par\smallskip
\textbf{Four prompt variants} (each requests a \emph{different} algorithmic style).
\begin{description}[leftmargin=1.7em,style=nextline,itemsep=1pt,topsep=1pt]
\item[\texttt{P1}] Write a Python function to count how many distinct ways you can climb $n$ stairs if you can take either $1$ or $2$ steps at each move. Use a bottom-up DP table.
\item[\texttt{P2}] Implement a Python solution that computes the number of ways to reach the top of an $n$-step staircase ($1$ or $2$ steps per move) using constant extra space.
\item[\texttt{P3}] Create a recursive Python function with memoization that returns the total number of unique ways to ascend $n$ steps when each move is $1$ or $2$ stairs.
\item[\texttt{P4}] Write a Python staircase-climbing counter that uses matrix exponentiation to compute the answer in $O(\log n)$ time.
\end{description}
\textbf{One response per variant} (full code).\par\smallskip
\textit{Response to \texttt{P1}}---bottom-up DP table:
\begin{lstlisting}[style=mosaicpy]
def count_ways_dp(n):
    if n <= 0: return 0
    if n <= 2: return n
    dp = [0] * (n + 1)
    dp[1], dp[2] = 1, 2
    for i in range(3, n + 1):
        dp[i] = dp[i - 1] + dp[i - 2]
    return dp[n]
\end{lstlisting}
\textit{Response to \texttt{P2}}---constant extra space:
\begin{lstlisting}[style=mosaicpy]
def climb_constant_space(n):
    if n <= 0: return 0
    if n <= 2: return n
    a, b = 1, 2
    for _ in range(3, n + 1):
        a, b = b, a + b
    return b
\end{lstlisting}
\textit{Response to \texttt{P3}}---memoized recursion:
\begin{lstlisting}[style=mosaicpy]
from functools import lru_cache
def climb_memo(n):
    @lru_cache(maxsize=None)
    def ways(k):
        if k <= 0: return 0
        if k <= 2: return k
        return ways(k - 1) + ways(k - 2)
    return ways(n)
\end{lstlisting}
\textit{Response to \texttt{P4}}---matrix exponentiation in $O(\log n)$:
\begin{lstlisting}[style=mosaicpy]
def climb_matrix(n):
    def mat_mult(A, B):
        return [[A[0][0]*B[0][0]+A[0][1]*B[1][0], A[0][0]*B[0][1]+A[0][1]*B[1][1]],
                [A[1][0]*B[0][0]+A[1][1]*B[1][0], A[1][0]*B[0][1]+A[1][1]*B[1][1]]]
    def mat_pow(M, p):
        result = [[1, 0], [0, 1]]
        base = [row[:] for row in M]
        while p > 0:
            if p % 2 == 1: result = mat_mult(result, base)
            base = mat_mult(base, base); p //= 2
        return result
    if n <= 0: return 0
    if n <= 2: return n
    return mat_pow([[1, 1], [1, 0]], n)[0][0]
\end{lstlisting}
\textit{Each variant pins a distinct algorithmic style; the responses honor that pin while producing the same numerical answer.}
\end{mosaicbox}

\clearpage
\begin{mosaicbox}[title={\texttt{code\_115}: Code Response Axis}]{mosaicCode}
\textbf{Base theme.} Implement a function that finds the median of two sorted arrays in logarithmic time.\par\smallskip
\textbf{Prompt variant \texttt{P1}.} \textit{``Write a Python function that takes two sorted arrays and returns the median of their combined elements, achieving $O(\log(m{+}n))$ time complexity.''}\par\smallskip
\textbf{Four sample responses for \texttt{P1}}---all $O(\log\min(m,n))$, four genuinely different algorithmic structures.\par\smallskip
\textit{R1}---binary partition over the smaller array:
\begin{lstlisting}[style=mosaicpy]
def find_median_sorted_arrays(nums1, nums2):
    if len(nums1) > len(nums2): nums1, nums2 = nums2, nums1
    m, n = len(nums1), len(nums2)
    low, high = 0, m
    while low <= high:
        i = (low + high) // 2
        j = (m + n + 1) // 2 - i
        left1  = nums1[i-1] if i > 0 else float('-inf')
        right1 = nums1[i]   if i < m else float('inf')
        left2  = nums2[j-1] if j > 0 else float('-inf')
        right2 = nums2[j]   if j < n else float('inf')
        if left1 <= right2 and left2 <= right1:
            if (m + n) % 2: return float(max(left1, left2))
            return (max(left1, left2) + min(right1, right2)) / 2.0
        elif left1 > right2: high = i - 1
        else:                low  = i + 1
\end{lstlisting}
\textit{R2}---recursive $k$-th smallest with $k/2$ halving:
\begin{lstlisting}[style=mosaicpy]
def find_median_sorted_arrays(nums1, nums2):
    def kth_smallest(a, sa, b, sb, k):
        if sa >= len(a): return b[sb + k - 1]
        if sb >= len(b): return a[sa + k - 1]
        if k == 1: return min(a[sa], b[sb])
        ia = min(sa + k//2, len(a)) - 1
        ib = min(sb + k//2, len(b)) - 1
        if a[ia] <= b[ib]: return kth_smallest(a, ia + 1, b, sb, k - (ia - sa + 1))
        return kth_smallest(a, sa, b, ib + 1, k - (ib - sb + 1))
    total = len(nums1) + len(nums2)
    if total % 2: return float(kth_smallest(nums1, 0, nums2, 0, total // 2 + 1))
    return (kth_smallest(nums1, 0, nums2, 0, total // 2)
          + kth_smallest(nums1, 0, nums2, 0, total // 2 + 1)) / 2.0
\end{lstlisting}
\textit{R3}---divide-and-conquer slicing on midpoints:
\begin{lstlisting}[style=mosaicpy]
def find_median_sorted_arrays(nums1, nums2):
    def kth(a, b, k):
        if not a: return b[k]
        if not b: return a[k]
        ia, ib = len(a) // 2, len(b) // 2
        if ia + ib < k:
            if a[ia] < b[ib]: return kth(a[ia+1:], b, k - ia - 1)
            return kth(a, b[ib+1:], k - ib - 1)
        if a[ia] > b[ib]: return kth(a[:ia], b, k)
        return kth(a, b[:ib], k)
    n = len(nums1) + len(nums2)
    if n % 2: return float(kth(nums1, nums2, n // 2))
    return (kth(nums1, nums2, n//2 - 1) + kth(nums1, nums2, n // 2)) / 2.0
\end{lstlisting}
\textit{R4}---cut-based partition, single \texttt{cut1/cut2} pair:
\begin{lstlisting}[style=mosaicpy]
def find_median_sorted_arrays(nums1, nums2):
    if len(nums1) > len(nums2):
        return find_median_sorted_arrays(nums2, nums1)
    m, n = len(nums1), len(nums2)
    lo, hi = 0, m
    while lo <= hi:
        cut1 = (lo + hi) // 2
        cut2 = (m + n + 1) // 2 - cut1
        l1 = nums1[cut1-1] if cut1 else float('-inf')
        l2 = nums2[cut2-1] if cut2 else float('-inf')
        r1 = nums1[cut1]   if cut1 < m else float('inf')
        r2 = nums2[cut2]   if cut2 < n else float('inf')
        if l1 <= r2 and l2 <= r1:
            if (m + n) % 2: return float(max(l1, l2))
            return (max(l1, l2) + min(r1, r2)) / 2.0
        if l1 > r2: hi = cut1 - 1
        else:       lo = cut1 + 1
\end{lstlisting}
\textit{Four distinct algorithmic skeletons (binary-partition, recursive $k$-th, slicing divide-and-conquer, cut-based partition)---genuine algorithmic diversity, not stylistic re-wording.}
\end{mosaicbox}

\clearpage
\begin{mosaicbox}[title={\texttt{reasoning\_143}: Reasoning Prompt Axis}]{mosaicReasoning}
\textbf{Base theme.} Explain how Goodhart's Law manifests when a metric used for ranking becomes a target that agents optimize for.\par\smallskip
\textbf{Four prompt variants}---each requests a different rhetorical mode (mechanistic, advisory, illustrative, formal).
\begin{description}[leftmargin=1.7em,style=nextline,itemsep=1pt,topsep=1pt]
\item[\texttt{P1}] Explain Goodhart's Law in the context of a metric used to rank performers: how does the act of optimizing the metric degrade its informativeness?
\item[\texttt{P2}] A manager proposes ranking employees by a single quantitative KPI. Walk them through how Goodhart's Law will likely play out and what they should expect.
\item[\texttt{P3}] Using a concrete example such as standardized test scores or web search ranking, illustrate how Goodhart's Law transforms a useful measure into a target that loses correlation with the underlying goal.
\item[\texttt{P4}] Formalize Goodhart's Law: under what assumptions does optimizing a proxy metric guarantee divergence from the latent goal, and when can it be safe?
\end{description}
\textbf{One response per variant}.\par\smallskip
\textit{Response to \texttt{P1}}---mechanistic.\quad ``Goodhart's Law states that when a measure becomes a target, it ceases to be a good measure. Mechanistically, a metric is informative because it correlates with a latent quality of interest under the distribution of behaviors observed when the metric was not being optimized. Once agents are ranked by the metric, they shift their behavior toward the cheapest interventions that move the metric, which often differ from the interventions that move the underlying quality. The correlation that justified the metric was a property of the unoptimized regime, not a causal law, so optimization breaks it. Examples include teaching to the test, citation gaming, click-through manipulation, and surgeons declining risky cases to protect mortality rates. The deeper point is that metrics are compressions of multidimensional quality, and optimization exploits the compression, finding behaviors that score well without being good. Mitigations include using multiple metrics, randomizing audit dimensions, and treating metrics as diagnostics rather than incentives.''\par\smallskip
\textit{Response to \texttt{P2}}---advisory.\quad ``A single-KPI ranking is a textbook setup for Goodhart's Law. Employees will rapidly identify the cheapest behaviors that move the KPI and reallocate effort toward them, away from valuable activities the KPI does not capture. Initial gains in the headline number are often real but shallow, after which the metric continues rising while the underlying goal stagnates or declines. Side effects include neglected teamwork, hostility toward colleagues whose work indirectly helps the KPI, and creative reinterpretation of definitions. Over time the manager faces a choice: tighten definitions, in which case employees find the next loophole; or broaden the scorecard, in which case the simplicity that made the KPI attractive is lost. The honest expectation is that any high-stakes single metric will be partially gamed, that the gaming will accelerate as employees share tactics, and that the metric's signal-to-noise will degrade roughly in proportion to the stakes attached.''\par\smallskip
\textit{Response to \texttt{P3}}---illustrative.\quad ``Standardized testing is the canonical illustration. Test scores were initially designed to estimate broad academic ability and predict later achievement. Once high-stakes ranking attached to scores, schools reallocated curriculum toward tested skills, dropped untested subjects, drilled answer-elimination strategies, and in some cases manipulated who took the test. Scores rose nationally while independent measures of literacy, numeracy, and college readiness moved less or not at all. The metric was still a number on a page but it no longer measured what it had measured. The example shows the four-step pattern: a useful proxy, attachment of stakes, behavioral substitution toward the metric, and decoupling from the underlying construct. The remedy explored in education---audit testing, multiple measures, value-added models---each opens new gaming surfaces, illustrating that the law is not a single failure but a moving target.''\par\smallskip
\textit{Response to \texttt{P4}}---formal.\quad ``Let $G$ be the latent goal and $M$ the proxy metric. Under the unoptimized distribution $P_0$ of behaviors, $M = f(G) + \varepsilon$, where $f$ is monotone and $\varepsilon$ is noise correlated with $G$ in a way that depends on $P_0$. Optimization induces a new distribution $P^\star$ concentrated on behaviors that maximize $M$. If $P^\star$ lies within the support of $P_0$ and $f$ is monotone there, optimizing $M$ still optimizes $G$. The divergence guarantee arises when $P^\star$ exits the support of $P_0$, where $f$'s monotonicity is no longer guaranteed and $\varepsilon$ may be systematically large. Formally, divergence is guaranteed under bounded support of $f$'s faithful regime, sufficient optimization pressure, and the existence of high-$M$ behaviors outside the faithful regime. Safety obtains when optimization pressure is bounded below the threshold that pushes $P^\star$ off-distribution, or when $M$ is causally rather than correlationally tied to $G$ across all behaviors agents can reach.''\par\smallskip
\textit{The four responses inhabit four different registers (mechanistic explanation, managerial advice, concrete illustration, formal definition) while remaining mutually consistent on the underlying claim.}
\end{mosaicbox}

\clearpage
\begin{mosaicbox}[title={\texttt{reasoning\_204}: Reasoning Response Axis}]{mosaicReasoning}
\textbf{Base theme.} Estimate the revenue generated by all vending machines in Japan in one year.\par\smallskip
\textbf{Prompt variant \texttt{P1}.} \textit{``Estimate the annual revenue from all vending machines in Japan, explaining your assumptions.''}\par\smallskip
\textbf{Four sample responses for \texttt{P1}}---four genuinely different Fermi-estimation routes.\par\smallskip
\textit{R1}---per-machine method.\quad ``Japan has roughly $4$~million vending machines serving $125$~million people, about $1$ per $30$ people. Assume each machine sells on average $25$ items per day at an average price of $150$~yen. Daily revenue per machine $= 25 \times 150 = 3{,}750$~yen. Annual $= 3{,}750 \times 365 \approx 1.37$~M yen. Across $4$~M machines: $1.37\times10^6 \times 4\times10^6 = 5.5\times10^{12}$~yen, $\approx 5.5$~trillion yen, or roughly \$$40$~billion. Sanity check: that is about $0.9\%$ of Japan's GDP, plausible for such a ubiquitous channel.''\par\smallskip
\textit{R2}---per-capita method.\quad ``Suppose the average resident spends $30$~yen on vending purchases per day. With $125$~million people, daily national vending spend $= 3.75\times10^9$~yen. Annual $= 3.75\times10^9 \times 365 \approx 1.4\times10^{12}$~yen, $\approx 1.4$~trillion yen. Tourists and businesses raise this further, perhaps to $2$--$3$~trillion yen.''\par\smallskip
\textit{R3}---drink-volume method.\quad ``The Japanese soft-drink market sells $\approx 20$~billion liters per year, of which $\approx 30\%$ goes through vending channels: $6\times10^9$~liters. Average vended container is $0.4$~L, so $\approx 1.5\times10^{10}$ units. At $150$~yen each, drink revenue is $2.3\times10^{12}$~yen. Add hot drinks, snacks, cigarettes, and specialty items at $\approx 50\%$ more, giving $\approx 3.4\times10^{12}$~yen, or $3.4$~trillion yen.''\par\smallskip
\textit{R4}---capital-investment method.\quad ``A vending machine costs $\approx 400{,}000$~yen and is replaced every $\approx 8$ years. Operators target a $3$-year payback at $\approx 20\%$ net margin. Annual gross revenue per machine must be $\approx 400{,}000 / (3 \times 0.2) \approx 670{,}000$~yen. Across $4$~M units, total revenue $\approx 2.7$~trillion yen. Combining the four methods, my best estimate for Japan's annual vending revenue is $3$--$5$~trillion yen, roughly \$$25$--$40$~billion.''\par\smallskip
\textit{Four independent estimation routes (per-machine throughput, per-capita spending, channel volume, capital-payback) converge on $\approx 3$--$5$~trillion yen---demonstrating that response-sample diversity captures distinct epistemic strategies, not merely paraphrasings of a single calculation.}
\end{mosaicbox}

\clearpage
\begin{mosaicbox}[title={\texttt{creative\_174}: Creative Prompt Axis}]{mosaicCreative}
\textbf{Base theme.} Write a product narrative for a handmade ceramic mug brand that emphasizes imperfection as a feature.\par\smallskip
\textbf{Four prompt variants.}
\begin{description}[leftmargin=1.7em,style=nextline,itemsep=1pt,topsep=1pt]
\item[\texttt{P1}] Write a product narrative for a handmade ceramic mug brand that emphasizes imperfection as a feature.
\item[\texttt{P2}] Draft brand storytelling copy for a small ceramics studio whose handmade mugs celebrate visible imperfections as their defining quality.
\item[\texttt{P3}] Compose marketing copy for a line of artisan ceramic mugs that frames every wobble, drip, and uneven glaze as an intentional virtue rather than a defect.
\item[\texttt{P4}] Create a product narrative for a handmade mug company in which imperfection is positioned as the central selling point and emotional hook for buyers.
\end{description}
\textbf{One response per variant} (full text; brand names invented per response).\par\smallskip
\textit{Response to \texttt{P1}}---\textsc{Kiln \& Clay}.\quad ``At Kiln \& Clay, no two mugs leave the studio alike. A thumbprint pressed into wet stoneware. A glaze that pools darker on one side because the kiln runs hot near the back wall. A handle whose curve answered the maker's hand, not a machine's spec sheet. We do not sand these things away. We sign them. Every mug carries the small evidence that a person stood at a wheel, breathed, hesitated, decided. That is what you are paying for. Not a cylinder. A record. When you wrap your hands around a Kiln \& Clay mug on a Tuesday morning, the slight asymmetry meets your grip the way a good handshake does, with a little give. Perfection is a factory's apology for having no story. Imperfection is ours, told quietly, in clay.''\par\smallskip
\textit{Response to \texttt{P2}}---\textsc{Greywell Pottery}.\quad ``Greywell Pottery does not sell mugs. We sell evidence. Each piece carries the throwing rings of the wheel, the brush stroke of a cobalt slip, and the orange-peel shimmer left by salt vapor in the kiln. We could grind these traces away with sandpaper and a smoother glaze. We choose not to. They are the proof of the morning a person spent making the object you now hold every day before speaking. Customers learn to read our mugs the way a sommelier reads a label. Heavier salt deposits on the windward side. Faint crawl lines where two glazes met. A handle attachment that shows the seam. The marks are the meaning.''\par\smallskip
\textit{Response to \texttt{P3}}---\textsc{Driftwood Clay}.\quad ``Picture the mug already on your desk: smooth, symmetrical, identical to ten thousand others, and entirely forgettable. Now picture the alternative. A Driftwood Clay mug arrives with a slight lean to the handle, a crawl in the glaze near the rim, and a small unglazed patch where the maker held it during dipping. The first time you use it, the lean meets your grip in an unfamiliar way. The second time, your hand has remembered. By the second week, every other mug in your cupboard has become a stranger. We do not engineer this attachment. We simply leave the human evidence in place, where machine production scrubs it out, and let daily use complete the relationship.''\par\smallskip
\textit{Response to \texttt{P4}}---\textsc{Bramble \& Stone}.\quad ``Open any cupboard in any kitchen and you will find a graveyard of forgettable mugs: corporate giveaways, hotel souvenirs, sets that came eight to a box. None of them did anything wrong. That is the problem. Bramble \& Stone makes the mug you remove from the dishwasher and put back in front, every time. Hand-thrown in stoneware, fired in a soda kiln that paints the surface unpredictably, and finished with the throwing marks intact, our mugs do exactly one thing the cheap stack on the second shelf cannot do: they hold your attention. Imperfection is how an object earns the front of the cupboard, and our mugs earn it every wash cycle.''\par\smallskip
\textit{Each response invents its own brand identity, sensory anchor, and rhetorical posture (sign-as-signature, sommelier-of-marks, attachment-by-asymmetry, dishwasher-survivor) while staying anchored to the shared theme.}
\end{mosaicbox}

\clearpage
\begin{mosaicbox}[title={\texttt{creative\_155}: Creative Response Axis}]{mosaicCreative}
\textbf{Base theme.} Write a flash fiction piece about a baker who puts secret encouraging messages inside fortune cookies.\par\smallskip
\textbf{Prompt variant \texttt{P1}.} \textit{``Write a flash fiction piece (under $250$ words) about a baker who slips secret, hand-written encouraging messages into fortune cookies before they leave the shop.''}\par\smallskip
\textbf{Four sample responses for \texttt{P1}}---four entirely different protagonists, settings, and emotional registers.\par\smallskip
\textit{R1}---\textsc{Mei}.\quad ``Mei rolled the dough thin as moth wings before dawn. Her bakery sat between a laundromat and a closed shoe repair, and most mornings only the streetlight kept her company. She wrote the slips by hand. Not fortunes, exactly. \textit{You are doing better than you think. The hard part is almost over. Someone is proud of you, even if they have not said so.} She folded each cookie around a message, sealing it with a pinch. A delivery driver bought a dozen for his mother's birthday and called the next week to ask if she had any more. A teacher came in red-eyed on a Tuesday and left smiling. Mei kept baking. She did not believe she had any special sight, only that everyone, on some morning, needed a sentence written for them by a stranger.''\par\smallskip
\textit{R2}---\textsc{Anh}.\quad ``The bakery was small and warm. Anh started writing the slips when her daughter went to college and the house grew too quiet to bear. She wrote them at the kitchen table, in blue ink. \textit{You are allowed to rest. The work you did today mattered, even if no one saw it. Be gentler with yourself than the world will be.} She folded each cookie carefully, the paper tucked deep so the message would not fall out. Customers came in for lunch and left with paper bags of warm cookies. Sometimes one would call the bakery to ask who wrote them. Anh would say she did not know. It was easier that way. The cookies traveled across the city in coat pockets and lunch bags, small folded prayers, and Anh kept writing, one slip at a time, until her hand cramped and the morning light came in.''\par\smallskip
\textit{R3}---\textsc{David}.\quad ``David inherited the shop from his uncle. The fortune slips, when he found them, came in plastic-wrapped bricks of platitudes. He threw them out. He bought a typewriter at a yard sale and began typing his own. \textit{The thing you have been putting off is smaller than the dread of it. You have survived every difficult day so far. The person who needs to forgive you is yourself.} He cut each slip with scissors and folded them in while the cookies were still warm. Business picked up. He never knew why, exactly. People came in and stood in front of the counter for a long time, looking at the cookies as if they could see through the shells. David rang them up and wished them a good day, and they left, and the bell on the door rang behind them like a small chime of hope.''\par\smallskip
\textit{R4}---\textsc{Yuki}.\quad ``Before the bakery opened, Yuki wrote messages on slips of rice paper at her kitchen counter. She wrote one at a time, slowly, as if each one were a letter to a friend. \textit{Today is not the day you fall apart. The grief will lift, not all at once, but it will lift. You are loved, more than you know.} The cookies were folded warm and stacked in glass jars. Children pressed their faces against the glass. Old men bought single cookies and sat on the bench outside, reading the slips with their reading glasses on. Yuki watched from behind the counter and did not say anything. She knew that some of the customers would never come back, and some would come back every week, and some would forget the words entirely, and that was all right. The cookies did their work and asked nothing in return.''\par\smallskip
\textit{All four protagonists are women or men with distinct biographies, settings, motivations, and affective tones, yet each response remains a faithful fulfillment of the same prompt---the kind of within-cell diversity that response-sample studies require.}
\end{mosaicbox}

\clearpage
\begin{mosaicbox}[title={\texttt{advisory\_076}: Advisory Prompt Axis}]{mosaicAdvisory}
\textbf{Base theme.} Deciding between streaming services: how to choose the right combination for your household.\par\smallskip
\textbf{Four prompt variants}---each requests a slightly different framework lens.
\begin{description}[leftmargin=1.7em,style=nextline,itemsep=1pt,topsep=1pt]
\item[\texttt{P1}] How should a household decide which combination of streaming services to subscribe to given the overwhelming number of options?
\item[\texttt{P2}] What practical framework helps families evaluate and select the right mix of streaming platforms without overspending?
\item[\texttt{P3}] How can consumers cut through streaming service marketing to find the combination that genuinely matches their viewing habits?
\item[\texttt{P4}] What unconventional strategies help households optimize their streaming subscriptions beyond simple content comparison?
\end{description}
\textbf{One response per variant} (full text).\par\smallskip
\textit{Response to \texttt{P1}}---\textsc{audit-first.}\quad ``Audit your current viewing habits before adding or keeping any service. Check watch history on each platform to see what you actually watched in the past three months. If a service provided fewer than four hours of content your household genuinely enjoyed monthly, it is a candidate for cancellation. Many households pay for five or six services while actively using only two or three, wasting thirty to sixty dollars monthly on inertia.''\par\smallskip
\textit{Response to \texttt{P2}}---\textsc{rotation strategy.}\quad ``Use a rotation strategy instead of maintaining all subscriptions simultaneously. Subscribe to one or two services at a time, binge the content you want over one to two months, then cancel and switch to another platform. Most services allow easy cancellation and resubscription with no penalty. This approach lets a household access every major platform's best content annually while paying for only two to three services at any given time.''\par\smallskip
\textit{Response to \texttt{P3}}---\textsc{cost per actual hour.}\quad ``Calculate cost per hour of actual entertainment. If you pay fifteen dollars monthly for a service and watch twenty hours, your cost is seventy-five cents per hour, excellent value. If another service costs ten dollars but you watch only two hours monthly, that is five dollars per hour and probably not worth keeping. This metric cuts through marketing and reveals which services genuinely earn their place in your budget through actual usage rather than theoretical access.''\par\smallskip
\textit{Response to \texttt{P4}}---\textsc{paradox of choice.}\quad ``Challenge the assumption that more services equal better entertainment. Subscription fatigue is real: with too many options, households spend more time scrolling through catalogs than watching content. Psychologically, three services provide an optimal selection window. Beyond that, the paradox of choice kicks in and satisfaction per platform drops. Constraining options can paradoxically increase enjoyment by reducing decision overhead.''\par\smallskip
\textit{Each response selects a substantively different decision framework---behavioral audit, time-shifting, unit economics, choice psychology---rather than rephrasing a single recommendation list.}
\end{mosaicbox}

\clearpage
\begin{mosaicbox}[title={\texttt{advisory\_010}: Advisory Response Axis}]{mosaicAdvisory}
\textbf{Base theme.} Strategies for gradually reducing screen time before bed to improve sleep onset.\par\smallskip
\textbf{Prompt variant \texttt{P1}.} \textit{``I spend hours on my phone and laptop right up until I try to sleep, and it takes me forever to fall asleep. How can I gradually cut back on screen time before bed?''}\par\smallskip
\textbf{Four sample responses for \texttt{P1}}---four orthogonal behavioral mechanisms.\par\smallskip
\textit{R1}---\textsc{phased curfew.}\quad ``Implement a phased screen curfew that becomes stricter over three weeks. Week one, stop all screens $15$ minutes before your target bedtime. Week two, extend the buffer to $30$ minutes. Week three, aim for $60$ minutes. Use a phone alarm to signal the start of your screen-free window. During the curfew, replace screen time with a low-stimulation activity: reading a physical book, light stretching, journaling, or preparing items for the next day. The gradual ramp prevents the shock of suddenly losing your primary evening activity and allows you to discover which offline alternatives you actually enjoy before the window gets large.''\par\smallskip
\textit{R2}---\textsc{content gradient.}\quad ``Shift the content you consume rather than eliminating screens all at once. In the first phase, switch from stimulating content---social media, news, video games, work email---to passive, low-arousal content like calm nature documentaries or ambient music playlists. In the second phase, move from backlit screens to e-ink devices like a Kindle, which emit no blue light and don't suppress melatonin. In the final phase, transition to fully offline activities. This gradient approach acknowledges that screen content varies enormously in its sleep-disrupting potential, and a calm e-reader is categorically different from doomscrolling Twitter at midnight.''\par\smallskip
\textit{R3}---\textsc{environmental design.}\quad ``Use environmental design to make screens less accessible in the evening. At a set time each night, physically relocate your phone to a charging station in another room. Switch your laptop to a designated shelf. Replace the phone on your nightstand with an analog alarm clock. Remove social media apps from your phone's home screen so accessing them requires deliberate effort. The friction you add to screen access matters enormously---research on choice architecture shows that even small barriers dramatically reduce default behaviors. You're not relying on willpower; you're making the undesired behavior structurally harder to perform.''\par\smallskip
\textit{R4}---\textsc{competing ritual.}\quad ``Create a competing bedtime ritual that is genuinely more appealing than screens. This requires experimentation. Try a warm bath or shower $90$ minutes before bed---the subsequent body temperature drop triggers drowsiness. Pair it with a specific herbal tea you only drink at night, creating a sensory anchor for wind-down mode. Add five minutes of guided breathing or a body scan meditation using a non-screen audio player. The ritual needs to feel like a reward, not a punishment. If your screen-free routine feels like deprivation, you'll revert within days. Design it around pleasure, comfort, and sensory richness so it competes with screens on their own terms.''\par\smallskip
\textit{The four responses target four different behavioral-change levers (time-budget tapering, content substitution, choice-architecture friction, reward-conditioning ritual)---a within-cell sample of substantive strategy diversity.}
\end{mosaicbox}

\end{document}